\documentclass[journal,10pt]{IEEEtran}

\usepackage{tabularx} 

\usepackage{cite}

\usepackage{subcaption} 
\usepackage[export]{adjustbox} 

\usepackage[utf8]{inputenc}
\usepackage{times}  
\usepackage{graphicx}

\usepackage[T1]{fontenc}
\usepackage{float}
\usepackage{amsmath}
\usepackage{amsthm}
\usepackage{amssymb}
\usepackage{stackrel}
\usepackage{setspace}
\usepackage{url}
\usepackage{bm}
\usepackage{caption}
\usepackage{xcolor}
\usepackage{cases} 
\usepackage{algorithm}
\usepackage{algpseudocode}
\usepackage{longtable} 
\usepackage{array}

\makeatletter

\floatstyle{ruled}
\newfloat{algorithm}{tbp}{loa}
\providecommand{\algorithmname}{Algorithm}
\floatname{algorithm}{\protect\algorithmname}

\theoremstyle{plain}

\theoremstyle{definition}

\theoremstyle{plain}

\theoremstyle{definition}

\theoremstyle{plain}

\newtheorem{remark}{Remark}

\IEEEoverridecommandlockouts
\usepackage{ifpdf}

\ifCLASSOPTIONcompsoc
\usepackage[caption=false,font=normalsize,labelfont=sf,textfont=sf]{subfig}
\else
\fi

\usepackage{mathtools}
\usepackage{booktabs}
\usepackage{multirow}
\usepackage{setspace}
\usepackage{lettrine}
\usepackage{array}
\@ifundefined{showcaptionsetup}{}{
	\PassOptionsToPackage{caption=false}{subfig}}
\usepackage{subfig}
\makeatother
\allowdisplaybreaks[4]
\usepackage{diagbox}
\usepackage{babel}
\usepackage{pifont}

\newtheorem{lemma}{Lemma}

\begin{document}
\captionsetup[figure]{font={small}, name={Fig.}, labelsep=period}
	
\title{
Joint Routing and Model Pruning for Decentralized Federated Learning in Bandwidth-Constrained Multi-Hop Wireless Networks}

\author{Xiaoyu He, Weicai Li, \textit{Graduate Student Member, IEEE}, Tiejun Lv, \textit{Senior Member, IEEE}, and  Xi Yu

 \thanks{Manuscript received 13 September 2025; revised 12 January 2026, and 26 February 2026; accepted 14 March 2026. This paper was supported in part by the National Natural Science Foundation of China under No. 62271068. (\emph{corresponding author: Tiejun Lv}.)}

\thanks{X. He, T. Lv, and X. Yu are with the School of Information and Communication Engineering, Beijing University of Posts and Telecommunications, Beijing 100876, China.  (e-mail: \{xiaoyuhhh, lvtiejun, yusy\}@bupt.edu.cn). 

W. Li is with the Center for Target Cognition Information Processing Science and Technology, and the Key Laboratory of Modern Measurement and Control Technology, Ministry of Education, both at Beijing Information Science and Technology University, Beijing, China (e-mail: liweicai@bupt.edu.cn).}}
\maketitle
\begin{abstract}
Decentralized federated learning (D-FL) enables privacy-preserving training without a central server, but multi-hop model exchanges and aggregation are often bottlenecked by communication resource constraints.
To address this issue, we propose a joint routing-and-pruning framework that optimizes routing paths and pruning rates to keep communication latency within prescribed limits.
We analyze how the sum of model biases across all clients affects the convergence bound of D-FL, and formulate an optimization that maximizes the model retention rate to reduce these biases under communication constraints. Further analysis shows that each client’s model retention rate is path-dependent, which reduces the original problem to a routing optimization. Leveraging this insight, we develop a routing algorithm that selects latency-efficient transmission paths, allowing more parameters to be delivered within the time budget and thereby improving D-FL convergence.
Simulations show that, compared with unpruned systems, the proposed framework reduces average transmission latency by 27.8\% and improves testing accuracy by about 12\%; relative to standard benchmark routing algorithms, the proposed routing method further improves accuracy by roughly 8\%.
\end{abstract}

\begin{IEEEkeywords}
   Decentralized federated learning, routing, model pruning, communication latency, communication efficiency.
\end{IEEEkeywords}

\section{Introduction}
Decentralized federated learning (D-FL) removes the dependency on a central parameter server and adopts a point-to-point architecture that distributes computation and communication loads more evenly across nodes, thereby improving system scalability, reducing communication overhead, and avoiding single points of failure~\cite{11130884,10431578}.

\subsubsection{Motivation}
In D-FL, clients exchange model updates using gossip protocols~\cite{9563232,10061474,10970239}, communicating only with neighboring nodes. Although fully decentralized, this restricts training to local interactions, resulting in slower convergence and potential model inconsistencies. To overcome these limitations, some works employ multi-hop routing, forwarding updates through relay nodes~\cite{10965802,10391064}, which increases information coverage and enables more effective model aggregation than one-hop approaches~\cite{10766357,9912186}. Additionally, multi-hop routing allows efficient aggregation under constrained communication resources without requiring multiple iterations, surpassing the performance of edge aggregation methods~\cite{10456324}.

However, D-FL is still constrained by the limited computation and communication capabilities of edge devices~\cite{10518048,10870116,9849007}. Sending full models across multiple hops incurs high latency and energy costs. Centralized FL has studied compression methods such as quantization~\cite{10001205,9915794}, sparsification~\cite{10530219}, and pruning~\cite{10643325}, with pruning being especially effective by removing less important parameters without extra overhead, reducing communication cost and improving flexibility. Optimizing client routing and pruning in D-FL is therefore crucial to enhance efficiency, reduce latency, and maintain robustness in resource-limited networks.

\subsubsection{Challenges}

Cross-routing communication can improve model coverage in D-FL, but it also brings challenges in heterogeneous networks characterized by large bandwidth disparities and strict latency requirements. Fixed transmission schemes, such as applying a uniform pruning ratio to all clients~\cite{9713700}, often result in suboptimal resource utilization. Bandwidth-limited clients may be unable to finish transmissions on time, leading to synchronization delays and slower convergence. This issue becomes more critical in large-scale D-FL systems, where communication heterogeneity has a pronounced effect on overall performance.


Popular FL algorithms, such as FedAVG~\cite{2016Communication}, QAFeL~\cite{10705319}, and FedVKD~\cite{11063265}, typically assume that all clients have identical transmission capabilities. However, in practical scenarios, clients often exhibit significant heterogeneity in computation and communication capabilities. Some resource-constrained clients may experience substantial delays or even fail to participate in training, as they are unable to transmit complete model updates. The on-demand routing protocol proposed in \cite{8585453} offers new insights into reliable data transmission in highly heterogeneous networks. Accordingly, adaptively adjusting model pruning ratios and optimizing routing strategies based on clients’ transmission capacities and network conditions has become a crucial research direction in D-FL.

While adaptive pruning techniques have been explored in centralized FL to enhance communication efficiency~\cite{9762360}, their application in D-FL with multi-hop routing remains under-investigated. Some approaches consider adaptive pruning under gossip-based point-to-point communication~\cite{11027589,10683955}, but fail to account for the efficiency of multi-hop transmissions. Consequently, clients may employ overly aggressive pruning to satisfy latency requirements, which can result in substantial loss of critical model information and degrade the accuracy and convergence of the global model. 

\subsubsection{Contributions}
This paper tackles latency and bandwidth constraints in multi-hop D-FL by proposing a joint routing and model pruning framework. We analyze the effects of pruning and routing on communication latency and establish convergence guarantees. Then, routing paths and model retention rates are jointly optimized under latency and bandwidth constraints to improve convergence performance.

The main contributions of this work are summarized as follows:
\begin{itemize}
    \item To cope with bandwidth and latency constraints, we propose a novel joint routing and model pruning framework. This framework enables each client to determine its model retention rate by optimizing multi-hop transmission paths based on the maximum tolerable latency, model size, and wireless link transmission rates. To the best of our knowledge, this is the first study in D-FL that simultaneously considers multi-hop routing and model pruning.
    
    \item We formulate the number of model parameters based on the model retention rate and further derive the communication latency model of D-FL under a given retention rate and routing path. In addition, we provide an upper bound on the $\ell_2$-norm of client model deviation under joint routing and pruning. Based on this, we construct an optimization problem to maximize the model retention rate and improve convergence speed under latency and bandwidth constraints.
    
    \item By analyzing the dependency of model retention rate on the transmission path, we show that the original optimization problem can be equivalently transformed into a routing optimization problem. Accordingly, we propose a new routing algorithm to improve communication efficiency and maximize model retention under given latency and bandwidth constraints.

    \item   The proposed routing algorithm combines node priorities and client-aware link weights to effectively improve multi-hop model transmission efficiency, reduce client model transmission time, and enable the transmission of more model parameters within the given time.
\end{itemize}

\textbf{Extensive simulation results demonstrate that, compared with an unpruned system, the proposed optimal pruning strategy \textbf{reduces the average model transmission latency of clients by approximately 13\%} while \textbf{improving model testing accuracy by about 12\%}. Moreover, the optimized routing algorithm further \textbf{enhances testing accuracy by around 8\%} compared with conventional routing methods.}

The rest of this paper is organized as follows. Section II reviews the related works, followed by the system model in Section III. The convergence analysis and problem formulation are presented in Section IV. The new routing algorithm developed to maximize the model retention rate and subsequently optimize routing and pruning jointly is described in Section V, followed by the simulations and conclusions in Sections VI and VII, respectively. Notation and definitions are summerized in Table \ref{notations}

\begin{table}[t]
\renewcommand{\arraystretch}{1.0}
\small
\centering
\caption{Notation and definitions}
\label{notations}
\begin{tabular}{p{2.3cm}|p{13cm}}
\hline
\textbf{Notation} & \textbf{Definition}\\
\hline
$m, n, N$ & Client indices and total number of clients \\
$\mathcal{V}, \mathcal{E}$ & Sets of clients and edges \\

$\mathcal{N}_n$ & Neighboring nodes of client \(n\) \\
$\alpha$ & Index of training rounds \\

$p_n$ & Ideal aggregation coefficient of client $n$ \\
$p_{m,n}$ & Aggregation coefficient of client $m$ at client $n$ \\

$F_n, F_n^*$ & Local loss and its minimum for client $n$ \\
$F, F^*$ & Global loss and its minimum \\

$\boldsymbol{\omega}_{\alpha,n}$ & Updated local model of client $n$ at round $\alpha$ \\
$\boldsymbol{\hat{\omega}}_{\alpha,n}$ & Locally aggregated model at client $n$ \\

$\boldsymbol{\bar{\omega}}_{\alpha}$ & Ideal global model \\
$\mathcal{T}_m$ & Spanning tree rooted at client \(m\) \\

$\mathcal{I}_m$ & Ordered nodes in client \(m\)'s transmission path \\
$\mathcal{N}_{\mathcal{T}_m}(i)$ & Neighbors inside one-hop range of node $i$ \\

$t_{m,i}^\alpha$ & One-hop latency constraint at node \(i\) \\
$t_{m}^\alpha$ & Total latency constraint for client \(m\) \\

$t_{\max}$ & Max latency threshold \\
$\tilde{v}_{m,i}$ & Slowest link transmission rate at node \(i\) \\

$v_{(i,j)}, \chi_{(i,j)}$ & Transmission rate and weight of edge \((i,j)\) \\

$e_{\alpha,(m,n),k}$ & Success indicator of $k$-th model element from $m$ to $n$ \\

$\eta_m$ & Channel retention rate of client \(m\) \\
$r_m$ & Model retention rate, $r_m = \eta_m^2$ \\

$k, K$ & Model element index and size \\
$U_m$ & Model size for client \(m\) \\

$\zeta$ & Neural network layer index \\
$\delta_\zeta$ & Number of channels in layer \(\zeta\) \\

$l$ & Channel index \\
$\mathbf{W}_{\zeta, \alpha, m}$ & Model parameters of layer \(\zeta\) for client \(m\) \\

$Q_i$ & Priority of node $i$ \\
$\theta$ & Weight difference between new and old links \\

$h_{\alpha,(m,n)}$ & Channel gain between clients $m$ and $n$ \\

\hline
\end{tabular}
\end{table}

\section{Related Work}

As D-FL becomes increasingly used in multi-client, serverless settings, there is growing interest in enhancing its communication and computational efficiency, particularly in heterogeneous networks where bottlenecks and latency are critical. Strategies such as optimized model transmission and aggregation, multi-hop routing, and model pruning have emerged as essential tools for improving FL efficiency, boosting transmission performance, and enhancing overall system operation.

\subsection{Model Transmission and Aggregation}

In D-FL, clients exchange model updates through gossip protocols, enabling random point-to-point communication. The work in \cite{9996127} improved network utilization by favoring high-bandwidth peers to ensure convergence. However, due to the randomness of gossip propagation, multiple iterations are often needed for updates to reach all nodes, lowering efficiency and increasing errors for sparsely connected clients.

Recent studies have introduced routing mechanisms to enhance coverage and reduce errors using multi-hop aggregation~\cite{10454716,10965802}. For example,  the authors of \cite{9705093} developed device-to-device aggregation. Nevertheless, existing methods still lack full optimization of multi-hop routing.
Dynamic routing and model forwarding were proposed in \cite{10454716} to reduce transmissions and improve performance. Similarly, the R\&A D-FL approach~\cite{10965802} allows efficient model exchange via predefined paths. Yet, these works do not fully analyze the joint impact of routing and pruning, nor provide tailored routing optimization for resource-limited wireless networks.

\subsection{Multi-hop Routing Optimization}

Multi-hop routing optimization has evolved along four main directions: classical graph algorithms, intelligent heuristic methods, and reinforcement learning~\cite{Kheradmand2022ClusterBasedRS,article,HOSSEINZADEH2025101657,2024An}. Classical algorithms, such as Kruskal~\cite{10726020} and Bellman-Ford~\cite{10760521}, perform well in point-to-point routing but are unsuitable for the broadcast nature of federated learning. Flooding achieves full coverage but incurs high latency~\cite{9734208}, while heuristic methods (e.g., ant colony~\cite{HOSSEINZADEH2025101657} and genetic algorithms~\cite{10816321}) provide adaptability but often ignore link parallelism and bottlenecks caused by the slowest link.

Reinforcement learning approaches exhibit strong robustness under interference and competition~\cite{2024An}, but their focus on single-path latency limits their effectiveness in broadcast-based multi-hop scenarios. Therefore, a new routing strategy is needed that simultaneously considers link parallelism and bottleneck delays, while maintaining system adaptability and reducing computational complexity, enabling efficient operation in high-density, large-scale urban networks~\cite{Alotaibi2025OptimizingDR}.

\subsection{Model Pruning}
Recently, various model compression techniques have been proposed to reduce communication overhead and accelerate FL training, including quantization~\cite{10001205}, sparsification~\cite{10530219}, and model pruning. For example, AdaQuantFL~\cite{DBLP:journals/corr/abs-2102-04487} reduces parameter precision from 32-bit floating-point to 8-bit integers to cut transmission size. However, its fixed uniform quantization lacks flexibility and may degrade important parameters’ precision, limiting communication efficiency. Another method uses sparsification by zeroing weights to reduce communication, but requires extra transmission of sparse indices and struggles with device heterogeneity~\cite{10535199}.

Model pruning directly improves communication efficiency by reducing the number of transmitted parameters, effectively addressing the core challenge in FL~\cite{6020937}. Pruning can be unstructured or structured. Unstructured pruning removes individual weights based on importance, but leads to irregular sparse patterns that lower compression efficiency and require specialized hardware~\cite{10350508}. Structured pruning removes filters and channels in convolutional layers, preserving the original structure for acceleration without special hardware~\cite{9842506}. The authors of \cite{DBLP:journals/corr/abs-2010-01264} highlighted that pruning input and output channels in heterogeneous client environments significantly reduces model size while maintaining consistency between local and global models, thus improving aggregation stability.

\subsection{Efficient Federated Learning}

Model pruning is widely used in FL to reduce communication and speed up training. For example, FedGroup-Prune reduces parameters in fully connected layers with a fixed pruning rate, which harms the performance of high-ability clients in ~\cite{10677496}. A few works explore adaptive pruning, such as adjusting channel widths based on client capabilities~\cite{DBLP:journals/corr/abs-2010-01264}. 
However, most methods are designed for centralized FL and are not suitable for D-FL. Pruning in D-FL is still underexplored. Although recent studies such as AdapCom-DFL~\cite{10304208} and DF2-MPC~\cite{10683955} have introduced personalized model pruning for individual clients in D-FL to enhance computational and communication efficiency, their underlying communication mechanisms remain fundamentally based on gossip-style peer-to-peer exchanges between directly connected neighbors. This makes them unsuitable for real-world network environments with complex multi-hop routing topologies and heterogeneous bandwidth and latency conditions across routes.

Compared with existing reinforcement learning-based routing optimization~\cite{10976414} and the ASDQR client selection scheme~\cite{10.1145/3716870}, our framework supports real-time adaptive routing paths and model sizes, demonstrating lower latency and stronger convergence performance in large-scale multi-hop heterogeneous scenarios.
Table \ref{tab:comparison} compares the proposed approach with existing related studies.

\begin{table}[t]
\centering
\scriptsize 
\captionsetup{font=scriptsize}
\caption{Comparison with Related Works in Decentralized Federated Learning}
\label{tab:comparison}
\begin{tabular}{|p{1.2cm}|p{1.2cm}|p{0.8cm}|p{1cm}|p{0.8cm}|p{1.5cm}|}
\hline
\textbf{Reference} 
& \textbf{Convergence} 
& \textbf{Multi-hop} 
& \textbf{Comm. Constraints} 
& \textbf{Model Pruning} 
& \textbf{Joint Pruning \& Routing} \\
\hline
\cite{10965802} & $\checkmark$ & $\checkmark$ & $\times$ & $\checkmark$ & $\times$ \\
\hline
\cite{10304208} & $\checkmark$ & $\times$ & $\checkmark$ & $\checkmark$ & $\times$ \\
\hline
\cite{10683955} & $\times$ & $\times$ & $\checkmark$ & $\checkmark$ & $\times$ \\
\hline
\cite{10535199} & $\checkmark$ & $\times$ & $\checkmark$ & $\times$ & $\times$ \\
\hline
\textbf{This paper} & \textbf{$\checkmark$} & \textbf{$\checkmark$} & \textbf{$\checkmark$} & \textbf{$\checkmark$} & \textbf{$\checkmark$} \\
\hline
\end{tabular}
\end{table}

\begin{figure}[t]
\centering
\includegraphics[width=0.8\linewidth]{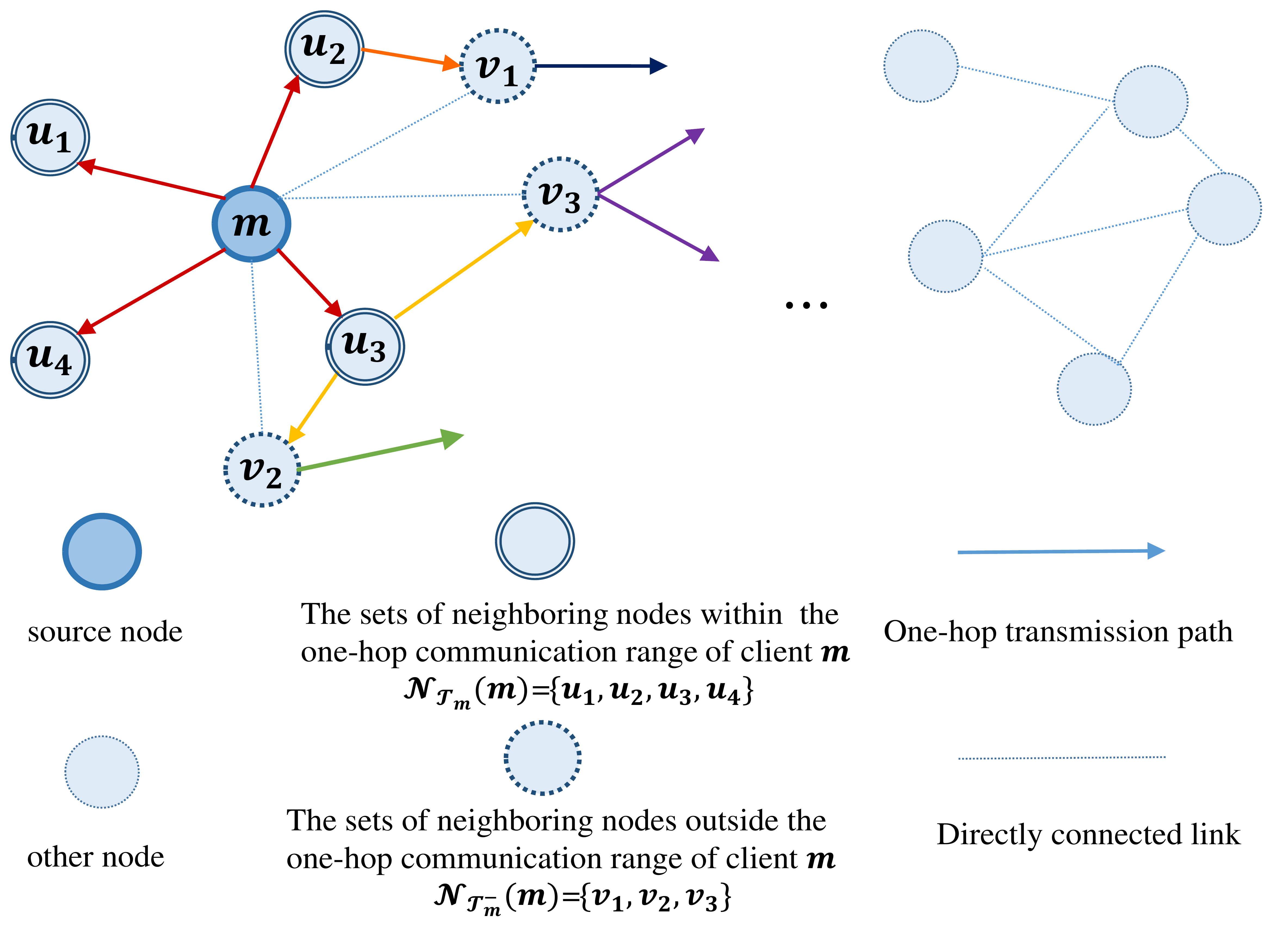}
\caption{An illustration of the multi-hop route for transmitting the local model from the source client to all target clients in the $\alpha$-th round.}\label{system model}
\end{figure}
\section{System model}\label{sec:system_model}

We consider a D-FL system with $N$ edge clients and no central point. The topology of the system is characterized by an undirected graph $\mathcal{G}(\mathcal{V}, \mathcal{E})$, where $\mathcal{V}=\left \{ 1,\dots, N\right \}$ is the index set of all clients and $\mathcal{E}$ is the set of edges between these clients. Let $\mathcal{N}_{n}\subseteq  \mathcal{V}$ denote the neighbor set of client $n\in\mathcal{V}$, collecting the clients within its one-hop communication range.

The communication links between neighboring clients are bandwidth-limited and subject to delay constraints, which are critical factors in many time-sensitive machine learning tasks~\cite{10302355,10835163}.
In the proposed D-FL framework, clients transmit their local models to all other clients via multi-hop routing, as shown in Fig. \ref{system model}. When the multi-hop routing causes model transmission latency to exceed the deadline, potentially affecting timely delivery, model pruning is employed to ensure that local models are delivered on time. The pruning strategy, optimized based on routing, effectively enhances the convergence performance of the D-FL system.
An overview of the proposed framework is illustrated in Fig.~\ref{fig:framework}.

\begin{figure}[t]
\centering
\includegraphics[width=0.99\linewidth]{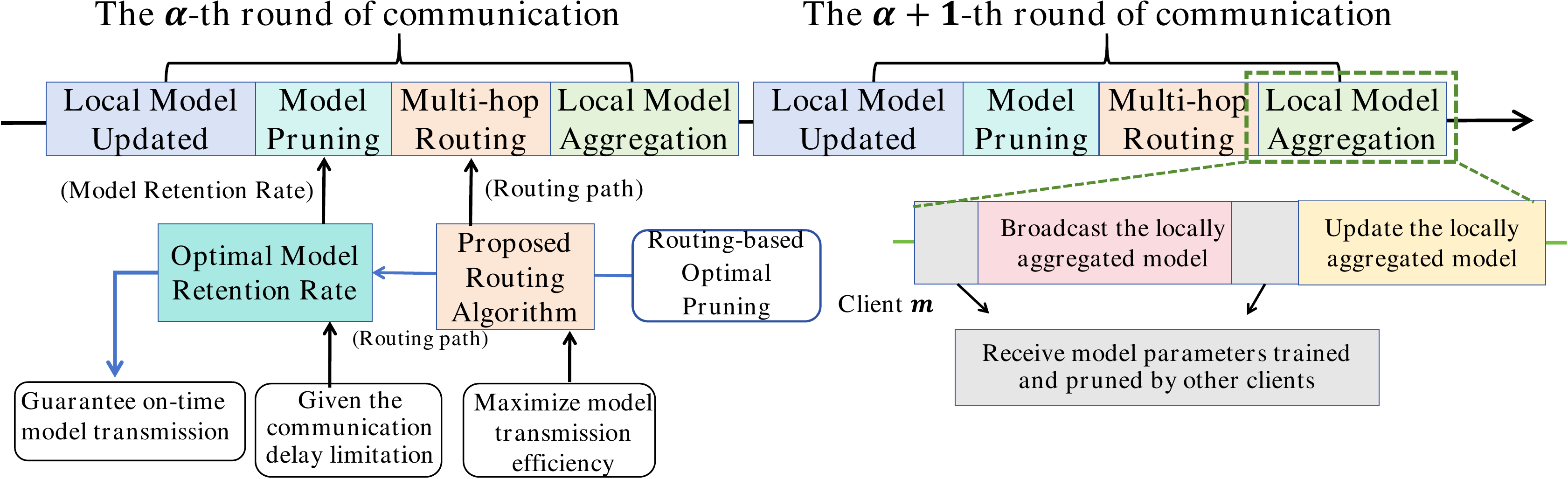}
\caption{Workflow of the D-FL rounds with local model update, pruning, multi-hop routing, and aggregation.}\label{fig:framework}
\end{figure}
\subsection{D-FL model}
The goal of D-FL is to minimize the global loss function $\underset{\boldsymbol{\omega}\in \mathbb{R}^{K}}{ \min}\, F  ( \boldsymbol{\omega}  ) $ and find the optimal model parameter $ \boldsymbol{\omega}^{*} =\underset{ \boldsymbol{\omega}\in \mathbb{R}^{K}}{\arg\min}\, F(\boldsymbol{\omega})$, where $F  ( \boldsymbol{\omega} )$ is the weighted sum of the local loss functions of all clients, as given by
\begin{equation}
    F(\boldsymbol{\omega})={\sum}_{\forall n \in \mathcal{V} }p_{n}F_{n}(\boldsymbol{\omega}),
\end{equation}
\noindent where $ p_{n} =\frac{D_{n}}{{\sum}_{\forall n\in \mathcal{V} } D_{n}}$ is the ideal aggregation weight coefficient under the classical FL frameworks, e.g., FedAvg.

Each client $n \in \mathcal{V}$ participates in model training and has a local dataset $\mathcal{D}_{n}$ with $ D_n=\left |\mathcal{D}_{n} \right|$ data points. 
The local loss function of client $n\in \mathcal{V}$ is  $    F_{n}(\boldsymbol{\omega})=\frac{1}{D_{n}}  {\textstyle \sum_{d\in \mathcal{D}_{n}}} f(d,\boldsymbol{\omega}),$
where $\boldsymbol{\omega}\in \mathbb{R}^{K}$ denotes the $K$-dimensional model parameters, $d\in \mathcal{D}_{n}$ is a data sample, and $f(d,\boldsymbol{\omega}) $ is the loss function of the data sample $d$ with respect to the model parameters $\boldsymbol{\omega}$.

Several training rounds are needed to find the optimal model parameter $ \boldsymbol{\omega}^{*}$.
In the $\alpha $-th training round, each client $n\in\mathcal{V}$ locally updates model parameters using stochastic gradient descent (SGD) with a mini-batch $\xi \subseteq \mathcal{D}_n$ randomly selected from its local dataset $\mathcal{D}_{n}$ and the locally aggregated model of the $(\alpha -1)$-th round, i.e., $\hat{\boldsymbol{\omega}}_{\alpha -1,n}$. The updated local model is 
\begin{align}\label{eq:epoch_imperfect}
\boldsymbol{\omega}_{\alpha ,n}=\hat{\boldsymbol{\omega}}_{\alpha \!-\!1,n}-\eta\nabla F_{n}(\hat{\boldsymbol{\omega}}_{\alpha \!-\!1,n},\xi), 
\end{align}
where $\nabla F_n({\mathbf{\cdot}},\xi)=\frac{1}{|\xi|}  \sum_{d\in \xi} \nabla f(d,{\mathbf{\cdot}})$ is the stochastic gradient, and $\eta$ is the learning rate.

\subsection{Model Pruning}\label{section:model pruning}

After each client \( m \in \mathcal{V} \) completes the \( \alpha \)-th training round, each client cannot transmit the complete model to other clients within the given time constraints. To address this issue, we adopt structured pruning by adjusting the number of input and output channels to effectively reduce model parameters, thereby ensuring timely model transmission.

\begin{figure}[t]
\centering
\includegraphics[width=0.8\linewidth]{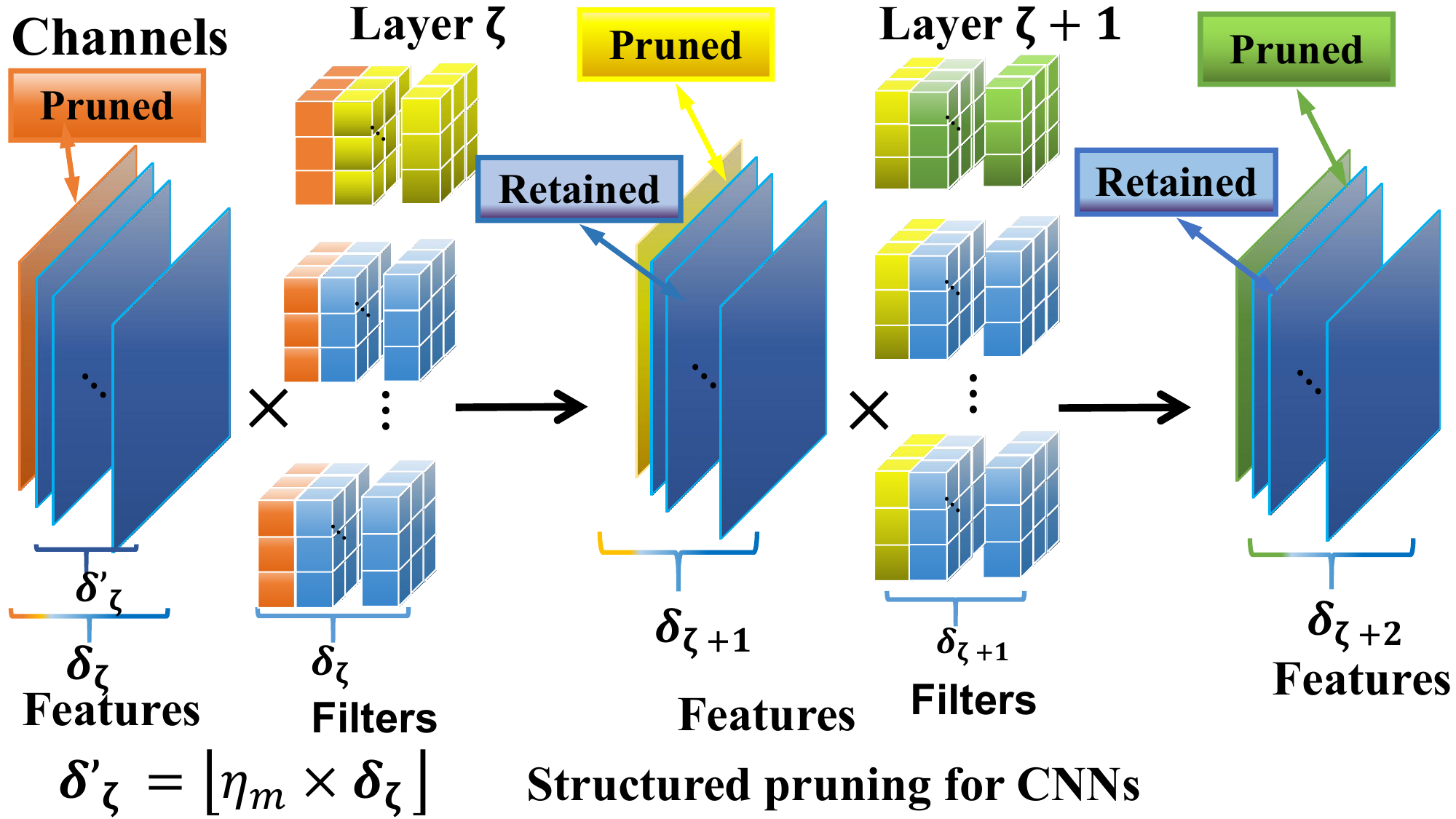}
\caption{
Schematic of structured pruning. This figure shows the structured pruning process applied to input and output channels in a hidden layer. Blue indicates retained channels, while gray and yellow represent pruned input and output channels, respectively. Each small square denotes a model parameter.
}

\label{fig:CNN_SEGMNETATION}
\end{figure}

Fig.~\ref{fig:CNN_SEGMNETATION} illustrates the process of applying structured pruning in a convolutional neural network (CNN). For the $\zeta$-th layer, the weight matrix is denoted as $\mathbf{W}_{\zeta,\alpha,m} \in \mathbb{R}^{\delta_\zeta \times \delta_{\zeta+1}}$, where $\delta_\zeta$ and $\delta_{\zeta+1}$ represent the number of input and output channels, respectively.
Let $\eta_m$ denote the channel retention ratio for client $m$, i.e., the proportion of channels retained in each layer. According to the predefined $\eta_m$, client $m$ retains the first $\lfloor \eta_m \delta_\zeta \rfloor$ input channels and $\lfloor \eta_m \delta_{\zeta+1} \rfloor$ output channels. Based on the above rules, we construct a binary pruning mask to indicate which channels are retained (denoted by 1) and which are pruned (denoted by 0), as given by 

\begin{subequations}

\label{eqn:mask_total} 
\noindent
\begin{minipage}{0.48\textwidth}
\begin{equation}
    g^{\text{input}}_{\zeta, l} = 
    \begin{cases} 
    1, & l \leq \lfloor \eta_m \delta_\zeta \rfloor \\ 
    0, & \text{otherwise}
    \end{cases}
    \label{eqn:mask_in} 
\end{equation}
\end{minipage}
\hfill
\begin{minipage}{0.48\textwidth}
\begin{equation}
    g^{\text{output}}_{\zeta+1, l} = 
    \begin{cases} 
    1, & l \leq \lfloor \eta_m \delta_{\zeta+1} \rfloor \\ 
    0, & \text{otherwise}
    \end{cases}
    \label{eqn:mask_out} 
\end{equation}
\end{minipage}
\end{subequations}

\noindent where $l$ denotes the channel index. This mask guides the model to perform computations only on the retained channels during both the forward and backward passes, thereby enabling efficient channel pruning and reducing computational overhead.

After pruning, the actual number of channels in the layer becomes $\delta_\zeta^m = \sum_{l=1}^{\delta_\zeta}  g^{\text{input}}_{\zeta, l} = \eta_m \delta_\zeta, \quad
\delta_{\zeta+1}^m = \sum_{l=1}^{\delta_{\zeta+1}}  g^{\text{output}}_{\zeta+1, l} = \eta_m \delta_{\zeta+1}$. 
Accordingly, the number of model parameters retained by client $m$ in this layer is $|\mathbf{W}'_{\zeta, \alpha, m}| = \eta_m^2 |\mathbf{W}_{\zeta, \alpha, m}|$. The corresponding \textbf{model retention rate} is $r_m = \frac{|\mathbf{W}'_{\zeta, \alpha, m}|}{|\mathbf{W}_{\zeta, \alpha, m}|} = \eta_m^2$. With this mechanism, the system can adaptively adjust the model size based on each client's communication capacity, ensuring that the model can be uploaded within a given time budget. The actual number of parameters uploaded by client $m$ is 
\begin{equation}
    U_m = |\boldsymbol{\omega}'_{\alpha ,m}|=\left\lfloor r_m K \right\rfloor\label{eq:U_m}
\end{equation}
Here, $K = |\boldsymbol{\omega}_{\alpha ,m}|$ is the total number of parameters in the full model; $r_m$ is the proportion of parameters uploaded,  $\boldsymbol{\omega}_{\alpha,m} \in \mathbb{R}^{K}$ denotes the full model parameter vector of client $m$ at the $\alpha$-th training round.
 During transmission, each client uploads only its pruned model parameters $\boldsymbol{\omega}'_{\alpha ,m}$. Since all clients pre-share the model structure, the parameter locations can be accurately reconstructed locally. This avoids aggregation errors caused by inconsistent pruning, and ensures alignment and stable convergence.

\subsection{Routing and Model Transmission}
The pruned models are transmitted to all other clients via multi-hop routing, and each client synchronously collects and aggregates the locally trained models from others. To optimize communication efficiency, we construct a collision-free multi-hop transmission model and mathematically analyze the joint impact of model pruning and routing selection on communication latency. 

\subsubsection{Collision-Free Transmission Protocol}

\begin{figure}[t]
\centering
\includegraphics[width=0.8\linewidth]{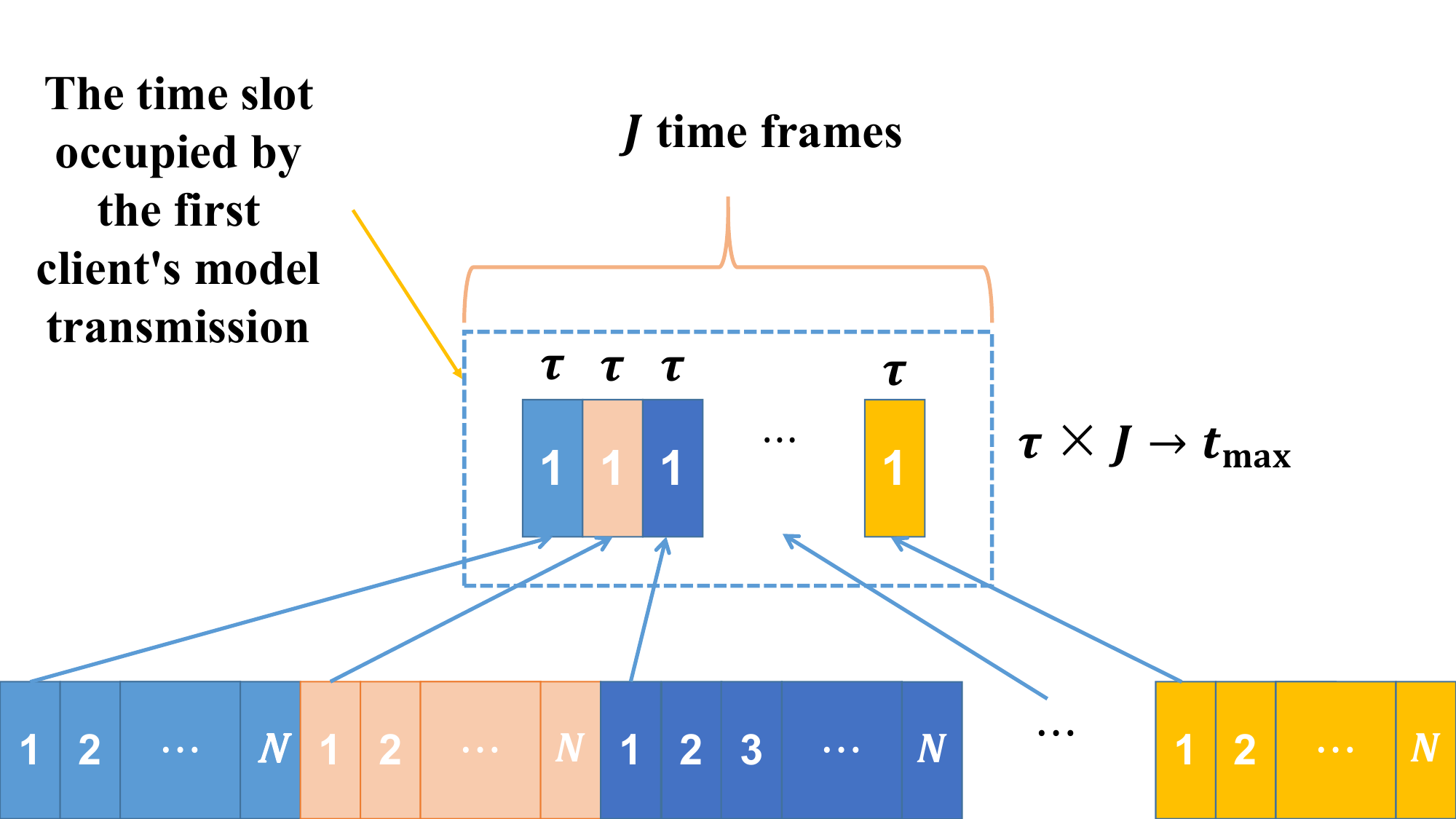}
\caption{An illustration of the TDMA-based collision-free protocol for model exchange in a decentralized network.}\label{fig:TDMA}
\end{figure}

To avoid transmission collisions, the model exchange follows a time division multiple access (TDMA) protocol, in which each client is assigned a dedicated time slot within each frame, as illustrated in Fig.~\ref{fig:TDMA}.
Each round comprises $J$ frames. Each frame consists of $N$ time slots, with each time slot having a duration of $\tau$. The $n$-th slot is exclusively allocated for the transmission of client $n$'s local model. Thus, each client has a maximum latency constraint of $t_{\max} = \tau J$ per round through its designated route, and the total latency constraint for all clients in a round is $N t_{\max}$.

According to graph theory~\cite{wilson10}, the total number of time slots required per training round is \( JN \), where \( N \) denotes the total number of clients in the network. Since the network topology remains unchanged within a training round, the connectivity among clients is also static during the \( N \) time slots. Therefore, the \( JN \) time slots can be precisely allocated to all clients using the edge coloring algorithm~\cite{wilson10}.

\subsubsection{Routing under Latency Constraints}This work primarily focuses on model transmission latency. It is worth noting that although node load and queuing effects may impact the overall communication latency, these factors can be effectively mitigated through proper resource scheduling. Therefore, their impact is ignored in this study.
 Due to different channel conditions and transmission powers of clients, the transmission rates of direct links between neighboring clients vary. The one-hop latency constraint of the client's local model is constrained by the minimum transmission rate among the links within the one-hop range. Let \( \mathcal{ T}_m \) be the spanning tree constructed with client \( m \) as the root, which defines the transmission paths from client \( m \) to other target nodes. Let \( \mathcal{ I}_m  \) be the set of nodes in the spanning tree \( \mathcal{ T}_m  \) responsible for transmitting client \( m \)'s local model. The nodes in \( \mathcal{I}_m \) are arranged according to the transmission path of client \( m \).

In the \( \alpha \)-th training round, the one-hop latency constraint for node \( i \) forwarding client \( m \)'s model is given by:
\begin{equation}
t_{m,i}^{\alpha} = \frac{U_m}{\tilde{v}_{m,i}}, \quad i\in\mathcal{I}_m, m\in\mathcal{V}.\label{eq:t_{m,i}}
\end{equation}
Here, \(U_m =\left\lfloor r_m K \right\rfloor\) defines the size of the model parameters transmitted by client \(m\), and \(\tilde{v}_{m,i}\) denotes the transmission rate of the slowest link within the one-hop communication range of node \( i \) when forwarding client \( m \)'s model, as given by:
\begin{equation}
\tilde{v}_{m,i} = \min_{j \in \mathcal{N}_{\mathcal{T}_m}(i)}\; v_{(i,j)}, \quad i\in\mathcal{I}_m, m\in\mathcal{V},\label{slowest transmission rate}
\end{equation}
where \(\mathcal{N}_{\mathcal{T}_m}(i)\) (and \(\mathcal{N}^{-}_{\mathcal{T}_m}(i)\)) denote the sets of neighboring nodes within (and outside) the one-hop communication range of node \(i\) when client \(m\)'s model is transmitted along the specified route \(\mathcal{T}_m\), such that \(\mathcal{N}_{\mathcal{T}_m}(i) \cup \mathcal{N}^{-}_{\mathcal{T}_m}(i) = \mathcal{N}_i\). Additionally, \(v_{(i,j)}\)\footnote{
For neighboring clients $(i,j)\in \mathcal{E}$, they receive the pilot and calculate the channel information of the direct link between $(i,j)$, and the signal-to-noise ratio in the communication channel is denoted by $\gamma_{(i,j)}$.
Based on the Shannon-Hartley theorem, we obtain the maximum theoretical data rate $v_{(i,j)}$ (in bits per second (bps)) of a directly connected link given its bandwidth $B$ (in Hz) and SNR $\gamma_{(i,j)}$ as given by
$v_{(i,j)} = B \log_2(1 + \gamma_{(i,j)})$.} is the transmission rate of the link \((i, j) \in \mathcal{E}\).

In the $\alpha$-th training round, client $m$ transmits its model with a total latency $t_m^\alpha$, which is the sum of the one-hop latencies along its route. This total latency must not exceed the maximum allowed time $t_{\max}$, expressed as:
\begin{equation}
t_{m}^{\alpha}=\sum_{i \in \mathcal{I}_{m}}t_{m,i}^{\alpha} = \sum_{i \in \mathcal{I}_{m}} \frac{\left\lfloor r_m K \right\rfloor}{\tilde{v}_{m,i}}\leq t_{\max}, \quad m\in\mathcal{V}.\label{eq:6}
\end{equation}

\subsection{Local Model Aggregation}\label{sec:Local Model Aggregation}
In the D-FL system, each client collects the local model parameters from other clients and aggregates them locally. A weighted averaging method is applied, aggregating only the retained model parameters. Specifically, the locally aggregated model for client \( n \) in the \( \alpha \)-th round is 
\begin{equation}
\boldsymbol{\hat{\omega}}_{\alpha ,n} = \left[\hat{{\omega}}_{\alpha,n,1}, \cdots, \hat{{\omega}}_{\alpha ,n,K}\right],\label{eq:hat}
\end{equation}
where the \( k \)-th model parameter in $\boldsymbol{\hat{\omega}}_{\alpha ,n}$ is given by
\begin{equation}
\hat{{\omega}}_{\alpha,n,k}=\sum_{\forall m\in\mathcal{V}}\frac{p_{m}{e}_{\alpha,(m,n),k}}{\sum_{\forall m'\in\mathcal{V}}p_{m'}{e}_{\alpha,(m',n),k}}{\omega}_{\alpha,m,k},\label{eq:omega}
\end{equation}
where \( e_{\alpha ,(m,n),k} \) indicates whether the \( k \)-th element of ${\boldsymbol{\omega}}_{\alpha, m}$ is pruned for transmission or not. Apparently, the model is not pruned and \(e_{\alpha ,(n,n),k} =1\) if $m=n$. Therefore,  \begin{align}
    \sum_{k=1}^K e_{\alpha ,(m,n),k}=\begin{cases} 
\lfloor r_m K\rfloor,\, & \text{if } m\neq n ;\\
K, & \text{ if }m=n.
\end{cases}\label{eq:prune_indicator}
\end{align}%
Here, \( p_{\alpha ,(m,n),k} \triangleq \frac{p_{m}{e}_{\alpha ,(m,n),k}}{\sum_{\forall m' \in \mathcal{V}} p_{m'}{e}_{\alpha ,(m',n),k}} \) is client $n$'s local aggregation coefficient for the \( k \)-th element of ${\boldsymbol{\omega}}_{\alpha ,m}$. Obviously, $\sum_{m \in \mathcal{V}} p_{\alpha ,(m,n),k} = 1$, ensuring the local aggregation coefficients are normalized at any client $n,\,\forall n\in \mathcal{V}$. The detailed steps of the D-FL framework with routing and pruning are provided \textbf{Algorithm~\ref{alg:adaptrout-D-FL}}.

\begin{algorithm}[ht]\footnotesize
\caption{D-FL With Pruning and Routing}\label{alg:adaptrout-D-FL}
\begin{algorithmic}[1]
\State \textbf{Input:} Data $\mathcal{D}_{n}$ distributed on $N$ clients, the local minibatch size $\xi$, the learning rate $\eta$, the model transmission path $\mathcal{T}_m$ and the corresponding set of relay nodes $\mathcal{I}_m$ , model retention rate $r_m$, $m \in \mathcal{V}$ .
\For{each round $\alpha=1,2,\cdots$}
    \For{each client $n \in \mathcal{V}$ in parallel}
        \State $\boldsymbol{\omega}_{\alpha ,n} \gets $ \textbf{ClientUpdate}($\xi,\hat{\boldsymbol{\omega}}_{\alpha-1,n}$);
    \EndFor
    \For{each client $n \in \mathcal{V}$}
        \State $\hat{\boldsymbol{\omega}}_{\alpha,n} \gets$ \textbf{Aggregate\&Prune}($ \boldsymbol{\omega}_{\alpha ,m}, r_m, \mathcal{T}_m$, $\mathcal{I}_m ; \forall m \in \mathcal{V}$);
    \EndFor
\EndFor   
\end{algorithmic}

\vspace{2 mm}
\underline{\textbf{ClientUpdate }$ (\xi,\hat{\boldsymbol{\omega}}_{\alpha-1,n})$:}\nonumber
\begin{algorithmic}[1]
            \State Sample batch $\xi \in \mathcal{D}_n$;
            \State Perform local training with \eqref{eq:epoch_imperfect} and obtain $\boldsymbol{\omega}_{\alpha ,n}$.
\end{algorithmic}

\vspace{2mm}

\underline{\textbf{Aggregate\&Prune }$(\boldsymbol{\omega}_{\alpha ,m}, r_m, \mathcal{T}_m$, $\mathcal{I}_m; \forall m \in \mathcal{V})$:}\nonumber
\begin{algorithmic}[1]
\State \textbf{Initialize:} The updated model of each client, $\boldsymbol{\omega}_{\alpha,m}$, and the model retention rate $r_m$, $ \forall m \in \mathcal{V}$.

\State \textbf{Model Pruning:} The client prunes its local model $\boldsymbol{\omega}_{\alpha,m}$ according to the model retention rate $r_m $, resulting in the retained model $\boldsymbol{\omega}_{\alpha,m}'$.
\State \textbf{Multi-hop Transmission:} The client sends the pruned model \( \boldsymbol{\omega}_{\alpha,m}' \) to all other clients via multi-hop transmission routing $\mathcal{T}_m$, $\mathcal{I}_m$.
\State \textbf{Model Aggregation:} The client aggregates the received pruned model \( \boldsymbol{\omega}_{\alpha, m}' \), \( \forall m \in \mathcal{V} \), according to \eqref{eq:omega} and \eqref{eq:prune_indicator}, resulting in \( \hat{{\omega}}_{\alpha,n,k}\), where \( k = 1, \dots, K \).

\State Output the local
aggregated model $\boldsymbol{\hat{\omega}}_{\alpha ,n}$, by \eqref{eq:hat}.
\end{algorithmic}

\end{algorithm}

\section{Convergence Analysis and Formulation}\label{sec:model_retention}
This section first conducts a theoretical convergence analysis of the convergence behavior of model pruning in D-FL operating in multi-hop wireless network environments. We subsequently construct an optimization problem aimed at minimizing the obtained convergence upper bound.

The objective of the D-FL aggregation process is to ensure the locally aggregated models at each device $n$ converge to the ideal global model\footnote{Note that \(\boldsymbol{\bar{\omega}}_{\alpha}\) differs from the global model in C-FL. As given in \eqref{eq:epoch_imperfect}, the local training of each D-FL round begins with the models locally aggregated by individual clients, i.e., \(\hat{\boldsymbol{\omega}}_{\alpha - 1, n}\), rather than using the globally aggregated model from the previous round, \(\boldsymbol{\bar{\omega}}_{\alpha - 1}\), as in C-FL.
}, i.e.,
\begin{align}\label{accurate_model}
\boldsymbol{\hat{\omega}}_{\alpha ,n} \rightarrow\boldsymbol{\bar{\omega}}_{\alpha }={\sum}_{\forall  n \in \mathcal{V}}p_{n}\boldsymbol{\omega}_{\alpha ,n}.
\end{align}

Achieving this might not be feasible due to the effects of distributed and local model aggregation, as well as the constraints of time resources.

\subsection{Convergence Analysis} 

We analyze the convergence of \textbf{Algorithm~\ref{alg:adaptrout-D-FL}} and propose several key supporting lemmas. These lemmas establish the one-round convergence upper bound of D-FL under the integration of model pruning and multi-hop routing mechanisms. Unlike the lemmas introduced in \cite{9154332}, the proposed lemmas take into account the case where the model retention rates of each client are different.

\begin{lemma}\label{lemma1}

Under the common assumptions that the local FL objectives \(F_n(\cdot), \forall n\) are \(L\)-smooth and \(\mu\)-strongly convex \cite{10965802}, the difference between the global model of D-FL at the \(\alpha\)-th communication round, \(\boldsymbol{\bar{\omega}}_{\alpha}\), and the global optimum \(\boldsymbol{\omega}^*\), is bounded as follows:
{\small
\begin{equation}\label{delta_1}
\begin{aligned}
&\Vert\boldsymbol{\bar{\omega}}_{\alpha}-\boldsymbol{\omega}^*\Vert^2
\leq (1+\tau_{\varrho})\Big[
\left(1-2\mu\eta+\eta^{2}L^{2}\right)
\Vert\boldsymbol{\bar{\omega}}_{\alpha-1}-\boldsymbol{\omega}^*\Vert^2 \\
&\!\!\!+\frac{1+\eta L}{\tau_{\varrho}}
\Big(\sum_{n'\in \mathcal{V}}p_{n'}^{2}+\eta L p_{\max}\Big)
\sum_{n\in\mathcal{V}}
\Vert\hat{\boldsymbol{\omega}}_{\alpha-1,n}-\bar{\boldsymbol{\omega}}_{\alpha-1}\Vert^{2}
\Big]
\end{aligned}
\end{equation}
}
where $\tau_{\varrho}$ is a freely adjustable tuning parameter that balances the contributions of two error terms in the convergence analysis, aiming to obtain a tighter upper bound.
\end{lemma}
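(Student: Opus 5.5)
We follow the standard one-step analysis for perturbed SGD with strongly convex objectives. By \eqref{eq:epoch_imperfect} and \eqref{accurate_model}, the ideal global model obeys
\begin{equation*}
\bar{\boldsymbol{\omega}}_{\alpha}=\sum_{n}p_n\hat{\boldsymbol{\omega}}_{\alpha-1,n}-\eta\sum_n p_n\nabla F_n(\hat{\boldsymbol{\omega}}_{\alpha-1,n}),
\end{equation*}
where the stochastic gradient is replaced by its full version. This is either the usual expectation step or a bounded-variance term absorbed later; we treat the deterministic gradient as in \cite{9154332}.

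We add and subtract the \emph{virtual} step from the ideal global model,
\begin{equation*}
\boldsymbol{u}_{\alpha}=\bar{\boldsymbol{\omega}}_{\alpha-1}-\eta\nabla F(\bar{\boldsymbol{\omega}}_{\alpha-1}),
\end{equation*}
and write
\begin{equation*}
\bar{\boldsymbol{\omega}}_{\alpha}-\boldsymbol{\omega}^*=(\boldsymbol{u}_\alpha-\boldsymbol{\omega}^*)+\boldsymbol{\epsilon}_\alpha .
\end{equation*}
Applying Young's inequality $\|a+b\|^2\le(1+\tau_\varrho)\|a\|^2+(1+1/\tau_\varrho)\|b\|^2$ produces the prefactor $(1+\tau_\varrho)$ and the $1/\tau_\varrho$ weight on the error term. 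We will absorb $(1+1/\tau_\varrho)$ into $(1+\tau_\varrho)/\tau_\varrho$, which holds with equality.

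\textbf{Contraction term.} Since $\nabla F(\boldsymbol{\omega}^*)=0$, expand
\begin{equation*}
\|\boldsymbol{u}_\alpha-\boldsymbol{\omega}^*\|^2=\|\bar{\boldsymbol{\omega}}_{\alpha-1}-\boldsymbol{\omega}^*\|^2-2\eta\langle\nabla F(\bar{\boldsymbol{\omega}}_{\alpha-1})-\nabla F(\boldsymbol{\omega}^*),\bar{\boldsymbol{\omega}}_{\alpha-1}-\boldsymbol{\omega}^*\rangle+\eta^2\|\nabla F(\bar{\boldsymbol{\omega}}_{\alpha-1})-\nabla F(\boldsymbol{\omega}^*)\|^2 .
\end{equation*}
$F$ inherits $\mu$-strong convexity and $L$-smoothness as a convex combination of the $F_n$. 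Strong convexity bounds the inner product below by $\mu\|\cdot\|^2$, and $L$-Lipschitz gradients bound the last term by $L^2\|\cdot\|^2$. Together these give the factor $(1-2\mu\eta+\eta^2L^2)$.

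\textbf{Error term.} The error is
\begin{equation*}
\boldsymbol{\epsilon}_\alpha=\sum_n p_n(\hat{\boldsymbol{\omega}}_{\alpha-1,n}-\bar{\boldsymbol{\omega}}_{\alpha-1})-\eta\sum_n p_n\big(\nabla F_n(\hat{\boldsymbol{\omega}}_{\alpha-1,n})-\nabla F_n(\bar{\boldsymbol{\omega}}_{\alpha-1})\big).
\end{equation*}
The triangle inequality and $L$-smoothness give
\begin{equation*}
\|\boldsymbol{\epsilon}_\alpha\|\le(1+\eta L)\sum_np_n\|\boldsymbol{d}_n\|,\qquad \boldsymbol{d}_n=\hat{\boldsymbol{\omega}}_{\alpha-1,n}-\bar{\boldsymbol{\omega}}_{\alpha-1}.
\end{equation*}
Squaring this directly does not reproduce the stated constant, so the two pieces are handled separately.

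\textbf{Main obstacle: matching the constant $(\sum p_{n'}^2+\eta Lp_{\max})$.} First, write
\begin{equation*}
\|\boldsymbol{\epsilon}_\alpha\|^2\le(1+\eta L)\|\textstyle\sum_np_n\boldsymbol{d}_n\|^2+(1+\eta L)\,\eta L\cdot\frac{1}{L^2}\|\sum_n p_n(\nabla F_n(\hat{\boldsymbol{\omega}}_{\alpha-1,n})-\nabla F_n(\bar{\boldsymbol{\omega}}_{\alpha-1}))\|^2 .
\end{equation*}
This is Young's inequality with parameter $\eta L$, since $(1+\eta L)+(1+1/(\eta L))\eta^2 L^2=(1+\eta L)(1+\eta L)$. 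The weights need adjusting so that the total factor is $(1+\eta L)$ times the bracket.

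For the first piece, Cauchy–Schwarz gives $\|\sum_np_n\boldsymbol{d}_n\|^2\le\sum_np_n^2\sum_n\|\boldsymbol{d}_n\|^2$. For the second, Jensen's inequality and $L$-smoothness give a bound of $L^2\sum_np_n\|\boldsymbol{d}_n\|^2\le L^2p_{\max}\sum_n\|\boldsymbol{d}_n\|^2$.

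Collecting terms yields $(1+\eta L)(\sum p_{n'}^2+\eta Lp_{\max})\sum_n\|\boldsymbol{d}_n\|^2$. The delicate point is choosing the Young split between the consensus and gradient pieces so that exactly these two coefficients appear. I expect this bookkeeping, possibly with a loosened $\eta L\le1$-type step, to be the only nontrivial part. Multiplying by $(1+\tau_\varrho)/\tau_\varrho$ and combining with the contraction term then gives \eqref{delta_1}.
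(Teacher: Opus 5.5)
Your proposal is correct and follows the paper's proof essentially step for step. It uses the same virtual step $\bar{\boldsymbol{\varphi}}_\alpha$ (your $\boldsymbol{u}_\alpha$), the same Young split with $\tau_\varrho$, and the same strong-convexity contraction; for the error term it likewise uses a second Young split with parameter $\eta L$, then Cauchy--Schwarz, then Jensen plus $L$-smoothness with $p_{\max}$. The bookkeeping you flag as delicate is already finished in your display, since $(1+\tfrac{1}{\eta L})\eta^2L^2=\eta L(1+\eta L)$ gives the constant exactly with no $\eta L\le 1$ step. Your replacement of $\nabla F_n(\cdot,\xi)$ by full gradients is also what the paper tacitly relies on, since its contraction step needs $\sum_n p_n\nabla F_n(\boldsymbol{\omega}^*,\xi)=\mathbf{0}$.
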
%
\begin{proof}
    See \textbf{Appendix \ref{proof:lemma1}}.
\end{proof}

\textbf{Lemma \ref{lemma1}} reveals that model pruning and routing introduce significant aggregation errors, which are mainly dominated by the bias term in the second component of~\eqref{delta_1}. Specifically, the first component is independent of the communication and aggregation mechanisms, whereas the second component captures the deviation between the locally aggregated model under imperfect aggregation and the ideal global model. 
As referring to~\cite{10965802}, the convergence analysis can also be extended to non-convex FL objective functions, where the aggregation error term has the same influence on the convergence.

The one-round upper bound of D-FL  with joint model pruning and routing, i.e., the RHS of \eqref{delta_1} in \textbf{Lemma~\ref{lemma1}}, is dominated by the model biases between the ideal aggregated model $\boldsymbol{\bar{\omega}}_{\alpha }$ and the imperfect locally aggregated model $\hat{\boldsymbol{\omega}}_{\alpha-1,n},\forall n$, i.e., $\Vert\hat{\boldsymbol{\omega}}_{\alpha-1,n}-\bar{\boldsymbol{\omega}}_{\alpha-1}\Vert^{2}=\sum_{k=1}^{K}|\hat{{\omega}}_{\alpha-1,n,k}-\bar{{\omega}}_{\alpha-1,k}|^{2}$, where the $k$-th element of the model bias of client $n$'s local model can be rewritten as
\begin{align}
\hat{{\omega}}_{\alpha,n,k}-\bar{{\omega}}_{\alpha,k}={\sum}_{\forall m\in \mathcal{V}}\lambda_{\alpha,(m,n),k}\cdot{\omega}_{\alpha,m,k},
\end{align}
where $\lambda_{\alpha,(m,n),k}\triangleq {p}_{\alpha,(m,n),k}-p_{m}, \forall m,n \in \mathcal{V}$. 

Then, the sum of the $\ell_2$-norms of the model biases across all clients can be bounded by
\begin{equation}\small
\begin{aligned}
&\sum_{n\in\mathcal{V}}\Vert\hat{\boldsymbol{\omega}}_{\alpha-1,n}-\bar{\boldsymbol{\omega}}_{\alpha-1}\Vert^{2}=
\sum_{n\in\mathcal{V}}\sum_{k=1}^{K}(\hat{{\omega}}_{\alpha-1,n,k}-{\bar{\omega}}_{\alpha-1,k})^{2} \\
&= \sum_{ n\in\mathcal{V}}\sum_{k=1}^{K}\Big(\sum_{\forall m\in\mathcal{V}} p_{\alpha-1 ,(m,n),k} {\omega}_{\alpha-1,m,k}-\sum_{\forall  m \in \mathcal{V}}p_{m}\omega_{\alpha-1 ,m,k}\Big)^{2}\\
\leq &\sum_{ n\in\mathcal{V}} \sum_{k=1}^{K}\Big( \sum_{ m\in \mathcal{V}}\lambda_{\alpha-1,(m,n),k}^2\Big)\Big(\sum_{ m'\in \mathcal{V}}{\omega}_{\alpha-1,m',k}^{2}\Big) \\
&\leq \sum_{n\in\mathcal{V}}\Big(\sum_{k=1}^{K}\sum_{m\in\mathcal{V}}\lambda_{\alpha-1,(m,n),k}^{2}\Big)
\sqrt{\sum_{k'=1}^{K}\Big(\sum_{m'\in\mathcal{V}}{\omega}_{\alpha-1,m',k'}^{2}\Big)^{2}}.
\end{aligned}
\label{eq:sum}
\end{equation}

Here, $\sum_{k=1}^{K}\sum_{m\in\mathcal{V}}\lambda_{\alpha-1,(m,n),k}^{2}$ is the only part related to model pruning and multi-hop aggregation, where $\lambda_{\alpha,(m,n),k} = p_{\alpha,(m,n),k} - p_m$. According to Section~\ref{sec:Local Model Aggregation}, $p_{\alpha,(m,n),k}$ reflects the impact of multi-hop aggregation on the $k$-th parameter from client $m$.

The model bias can be upper-bounded as follows, which reveals the relationship between the retention rate and the convergence bound.

\begin{lemma}\label{lemma2}The upper bound of the dominant part of the model bias is given by
\begin{equation}\label{eq:lemma2}
\begin{aligned}
&{\sum}_{k=1}^{K} {\sum}_{m \in \mathcal{V}} \lambda_{\alpha-1,(m,n),k}^{2} \\
& \leq  {\sum}_{l \in \mathcal{V} \setminus \{n\}} (K -\lfloor r_l  K \rfloor) p_l^2  
+ \left( {\sum}_{l \in \mathcal{V} \setminus \{n\}} \!\!(K \! - \! \lfloor r_l K \rfloor) \right)^2
\end{aligned}
\end{equation}
where  \( p_l \) is the ideal aggregation weight coefficient under classical C-FL. 
\end{lemma}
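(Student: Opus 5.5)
The plan is to fix a receiving client $n$ and a coordinate $k$, bound the inner sum $\sum_{m}\lambda_{\alpha-1,(m,n),k}^2$ in terms of which senders lost coordinate $k$, and then sum over $k$ using \eqref{eq:prune_indicator}. Write $e_{m,k}\equiv e_{\alpha-1,(m,n),k}\in\{0,1\}$, with $e_{n,k}=1$. Let $S_k\triangleq\sum_{m'}p_{m'}e_{m',k}\in(0,1]$ and define the "missing mass" $\Delta_k\triangleq 1-S_k=\sum_{l\neq n}p_l(1-e_{l,k})$. The coefficient difference then splits into two cases.
\begin{itemize}
\item If $e_{m,k}=0$, then $\lambda_{m,k}=-p_m$.
\item If $e_{m,k}=1$, then $\lambda_{m,k}=p_m/S_k-p_m=p_m\Delta_k/S_k$.
\end{itemize}
Hence
\begin{equation*}
\sum_{m}\lambda_{m,k}^2=\sum_{l\neq n}(1-e_{l,k})p_l^2+\frac{\Delta_k^2}{S_k^2}\sum_{m:e_{m,k}=1}p_m^2 .
\end{equation*}

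The first term summed over $k$ gives exactly $\sum_{l\neq n}p_l^2\sum_k(1-e_{l,k})=\sum_{l\neq n}(K-\lfloor r_lK\rfloor)p_l^2$ by \eqref{eq:prune_indicator}. This is the first term of \eqref{eq:lemma2}.

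For the second term, use $\sum_{m:e_{m,k}=1}p_m^2\le S_k^2$, since $\sum x_i^2\le(\sum x_i)^2$ for nonnegative $x_i$. This cancels the denominator and leaves $\Delta_k^2$. Since $p_l\le 1$, we have $\Delta_k\le \sum_{l\neq n}(1-e_{l,k})$, the number of other clients missing coordinate $k$. Then $\Delta_k\le 1$ gives $\Delta_k^2\le\Delta_k$, so
\begin{equation*}
\sum_k\Delta_k^2\le\sum_k\sum_{l\neq n}(1-e_{l,k})=\sum_{l\neq n}(K-\lfloor r_lK\rfloor)\le\Big(\sum_{l\neq n}(K-\lfloor r_lK\rfloor)\Big)^2 .
\end{equation*}
The last step holds because the bracketed quantity is a nonnegative integer, so it is at most its square. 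Alternatively, bound $\sum_k\Delta_k^2\le(\sum_k\Delta_k)^2$ directly. Either route yields the stated square, and adding the two pieces gives \eqref{eq:lemma2}.

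The main obstacle is the renormalization term $\Delta_k/S_k$, which can blow up when $S_k$ is small. Pairing it with $\sum_{e_{m,k}=1}p_m^2\le S_k^2$ is what avoids any lower bound on $S_k$. The first attempt is exactly this cancellation, so that no assumption such as $p_n\ge c$ is needed. The rest is bookkeeping of the indicator counts via \eqref{eq:prune_indicator}, using the fact that the receiving client's own model is never pruned.
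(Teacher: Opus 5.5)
Your proof is correct and follows the paper's route: splitting $\sum_m\lambda_{m,k}^2$ into the missing-sender part $\sum_{l\neq n}(1-e_{l,k})p_l^2$ plus the renormalization part, and cancelling via $\sum_{m:e_{m,k}=1}p_m^2\le S_k^2$, are exactly the paper's steps \eqref{eq:lambda_bound_c}--\eqref{eq:lambda_bound_d}. The only divergence is in bounding $\Delta_k^2$, and there yours is simpler and sharper: the paper uses Cauchy--Schwarz, $\Delta_k^2\le a_k\sum_{l\neq n}(1-e_{l,k})p_l^2$ with $a_k$ the number of missing senders, followed by $p_l\le 1$ and $\sum_k a_k^2\le(\sum_k a_k)^2$, whereas your $\Delta_k^2\le\Delta_k$ shows the second term can be taken linear, namely $\sum_{l\neq n}p_l(K-\lfloor r_lK\rfloor)$, so the square in the lemma is loose.
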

\begin{proof}
    See \textbf{Appendix \ref{proof_lemma2}}.
\end{proof}

\subsection{Problem Formulation}\label{sec:pruning and routing}

Based on Lemma~\ref{lemma2}, the upper bound of the dominant term in the model bias is governed by the model retention rate under latency and bandwidth constraints. Hence, minimizing this upper bound contributes to reducing the global loss function in D-FL. Since $K$ and $p_l$ are constants and the collision-free transmission protocol renders the routing paths independent across clients, the problem can be formulated as maximizing the model retention rate of each client as
\begin{subequations} 
\begin{align} 
   & \textbf{P1:} \quad  \max_{\mathcal{I}_m, \mathcal{T}_m}\; r_m,\quad \forall m \in \mathcal{V} \label{object 1}\\
   \text{s.t.}\;  &  \sum_{i \in \mathcal{I}_{m}} \frac{\lfloor r_m K\rfloor}{\tilde{v}_{m,i}} \leq t_{\max}, \quad \forall m \in \mathcal{V}, \label{constratin 1a}   
   \\
    & \tilde{v}_{m,i} = \min_{j \in \mathcal{N}_{\mathcal{T}_m}(i)}\;  v_{(i,j)}\geq 0, \quad i\in\mathcal{I}_m, \label{constratin 1b} 
 \\
 &  0 < r_m  \leq 1,\quad\forall m \in \mathcal{V}. \label{constratin 1c} 
\end{align}\label{eq:P1}
\end{subequations}
Constraint~\eqref{constratin 1a} ensures timely model transmission within the latency limit $t_{\max}$, as given in \eqref{eq:6}. Constraint~\eqref{constratin 1b} specifies the transmission rate along the route \( \mathcal{I}_m \) based on prior channel state information; see \eqref{slowest transmission rate}. Constraint~\eqref{constratin 1c} restricts the feasible range of the model retention rate \( r_m \) for client \( m \).

Clearly,  \textbf{P1} is a mixed-integer nonlinear programming (MINLP) problem, which contains multiple parameters, i.e., $r_m, \mathcal{I}_{m}$, and $\mathcal{T}_{m}$, and is non-convex and challenging to solve optimally in practice. By examining the nested relationship between the model retention ratio $r_m$ and the routing variables $\mathcal{I}_{m}$ and $\mathcal{T}_{m}$, as reformulated from \eqref{constratin 1a} and \eqref{constratin 1b}, we have
\begin{equation}
  r_m \leq \frac{t_{\max}}{K \sum\limits_{i \in\mathcal{I}_{m}} \max\limits_{j \in \mathcal{N}_{\mathcal{T}_m}(i)} \chi_{(i,j)}}, \quad \forall m \in \mathcal{V},
  \label{eq:rm_v1}
\end{equation}
where $\chi_{(i,j)} \triangleq 1 / v_{(i,j)}$ denotes the edge weight between neighboring clients $\forall (i,j) \in \mathcal{E}$.  
In conjunction with Constraint~\eqref{constratin 1c}, the model retention rate is upper bounded as
\begin{equation}
    r_m \leq \min \bigg\{1, \frac{t_{\max}}{K \sum\limits_{i \in \mathcal{I}_m} \max\limits_{j \in \mathcal{N}_{\mathcal{T}_m}(i)} \chi_{(i,j)}}\bigg\}, \quad \forall m \in \mathcal{V}.
  \label{eq:rm*}
\end{equation}

Maximizing the model retention rate $r_m$ is equivalent to minimizing  $\sum_{i \in \mathcal{I}_m} \max_{j \in \mathcal{N}_{\mathcal{T}_m}(i)} \chi_{(i,j)}$. The original problem can be reformulated as a routing problem:
\begin{subequations}
\begin{align}
    \textbf{P2:} \quad & \min_{\mathcal{I}_m,\, \mathcal{T}_m} \sum_{i \in \mathcal{I}_{m}} \max_{j \in \mathcal{N}_{\mathcal{T}_m}(i)} \chi_{(i,j)}, \quad \forall m \in \mathcal{V}, \label{eq:P2a} \\
    \text{s.t.} \quad & 0 < \chi_{(i,j)} < \infty, \quad \forall (i,j) \in \mathcal{E}. \label{eq:P2b}
\end{align}\label{eq:P2}
\end{subequations}
After solving~\eqref{eq:P2} and obtaining the optimal routing solution for client $m$, we obtain
\begin{equation}
   \{\mathcal{I}_m^*,\, \mathcal{T}_m^* \}= \mathop{\arg\min}\limits_{\mathcal{I}_m, \mathcal{T}_m} \sum_{i \in \mathcal{I}_{m}} \max_{j \in \mathcal{N}_{\mathcal{T}_m}(i)} \chi_{(i,j)}, \label{eq:routing}
\end{equation}
The optimal model retention rate is given by
\begin{equation}
    r_m^* = \min \Big\{1, \frac{t_{\max}}{K \sum\limits_{i \in \mathcal{I}_m^*} \max\limits_{j \in \mathcal{N}_{\mathcal{T}_m^*}(i)} \chi_{(i,j)}}\Big\}, \quad \forall m \in \mathcal{V}.
    \label{eq:rm*}
\end{equation}

Traditional routing algorithms fail to efficiently obtain optimal solutions of the non-convex objective, owing to the high complexity of concurrent and distributed multi-user routing strategies inherent in D-FL systems. Model parameters must be transmitted via multi-hop broadcasting, where each hop may involve parallel transmissions to multiple neighboring nodes. Moreover, the single-hop latency constraint is constrained by the slowest communication link within the current broadcast range. This transmission mechanism is fundamentally different from traditional point-to-point communication, significantly increasing the complexity of routing optimization and rendering existing methods ineffective in addressing such localized and constrained model transmission challenges.

To address this challenge, we propose a new routing algorithm that optimizes the model transmission path to approximate the optimal solution of~\eqref{eq:P2a}. The complete algorithm design and theoretical analysis are provided in Section~\ref{sec:routing_optimization}.

\section{Proposed Routing Algorithm}\label{sec:routing_optimization}
In this section, we propose a routing optimization algorithm based on Node Priority and Client-Aware Link Threshold (\textbf{P\_CLT}), which aims to minimize the cumulative maximum link weights over all hops along the transmission path (i.e., problem \textbf{P2}) to enhance model transmission efficiency.

\begin{algorithm}\footnotesize
\caption{Routing Algorithms (P\_CLT)}\label{alg:Simplified Optimal Routing}
\begin{algorithmic}[1]
\State \textbf{Input:} Graph $\mathcal{G}(\mathcal{V},\mathcal{E})$, source $m$, iterations $\Psi$, $\theta$.
\State \textbf{Output:} Routing tree $\mathcal{T}_m^*$, transmission order $\mathcal{I}_m^*$

\vspace{2mm}

    \State Initialize  $\mathcal{G}(\mathcal{V},\mathcal{E})$ $\mathcal{T}_m''. \mathcal{E}_{\mathcal{T}_{m}''} \gets \text{MST}(\mathcal{G})$.
    \State $\mathcal{T}_m', \mathcal{E}_{\mathcal{T}_{m}'} \gets \textbf{ModifyLinks}(\eqref{eq:condition_theta},\mathcal{ T}_m'', \mathcal{E}_{\mathcal{T}_{m}''})$;
  
    \For{each iteration $\psi$ from 1 to $\Psi$}
        \State $\mathcal{T}_m, \mathcal{E}_{\mathcal{T}_{m}} \gets \textbf{ModifyLinks}(\eqref{eq:condition_w},\mathcal{ T}_m', \mathcal{E}_{\mathcal{T}_{m}'})$;
    \EndFor
\State $\mathcal{T}_m^* \gets \mathcal{T}_m$; $\mathcal{I}_m^* \gets \mathcal{T}_m^*$. The set of nodes in \( T^*_m \) that transmit client \( m \)'s local model, where the nodes in this set are arranged in the order of the multi-hop transmission path of client \( m \)'s local model.

\end{algorithmic}

\vspace{2mm}

\underline{\textbf{ModifyLinks($ f(\chi_{(c,u)} ,\chi_{(c,v)})$, $\mathcal{T}_m$, $\mathcal{E}_{\mathcal{T}_{m}}$)}:}\nonumber

\begin{algorithmic}[1]
 \State $Q = [Q_1, \ldots, Q_N]$, where $Q_i \gets$ via \eqref{eq:Q} and $\mathcal{T}_m$, $\forall i \in \mathcal{V}$
 \State \textbf{Initialize:} Source node set $\mathcal{V_{\rm s}} = \{ m \}$; processed node set $\mathcal{V}_{\rm queue} = \emptyset$; and non-selectable node set $\mathcal{V}_{\rm Q} = \emptyset$.
 
 \While{$\lvert \mathcal{V}_{\rm queue} \rvert \neq N$}
    \For{each node \( c \in \mathcal{V_{\rm s}} \)}
        \State $\mathcal{V}_{\rm queue} \gets \mathcal{V}_{\rm queue} \cup \{c \}$,  $\mathcal{V_{\rm Q}} \gets \mathcal{V_{\rm Q}} \cup \mathcal{N_{\rm\mathcal{ T}_m}}(c)$    
        \For{$v \in \mathcal{N_{\rm\mathcal{ T}_m^-}}(c)$ and $u \in \mathcal{N_{\rm\mathcal{ T}_m}}(c)$}
                \If{$ f(\chi_{(c,u)}, \chi_{(c,v)})$ holds}
                    \State Add the edge $(c, v)$ to $\mathcal{E}_{\mathcal{T}_{m}}$, and remove the edge that creates a cycle;
                    \State $\mathcal{T}_m' \gets\mathcal{ T}_m$; $\mathcal{V_{\rm Q}} \gets \mathcal{V_{\rm Q}} \cup \{ v \}$.       
                \EndIf
            \EndFor
    \EndFor
    \State \textbf{$\mathcal{V_{\rm Q}} \gets \text{Sort}(\mathcal{V_{\rm Q}}, \text{key} = Q[i], \text{reverse} = \text{True})$}, $\mathcal{V_{\rm s}} \gets \mathcal{V_{\rm Q}}$.
 \EndWhile
 \State \Return $\mathcal{T}_m'$ and $\mathcal{E}_m'$.
\end{algorithmic}
\end{algorithm}

The \textbf{P\_CLT} routing algorithm initially constructs a minimum spanning tree \(\mathcal{T}_m''\) for the model transmission of client \( m \) as the starting routing structure, where \( \mathcal{E}_{\mathcal{T}_m''} \) denotes the set of links in the spanning tree. By optimizing the spanning tree structure, the optimal routing solution is obtained, consisting of the optimal spanning tree \(\mathcal{T}_m^*\) and the optimal node set \(I_m^*\), where \( m \in \mathcal{V} \). The detailed process is illustrated in Algorithm \ref{alg:Simplified Optimal Routing}. The P\_CLT routing algorithm introduces the concept of ''link reciprocity'' and proposes two conditions for link addition based on this concept to adjust the spanning tree structure. Additionally, during the optimization process, we incorporate node priorities and client-aware link weights.

\subsection{Link Reciprocity}

To prevent the issue of increased cumulative maximum link weight caused by including both high-weight and low-weight links within the same hop in multi-hop communication, we introduce the concept of ``link reciprocity'' during the optimization of the spanning tree structure (i.e., the model transmission path). This mechanism ensures links with similar or smaller weights are assigned to the same hop. 
Following this principle, we propose two constraints for link addition:
\begin{equation}  
\lvert \chi_{(c,v)} - \chi_{(c,u)} \rvert \leq \theta, \label{eq:condition_theta}  
\end{equation}  
\begin{equation}  
\chi_{(c,v)} \leq \chi_{(c,u)}^{\max}. \label{eq:condition_w}  
\end{equation}  
For the current node \( c \), if the link \((c,v) \in \mathcal{E}_{T_{m}^-}\) satisfies either condition \eqref{eq:condition_theta} or \eqref{eq:condition_w}, it can be added to the one-hop communication of node \( c \).
Here, \(\chi_{(c,u)}^{\max}\) represents the maximum link weight within the one-hop communication range of node \(c\), where \( u \in \mathcal{N_{\rm \mathcal{T}_m}}(c) \) and \( v \in \mathcal{N_{\rm \mathcal{T}_m^-}}(c) \), with \( m \in \mathcal{V} \). The threshold \( \theta \) defines the maximum allowable weight difference between links within one hop. Adjusting \( \theta \) can control the range of one-hop communication; see Section~\ref{se:Client-Aware Link Weight Threshold}.

During link modification based on the link reciprocity principle, the algorithm sorts the adjacent links of all nodes by weight. The total sorting complexity is given by
\begin{equation}
\mathcal{O}\left(E \log \frac{E}{N}\right),\label{eq:O_link}
\end{equation}
where \(N\) is the number of nodes, and \(E\) is the number of edges. The detailed derivation is provided in Appendix~\ref{appendix:complexity_proof}.

\subsection{Node Priority and Client-Aware Link Threshold}\label{Generic Link Modification }
During routing optimization, inefficient link establishment with adjacent nodes or improper one-hop communication range configuration (where the weight difference between new and existing links is too large or too small) can result in suboptimal routing. To mitigate this, we introduce a node priority mechanism and a client-aware link threshold strategy to further enhance the spanning tree structure.

\subsubsection{Design of Node Priority}\label{Design of node priorities}
To avoid suboptimal routing caused by random link additions, we introduce the concept of \textit{node priority}. The priority of a node is defined as the number of neighboring nodes in the spanning tree, i.e.,
\begin{equation}
Q_{i} = |\mathcal{N_{\rm \mathcal{ T}_m}}(i)|,\label{eq:Q}
\end{equation}
where $Q_i$ is the priority of node $i$. The larger $Q_i$, the higher the priority of node $i$. The total time complexity for computing and sorting the degrees of all nodes is:
\begin{equation}
   \mathcal{O}(N \log N + E).\label{eq:O_node_pr}
\end{equation}
Specifically, the degree of each node can be obtained by traversing its adjacency list, which takes \(\mathcal{O}(N + E)\) time~\cite{enwiki:1289635162}. The sorting step can be performed using standard comparison-based algorithms, such as quicksort or mergesort, with a time complexity of \(\mathcal{O}(N \log N)\)~\cite{cormen2009introduction}.

\subsubsection{Client-Aware Link Weight Threshold}\label{se:Client-Aware Link Weight Threshold}

We control the one-hop communication range of nodes by dynamically adjusting the weight difference between the new and original links (\(\theta\)). A smaller \(\theta\) helps mitigate the negative impact of high-weight links, while a larger \(\theta\) can expand the one-hop communication range and improve transmission efficiency. The optimal value of \(\theta\) varies with network scale; therefore, in this paper, we determine its optimal value experimentally. Specifically, for a given number of clients \(N\), we conduct a grid search over \(\theta\) in the range of 0.1 to 0.9 to identify the value that minimizes the transmission time. As shown in Fig.~\ref{fig.alg2}(b), the parameter $\theta$ significantly affects the performance of Algorithm~\ref{alg:Simplified Optimal Routing}. Selecting an inappropriate value for $\theta$ not only results in increased transmission latency during the initial optimization stage but also negatively impacts the subsequent iterative refinement process. This sensitivity underscores the importance of carefully tuning $\theta$ to guide link adjustments and ensure the efficient operation of the routing algorithm.



\begin{remark}[Finite-step Convergence and Local Optimality]

At the beginning of each outer iteration, given the current spanning tree 
$\mathcal{T}_m$, the target routing tree $\mathcal{T}_m^*$ is exclusively determined 
by the inner-layer link optimization module \textbf{ModifyLinks}. During the inner-layer optimization, each link addition or replacement strictly avoids cycle formation and complies with the node-priority rules and link-addition 
constraints (\eqref{eq:condition_theta} or \eqref{eq:condition_w}). In the first stage, constraint \eqref{eq:condition_theta} is imposed. Existing 
studies have demonstrated that this mechanism effectively reduces communication 
latency while improving both model performance and system fairness~\cite{10171556}. 
Accordingly, under constraint \eqref{eq:condition_theta}, the objective function in \eqref{eq:P2a} is monotonically 
non-increasing. In the second stage, constraint \eqref{eq:condition_w} is incorporated and the outer 
loop is executed for $\Psi$ iterations, which further enforces a continuous decrease 
of the objective function. Since the objective function is non-negative and the set of feasible links is finite, 
the inner-layer optimization is guaranteed to converge in a finite number of steps. 
Across outer iterations, the algorithm converges to locally optimal routing trees 
under constraints \eqref{eq:condition_theta} and \eqref{eq:condition_w}, respectively. Given the finite number of feasible 
spanning trees, once a fixed point is reached, the overall algorithm is guaranteed 
to stabilize in finite steps Algorithm~\ref{alg:Simplified Optimal Routing} eventually converges.

\end{remark}

\subsection{Computational Complexity of Algorithm~\ref{alg:Simplified Optimal Routing}}

The main computational cost of Algorithm~\ref{alg:Simplified Optimal Routing} lies in the routing optimization process. Other steps, such as the calculation in Step 10 using ~\eqref{eq:rm*} and the constraints in ~\eqref{eq:condition_theta} and~\eqref{eq:condition_w}, involve only simple arithmetic operations with negligible overhead.
During initialization, the routing is constructed based on a minimum spanning tree with a time complexity of $\mathcal{O}(E \log E)$. This is followed by one link modification according to constraint~\eqref{eq:condition_theta}, and then $\Psi$ iterative link modifications based on constraint~\eqref{eq:condition_w}.
The link modification step includes node priority calculation and link updates, with complexity given by ~\eqref{eq:O_link} and~\eqref{eq:O_node_pr}. Thus, the overall time complexity is
$\mathcal{O}(E \log E) + \mathcal{O}\big((\Psi + 1) \times (E \log \tfrac{E}{N} + N \log N + E)\big).$
Here, $E$ is the number of links and $\Psi$ is the number of iterations. For sparse or medium-density networks (e.g., geometric graphs), $E = \mathcal{O}(N)$ typically holds, resulting in an overall complexity of $\mathcal{O}(\Psi \cdot N \log N)$. As shown in Fig.~\ref{fig.alg2}(b) for $N=20$, the optimized routing transmission time of Algorithm~\ref{alg:Simplified Optimal Routing} decreases with increasing $\Psi$ and stabilizes around $\Psi = 3$, indicating that only a few iterations are sufficient for convergence without significantly increasing computational complexity. Even for denser networks where $E = \mathcal{O}(N^2)$,  resulting in an overall complexity of $\mathcal{O}(\Psi \cdot N^2 \log N)$, the algorithm remains feasible for networks with several hundred clients.
Table~\ref{tab:complexity} compares the computational complexity of the proposed algorithm with existing standard routing methods in high-density networks.

\begin{table}[htbp]
\scriptsize 
\centering
\captionsetup{font=scriptsize} 
\caption{Comparison of Computational Complexity of Different Algorithms}
\label{tab:complexity}
\begin{tabular}{l|p{1cm}|p{0.9cm}|p{1.2cm}|p{1.5cm}}
\hline
\textbf{Algorithm} & Kruskal (MST) & Bellman-Ford & Flooding (Flood-Fill) & \textbf{Algorithm 2 (Proposed)} \\ \hline
\textbf{Complexity} & $N^2 \log N$ & $N^3$ & $N^2$ & $\Psi \times N^2 \log N$ \\ \hline
\end{tabular}
\end{table}

To illustrate the dynamic routing optimization of Algorithm~\ref{alg:Simplified Optimal Routing}, Fig.~\ref{fig.alg2}(a) shows a concrete example for client $m=13$. The process starts from the initial minimum spanning tree $\mathcal{T}_m''$, followed by the first link optimization under constraint \eqref{eq:condition_theta}  to obtain $\mathcal{T}_m'$, expanding coverage and reducing hops. Then, $\Psi$ iterations under constraint \eqref{eq:condition_w} refine the routing to the optimized tree $\mathcal{T}_m^*$.

\begin{figure}[t]
    \centering
    \begin{subfigure}[t]{0.5\textwidth} 
        \centering
        \includegraphics[width=\linewidth]{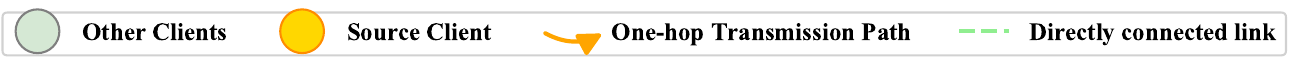}
        
        \vspace{2mm} 
        \begin{minipage}{0.32\linewidth}
            \centering
            \includegraphics[width=\linewidth]{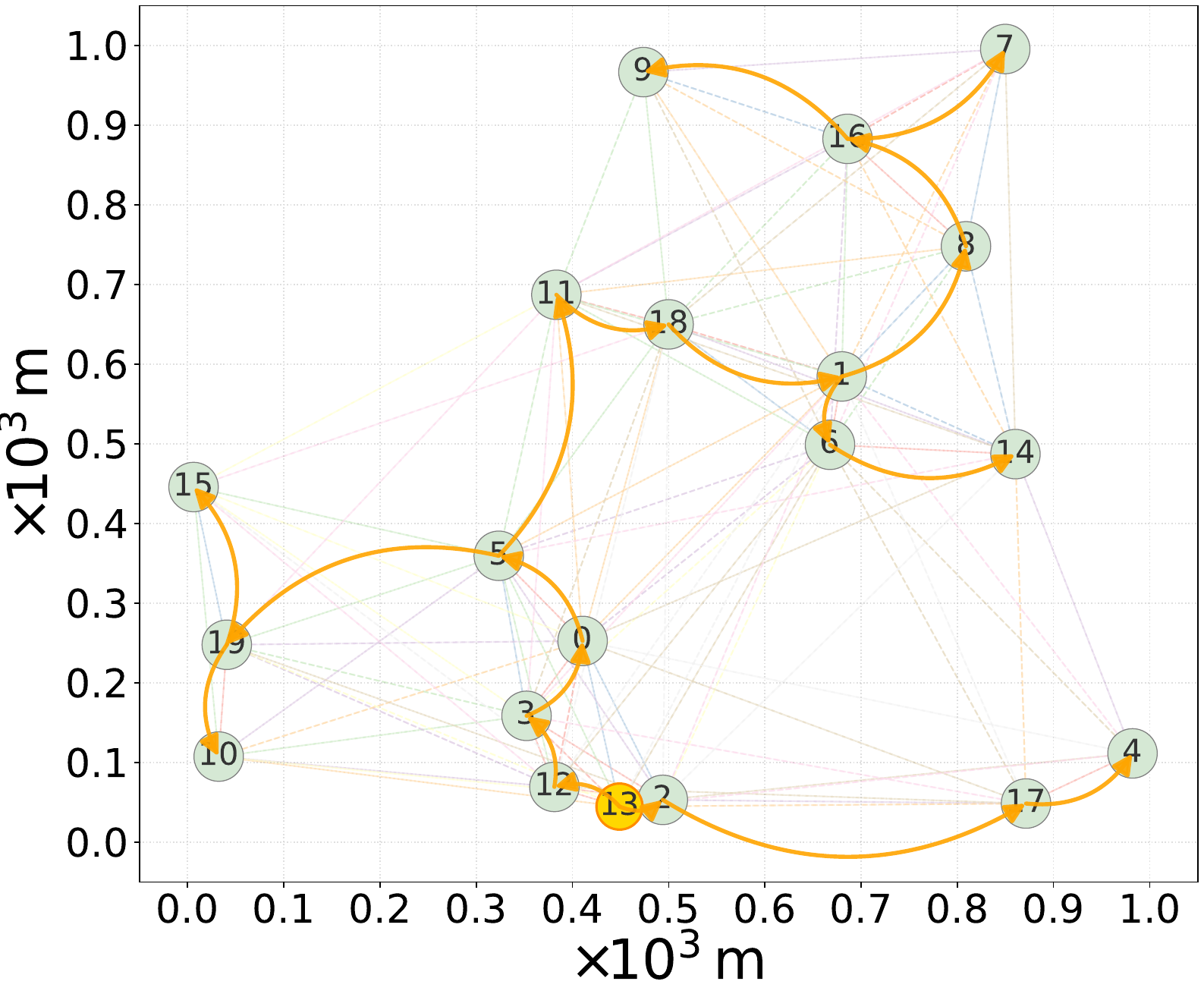}
            \caption*{$\mathcal{T}_m''$} 
        \end{minipage}
        \hfill
        \begin{minipage}{0.32\linewidth}
            \centering
            \includegraphics[width=\linewidth]{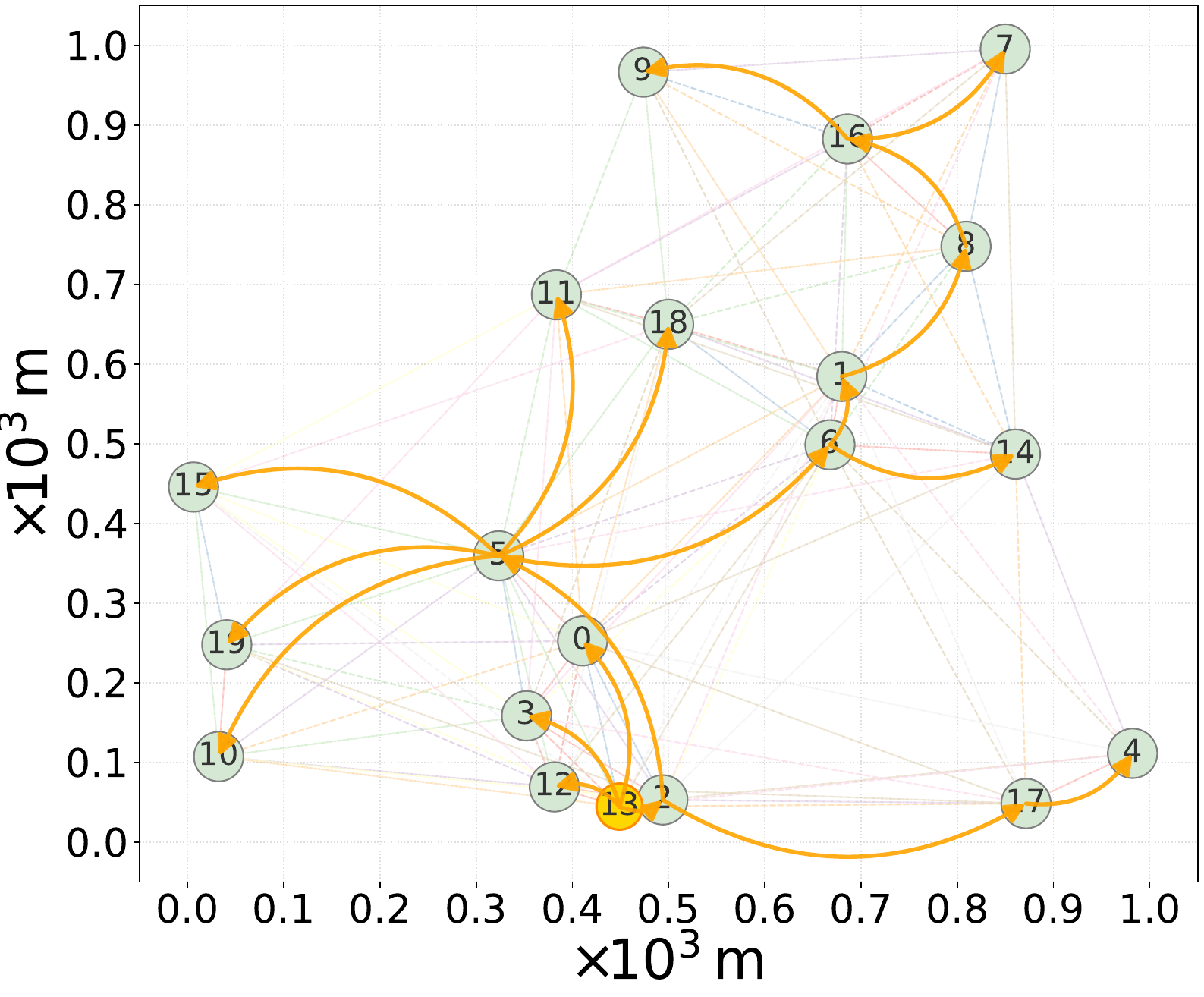}
            \caption*{$\mathcal{T}_m'$}
        \end{minipage}
        \hfill
        \begin{minipage}{0.32\linewidth}
            \centering
            \includegraphics[width=\linewidth]{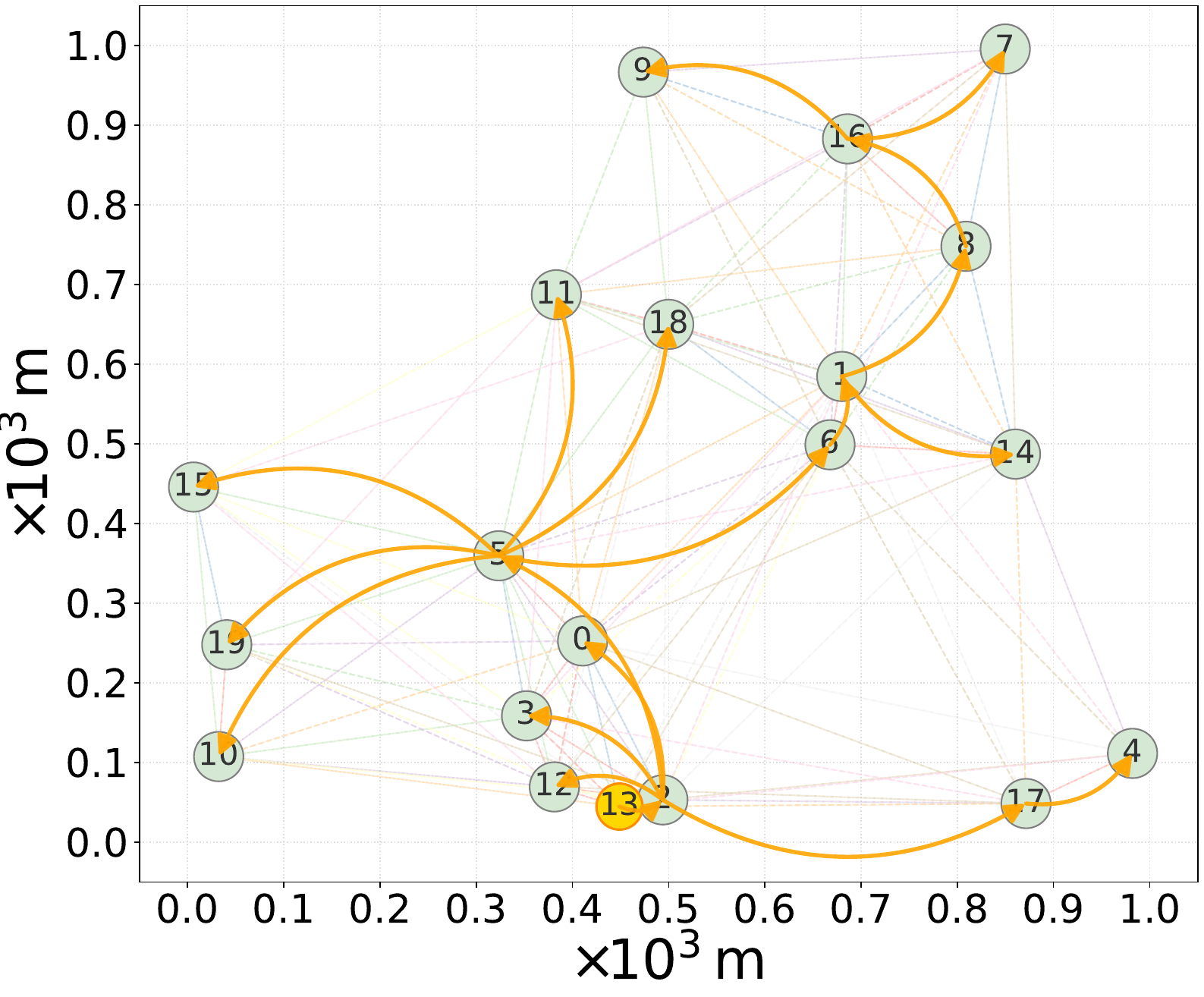}
            \caption*{$\mathcal{T}_m^*$}
        \end{minipage}

        \caption{An illustrative example of the model-transmission routing for client 13 with   $\theta = 0.1$ and $\Psi = 3$.} 
        \label{subfig:tree}
    \end{subfigure}
    \hfill
    \begin{subfigure}[t]{0.46\textwidth}
        \centering
        \includegraphics[width=\linewidth]{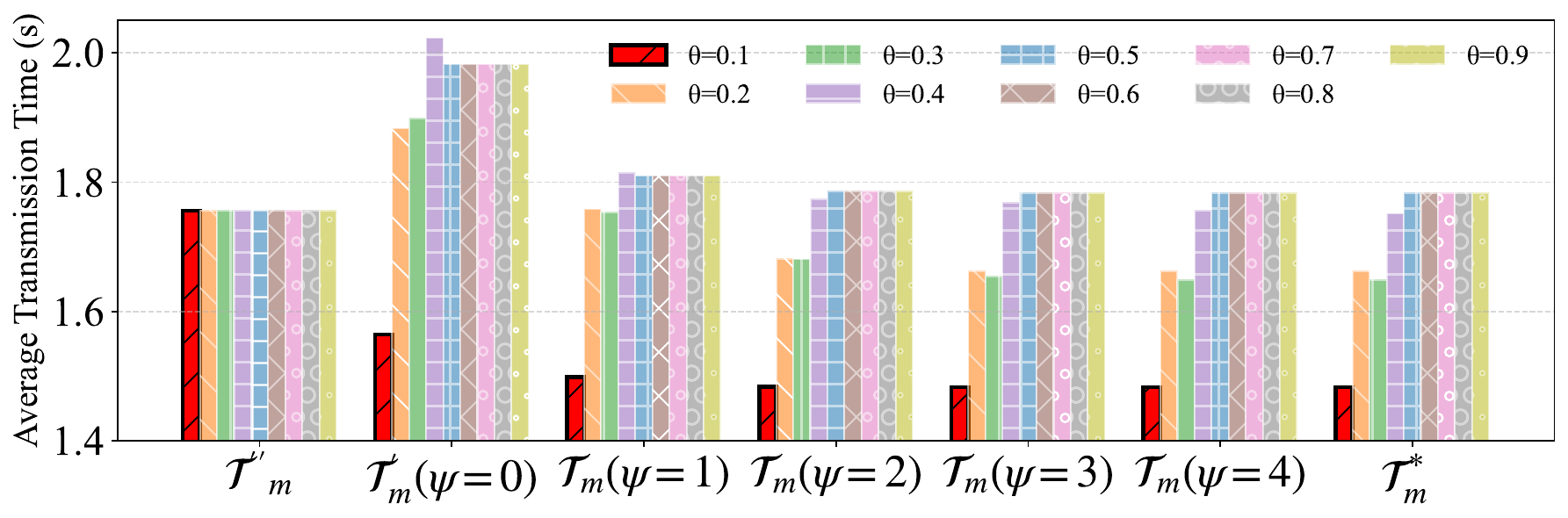}
        \caption{Average model transmission time under different values of $\theta$ and iteration numbers $\Psi$.}
        \label{subfig:parameters}
    \end{subfigure}

    \caption{Illustration of the proposed routing Algorithm \ref{alg:Simplified Optimal Routing} at different optimization stages and the impact of parameters $\theta$ and $\Psi$ on performance ($N = 20$). }\label{fig.alg2}
\end{figure}

\subsection{Enhanced Routing: Design for Addressing Real-Time Multi-Hop Transmission Bottlenecks}\label{sec:Bottlenecks}

Under strict real-time constraints, we introduce two mechanisms on top of P\_CLT to further alleviate bottlenecks in multi-hop communications:
 \paragraph{Congestion Avoidance Mechanism (CAM)} 
This mechanism avoids congestion at bandwidth-limited nodes through an adaptive decision-making process between \emph{model pruning} and \emph{path detouring}. Specifically, the system evaluates two alternative strategies: (i) directly traversing a congested node with an increased pruning ratio, and (ii) bypassing the congested node via a detour path. The strategy that yields a higher \emph{model retention rate} is selected, thereby ensuring stable and continuous model parameter transmission under given communication constraints.

\paragraph{Forwarding-Aware Priority Scheduling and Rerouting Strategy (FPSR)} Critical model parameters are transmitted with priority. When a node reaches its forwarding limit, lower-priority parameters are rerouted to meet real-time requirements while ensuring successful model transmission.

\section{ Numerical Results}\label{sec:experimental_design}
In this section, we evaluate the effectiveness of the proposed D-FL method with model pruning and multi-hop routing through simulations. All experiments use the implementation on GitHub\footnote{\url{[https://github.com/hxysobeautiful/D-FL.git]}}

\subsection{Experimental Setup}
Without loss of generality, we generate a network of $N = 20$ nodes following a random geometric graph model \cite{9895277}. The connectivity density of the edges in the network topology is denoted as $\rho$; in other words, the number of directly connected client pairs is $\rho \times \frac{N(N-1)}{2}$. By default, $\rho = 0.6$. All clients operate at a central radio frequency of $f_c = 2,500 \ \mathrm{MHz}$, with a bandwidth of $B = 30 \ \mathrm{MHz}$, and each client has a transmission power of $P = 20 \ \mathrm{dBm}$. The noise power spectral density  is $N_0 = -174 \ \mathrm{dBm/Hz}$. The maximum transmission delay is \( t_{\max} = 2 \, \text{s} \). The link-weight perception threshold of clients is set to $\theta = 0.1$, the number of iterations is set to $\Psi = 3$.
 Without affecting generality, the channel gain is defined as
\(
h_{\alpha,(i,j)}^2 = \left( \frac{\lambda}{4\pi d_{i,j} f_c} \right)^2
\),~\cite{rappaport2024wireless}
where $d_{i,j}$ (km) is the distance between neighboring clients $i$ and $j$, 
and $\lambda$ is a constant.
According to Shannon's theorem, the transmission rate of a direct link can be evaluated as
\(v_{i,j} = B \log_2 \left( 1 + \gamma_{i,j} \right),\)
where $\gamma_{i,j}$ is the signal-to-noise ratio (SNR) between neighboring clients $i$ and $j$, and is given by
\(
\gamma_{i,j} = \frac{h_{\alpha,(i,j)}^2 P}{N_0 B}.
\) 

\vspace{2mm}
We consider two image classification tasks: \textbf{F-CIFAR10 with ResNet-18} and \textbf{F-CIFAR100 with ResNet-34}.
The CIFAR-10 and CIFAR-100 datasets consist of 10 and 100 classes, respectively, with each class containing 500 training samples and 100 testing samples. To satisfy the IID assumption, the training data are randomly shuffled before being distributed to the clients. Regarding the model architectures, ResNet-18 consists of 17 convolutional layers and one fully-connected layer (totaling $K=11.69 \times 10^6$ parameters), while ResNet-34 comprises 33 convolutional layers and one fully-connected layer (totaling $K=21.8 \times 10^6$ parameters). For both tasks, each model parameter is represented as a 32-bit floating-point number (4 bytes). Local training is performed using the SGD optimizer with momentum. The key hyperparameters are kept consistent across both scenarios: the learning rate is set to $\mu = 0.1$, the batch size is 16, and each client performs 3 local training epochs per communication round. The entire training process consists of 2000 communication rounds to ensure convergence.

\vspace{2mm}
We compare the following two D-FL frameworks with different aggregation strategies:

$\bullet$ \textbf{P2P-based D-FL}~\cite{10304208}: Clients exchange and aggregate models only with single-hop neighbors. 

$\bullet$ \textbf{Multi-hop routing (This study)}: Clients exchange and aggregate models via multi-hop routing.

\vspace{2mm}
To validate the effectiveness of the proposed pruning method, we use Algorithm~\ref{alg:Simplified Optimal Routing} to determine the model transmission path for each client (denoted as $\mathcal{T}_m^*$ and $\mathcal{I}_m^*$), and compare our approach with two commonly used pruning strategies:

\begin{itemize}
  \item \textbf{Optimal Pruning (Proposed)}:Both the model retention rate $r_m^*$ and routing are optimized jointly as described in Section \ref{sec:pruning and routing}.

  \item \textbf{Fixed Pruning} ~\cite{9713700}: A fixed model retention rate is preset. The performance is evaluated under three retention rates: $r_m = 0.60$, $0.85$, and $0.95$.
  
  \item \textbf{No Pruning} ~\cite{10965802}: The full model is retained, i.e., $r_m = 1$.
\end{itemize}

To more comprehensively evaluate the proposed routing algorithm (P\_CLT) under the optimal pruning strategy, we introduce a stricter baseline in which each client’s model retention rate is adaptively determined based on its routing path and link rate, while conventional routing protocols are employed.

Specifically, we consider the following representative routing schemes:


\begin{itemize}
\item \textbf{Flood Fill} \cite{9734208}: Broadcasts the model to all neighboring clients.

\item \textbf{Bellman} \cite{10760521}: Uses the Bellman-Ford algorithm to construct a shortestinging-path tree for model transmission.

\item \textbf{Kruskal} \cite{10726020}: Connects clients via a minimum spanning tree to minimize total link weight.
\end{itemize}

\noindent Additionally, we design five variants of the P\_CLT algorithm for ablation studies:

\begin{itemize}
\item \textbf{NP\_CLT}: Without node priority design.

\item \textbf{P\_NCLT}: Without link threshold design.

\item \textbf{NP\_NCLT}: Without both node priority and link threshold designs.

\item \textbf{P\_CLT with \eqref{eq:condition_theta}}: Only uses condition \eqref{eq:condition_theta}.

\item \textbf{P\_CLT with \eqref{eq:condition_w}}: Only uses condition \eqref{eq:condition_w}.
\end{itemize}

\begin{figure}[ht]
    \centering
    \includegraphics[width=0.98\linewidth]{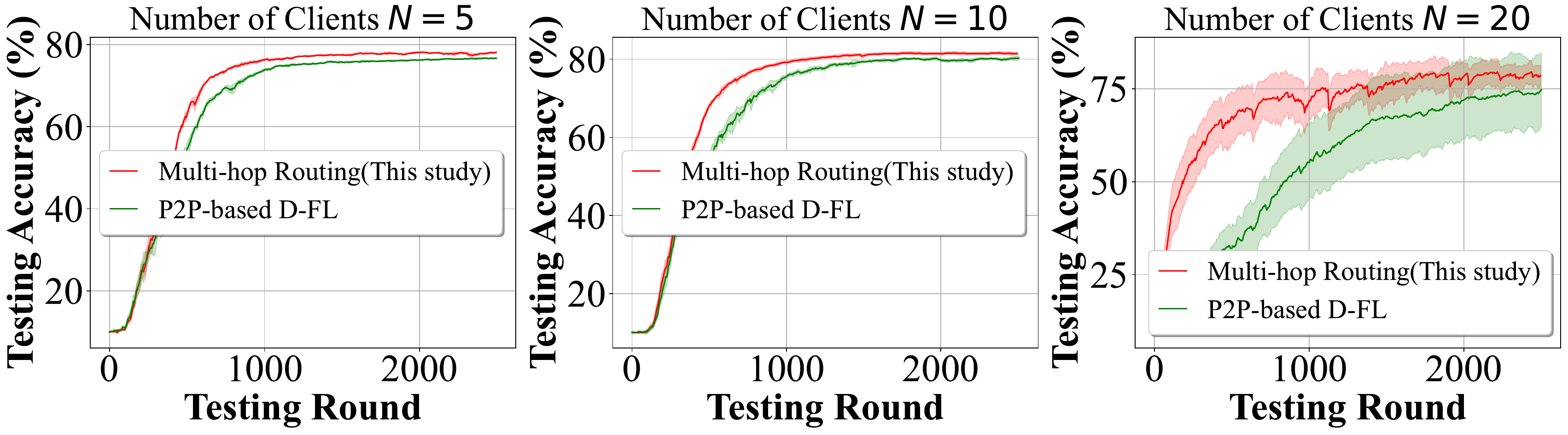} 
    \caption{Testing accuracy comparison of multi-hop and P2P aggregation strategies on \textbf{F-CIFAR10 \& ResNet-18} across different numbers of clients.}
    \label{fig:combined}
\end{figure}

To evaluate the performance of the proposed \textbf{Enhanced Routing} scheme ( see Section \ref{sec:Bottlenecks}) under communication-constrained scenarios, experiments are conducted in a multi-hop network consisting of bandwidth-limited nodes ( $\mathcal{V}_{\mathrm{bw}}=\{0,17\}$ ) and forwarding-capacity-limited nodes ( $\mathcal{V}_{\mathrm{obs}}=\{2,5,16\}$ ). Each forwarding-capacity-limited node participates in at most six model forwarding rounds, while each bandwidth-limited node can transmit up to $0.8 \times K$ model parameters per round.
The proposed scheme is compared with three baselines: 
\begin{itemize}

  \item \textbf{Bandwidth-Only}: optimizes only the bandwidth-limited nodes (see Section~\ref{sec:Bottlenecks}(a)).
    \item \textbf{Forwarding-Only}: optimizes only the forwarding-capacity-limited nodes (see Section~\ref{sec:Bottlenecks}(b)).
  
    \item \textbf{Baseline (P\_CLT)}: no optimization is applied.
\end{itemize}

\begin{figure}[ht]
  \centering
  \begin{subfigure}[b]{0.11\textwidth}
    \centering
    \includegraphics[height=2cm,width=\textwidth]{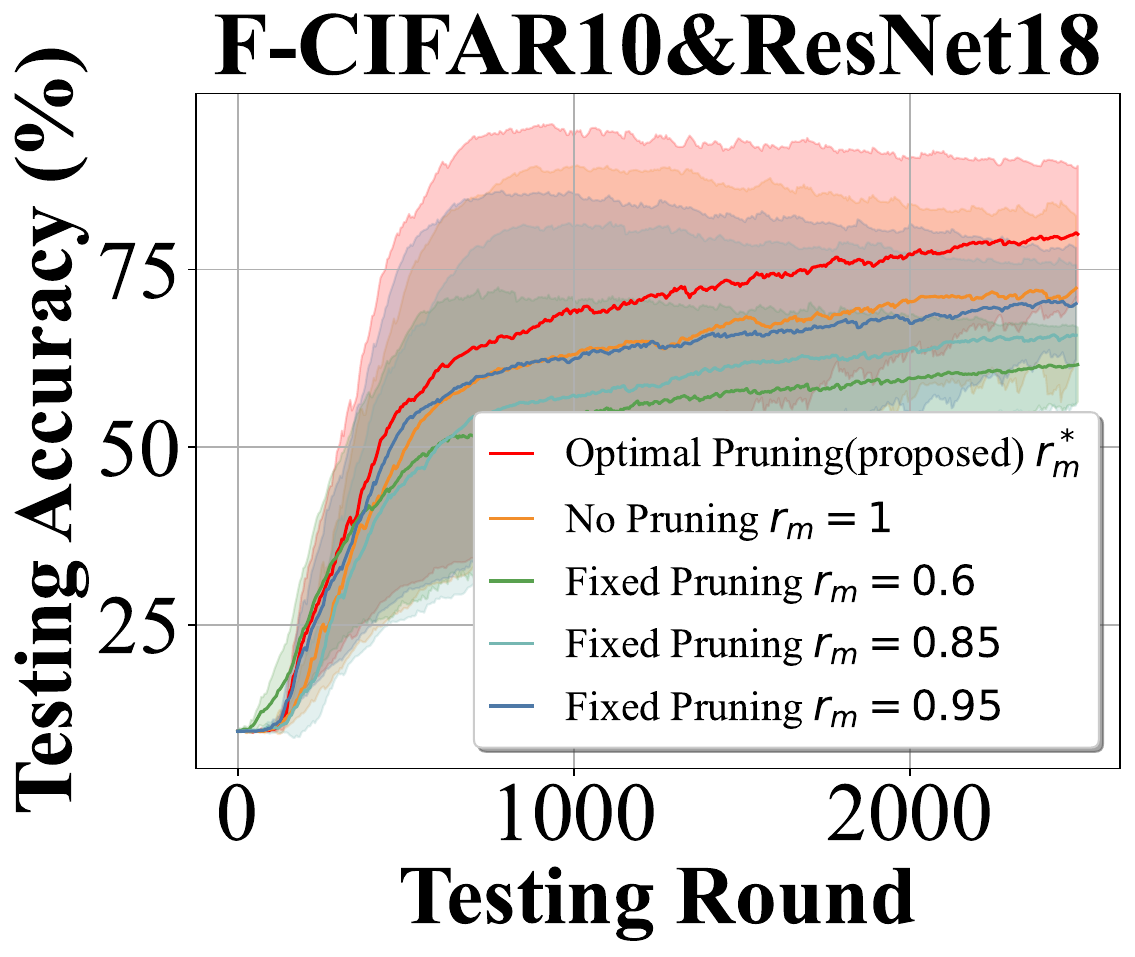}
    \caption{}
  \end{subfigure}
  \hfill
  \begin{subfigure}[b]{0.11\textwidth}
    \centering
    \includegraphics[height=2cm,width=\textwidth]{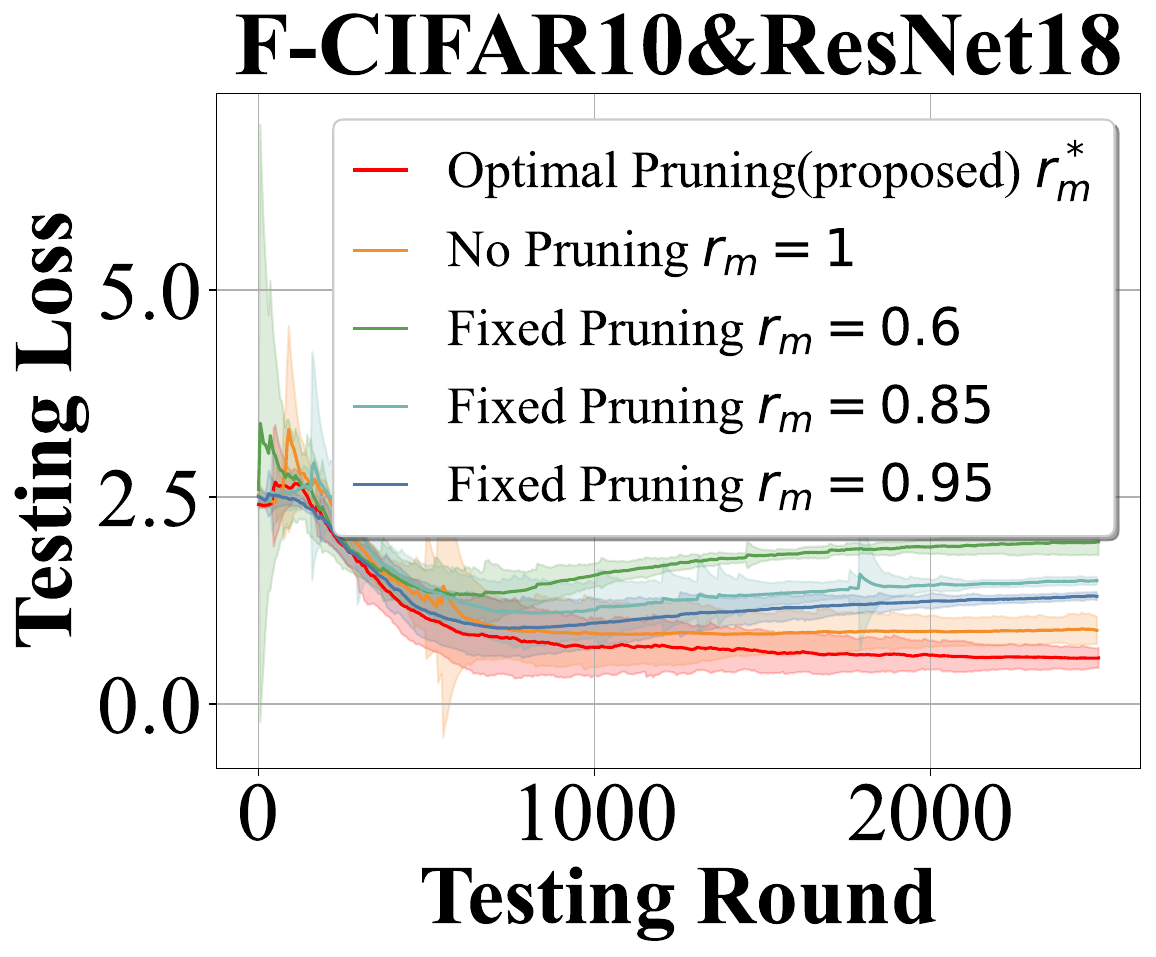}
    \caption{}
  \end{subfigure}
  \hfill
  \begin{subfigure}[b]{0.11\textwidth}
    \centering
    \includegraphics[height=2cm,width=\textwidth]{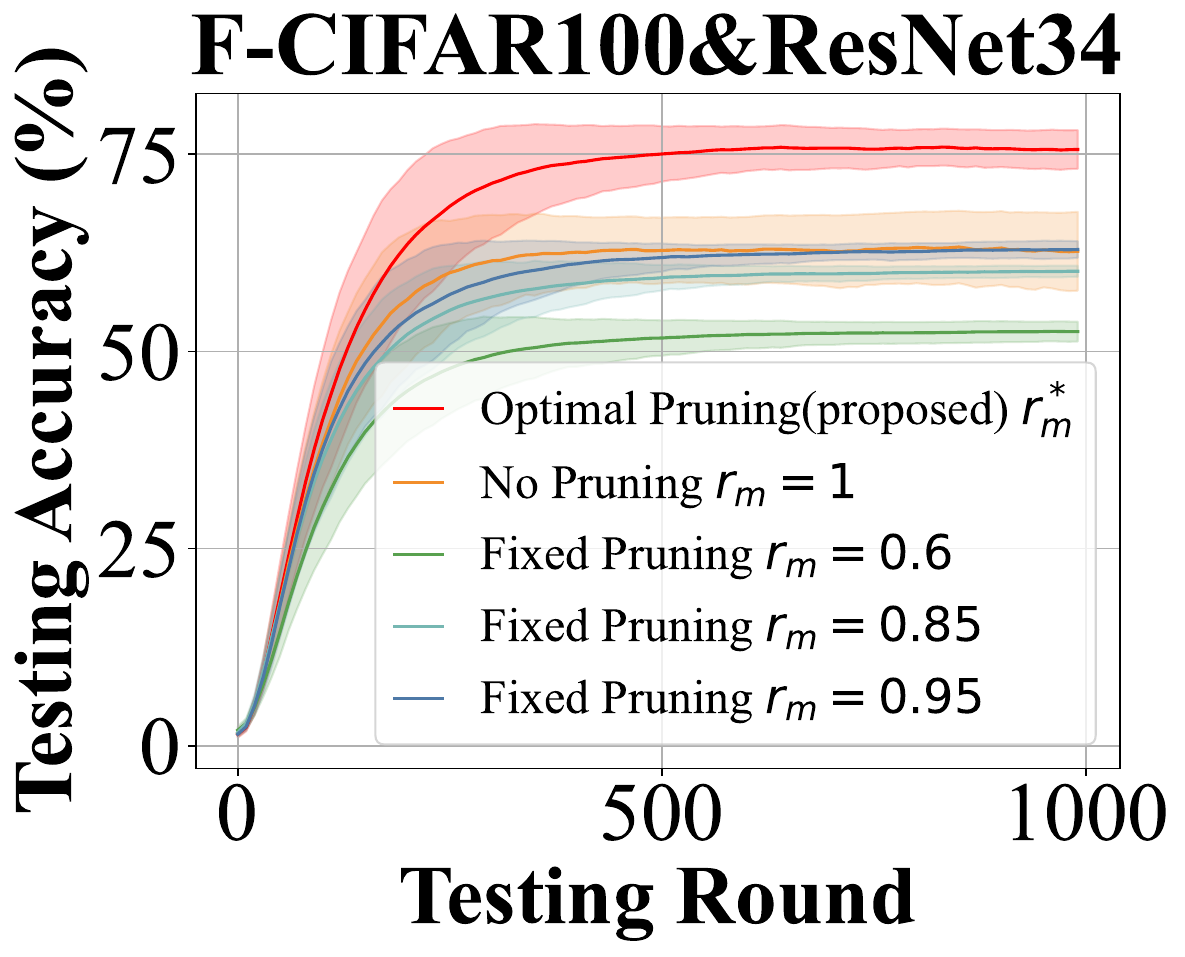}
    \caption{}
  \end{subfigure}
  \hfill
  \begin{subfigure}[b]{0.11\textwidth}
    \centering
    \includegraphics[height=2cm,width=\textwidth]{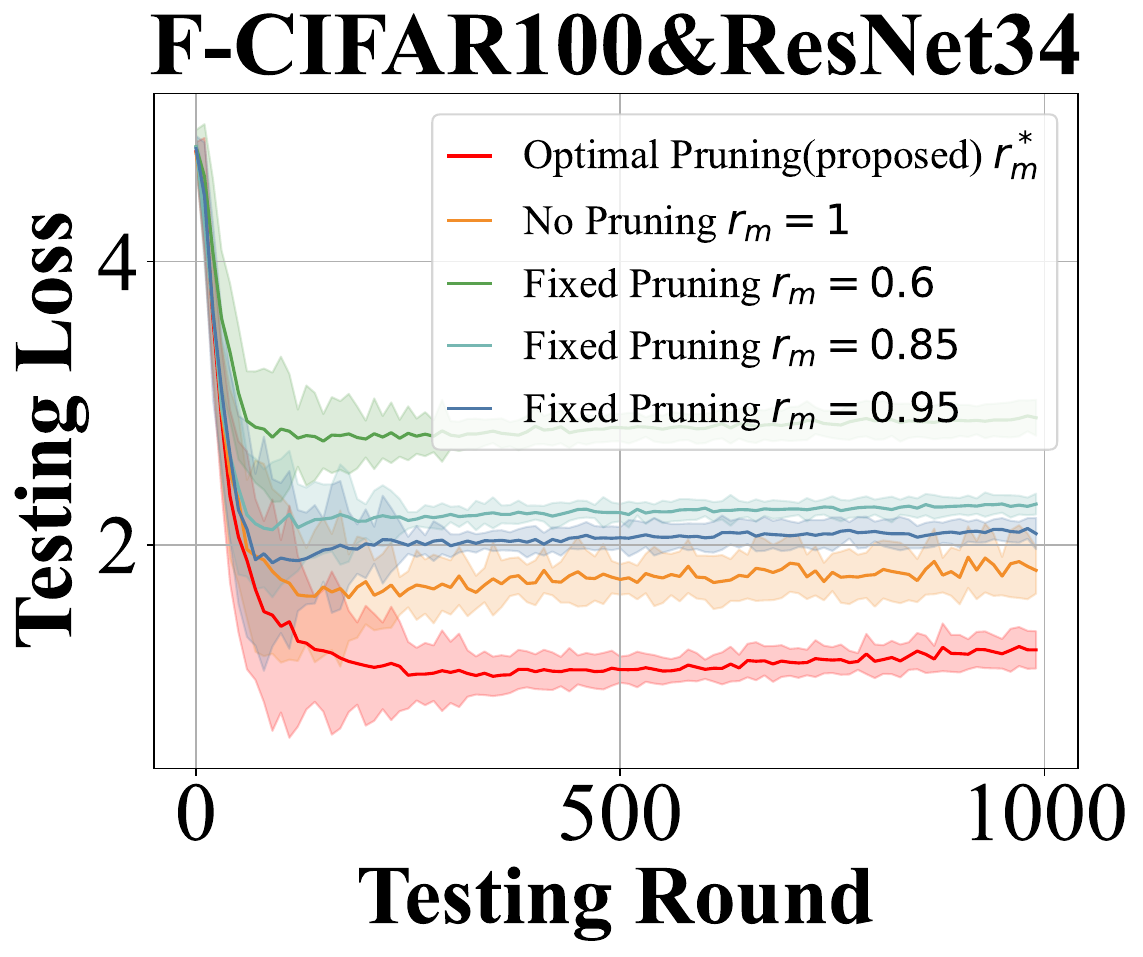}
    \caption{}
  \end{subfigure}

  \begin{subfigure}[h]{0.5\textwidth}
    \includegraphics[width=\linewidth,valign=t]{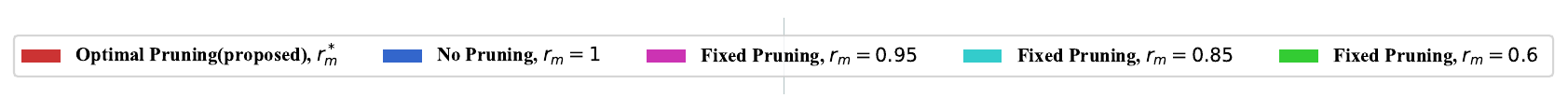}
    \label{subfigure1}
  \end{subfigure}
  \\
  \begin{subfigure}[b]{0.5\textwidth}  
    \includegraphics[width=\textwidth]{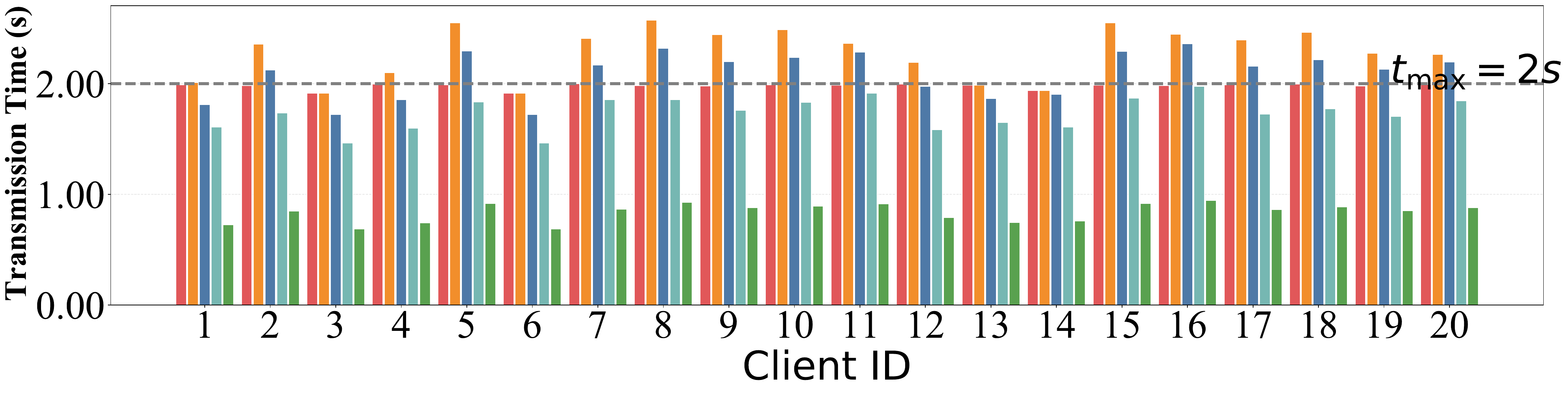}
    \caption{}
  \end{subfigure}

   \caption{(a)-(b) Testing accuracy and loss of  different pruning schemes on the F-CIFAR10 \& ResNet-18;  
(c)-(d)Testing accuracy and loss of different pruning schemes on the F-CIFAR100 \& ResNet-34;  
(e) Distribution of per-client model transmission time under different pruning schemes on F-CIFAR10 \& ResNet-18.
}
  \label{fig.pruning}
\end{figure}

\subsection{Evaluation of D-FL Model Aggregation Strategies}

Fig.~\ref{fig:combined} compares the testing accuracy of different aggregation strategies under varying numbers of clients on the F-CIFAR10 \& ResNet-18. The results indicate that the multi-hop aggregation strategy significantly outperforms the P2P aggregation strategy. This is because P2P allows information exchange only between directly connected clients, which can lead to model bias for clients with low connectivity. In contrast, the proposed multi-hop routing mechanism enables communication through predefined multi-hop paths, effectively expanding the model exchange range, enhancing cross-region information sharing, and mitigating local errors caused by sparse connectivity.

\subsection{Evaluation of the Proposed  Model Pruning Scheme}

\begin{figure}[t]  
  \centering
 
  \begin{subfigure}[b]{0.11\textwidth}
    \centering
    \includegraphics[height=2cm,width=\textwidth]{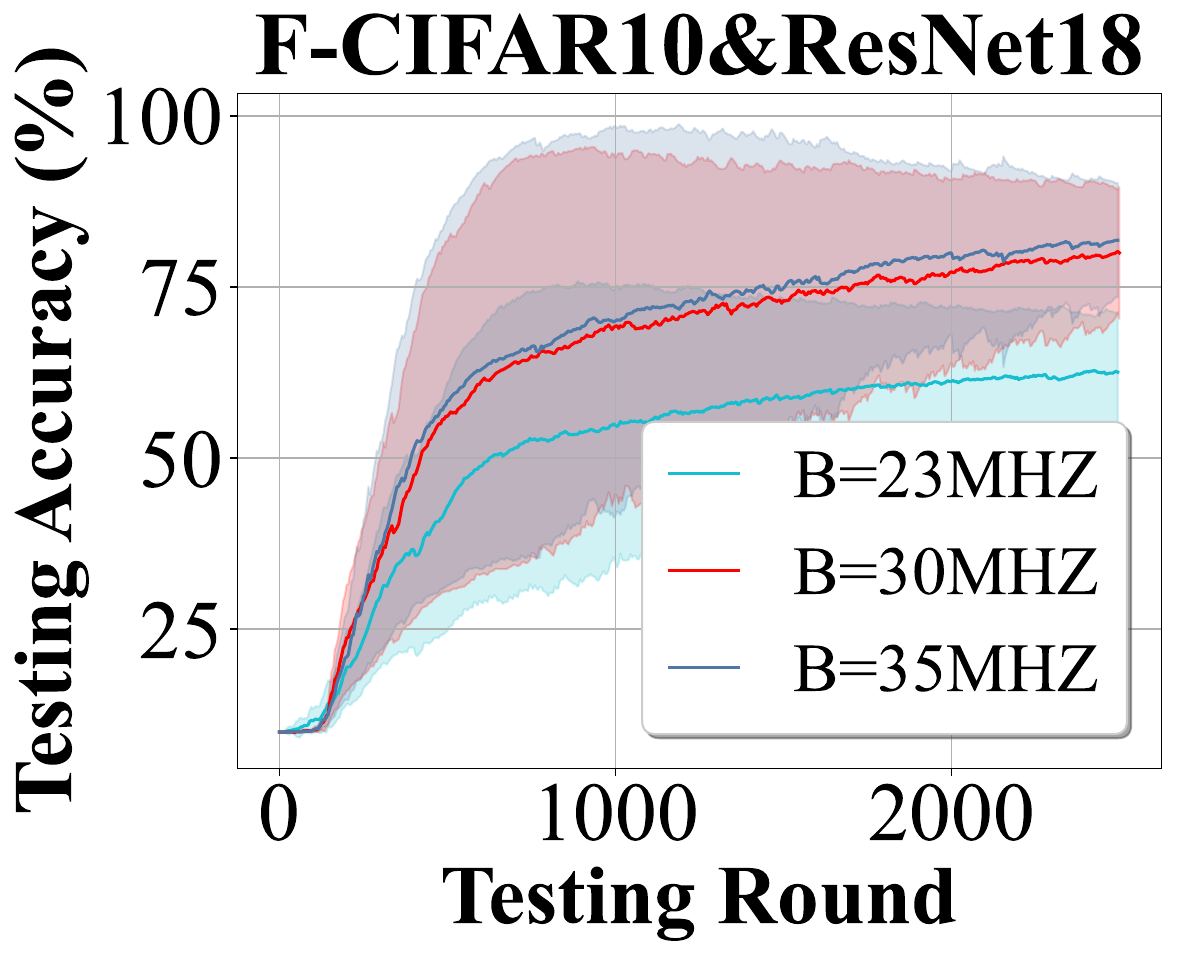}
    \caption{}
  \end{subfigure}
  \begin{subfigure}[b]{0.11\textwidth}
    \centering
    \includegraphics[height=2cm,width=\textwidth]{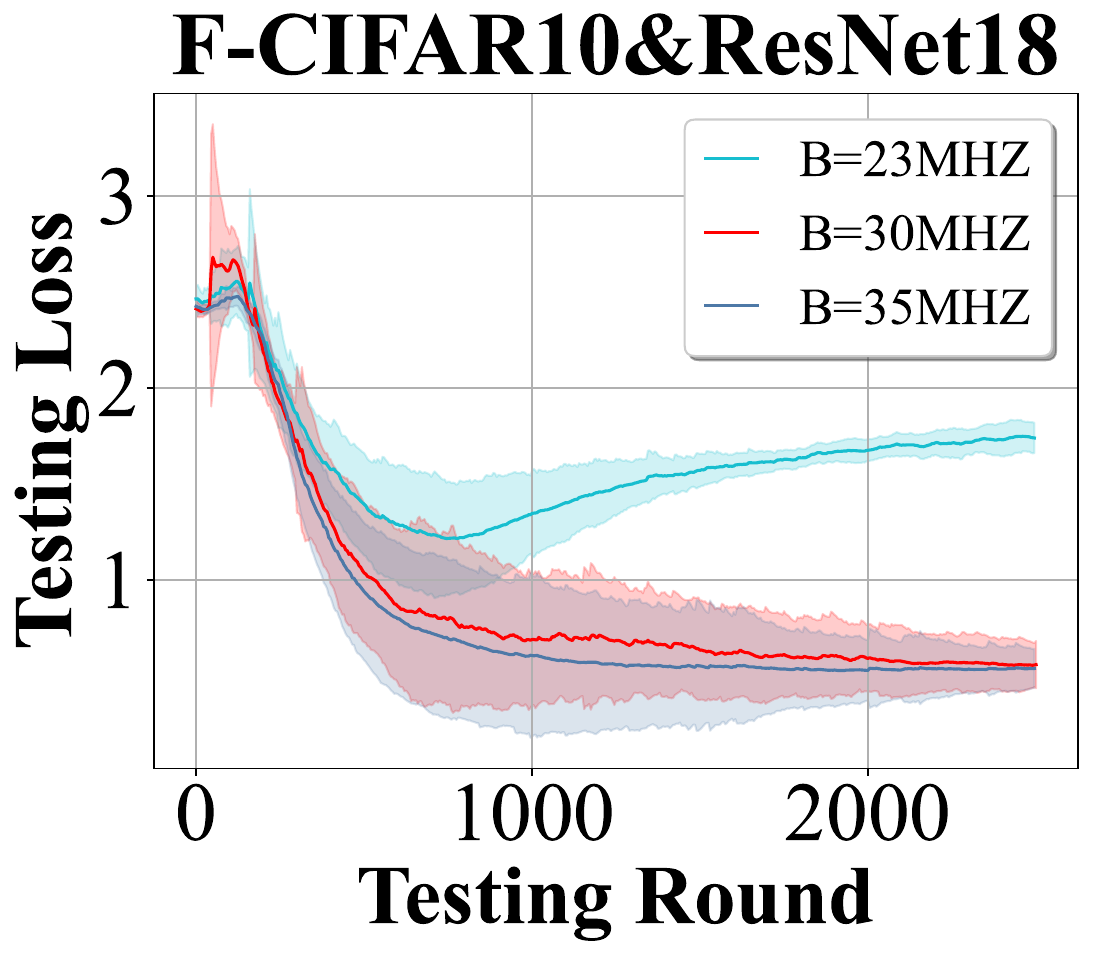}
\caption{}
  \end{subfigure}
  \begin{subfigure}[b]{0.11\textwidth}
    \centering
    \includegraphics[height=2cm,width=\textwidth]{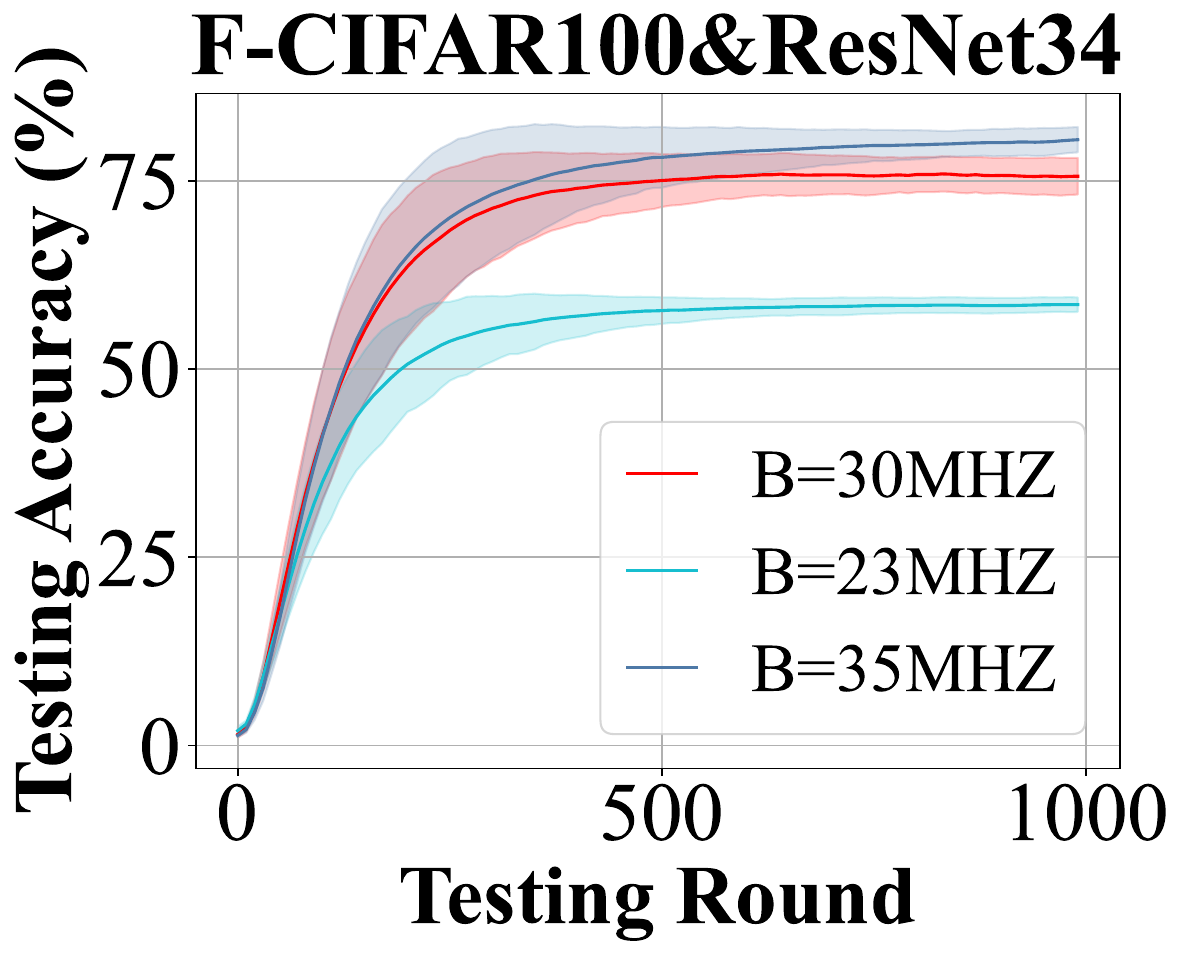}
  \caption{}
  \end{subfigure}
  \begin{subfigure}[b]{0.11\textwidth}
    \centering
    \includegraphics[height=2cm,width=\textwidth]{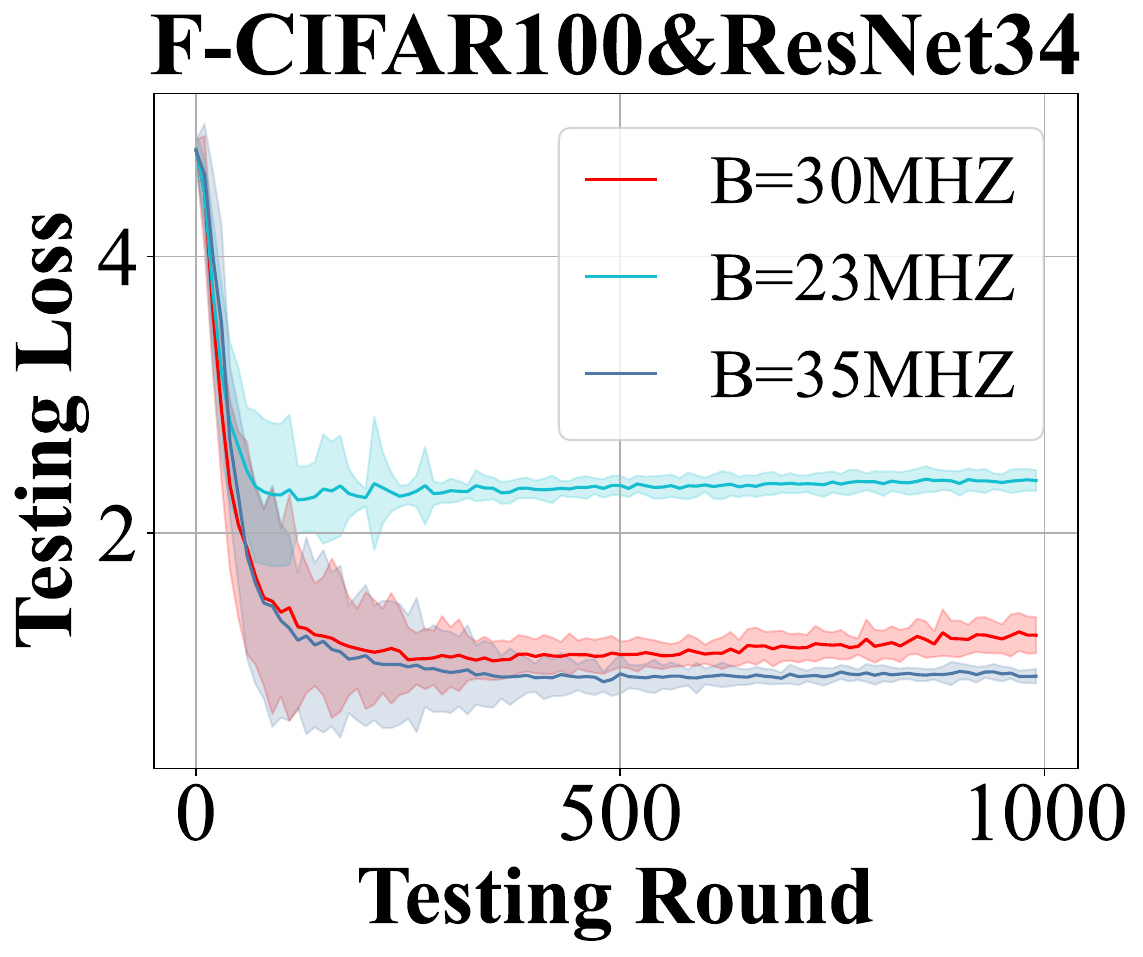}
\caption{}
  \end{subfigure}
    \begin{subfigure}[h]{0.46\textwidth}
    \includegraphics[width=\linewidth,valign=t]{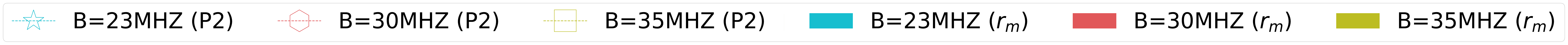}
\end{subfigure}\\
\begin{subfigure}[b]{0.46\textwidth} 
\includegraphics[width=\textwidth]{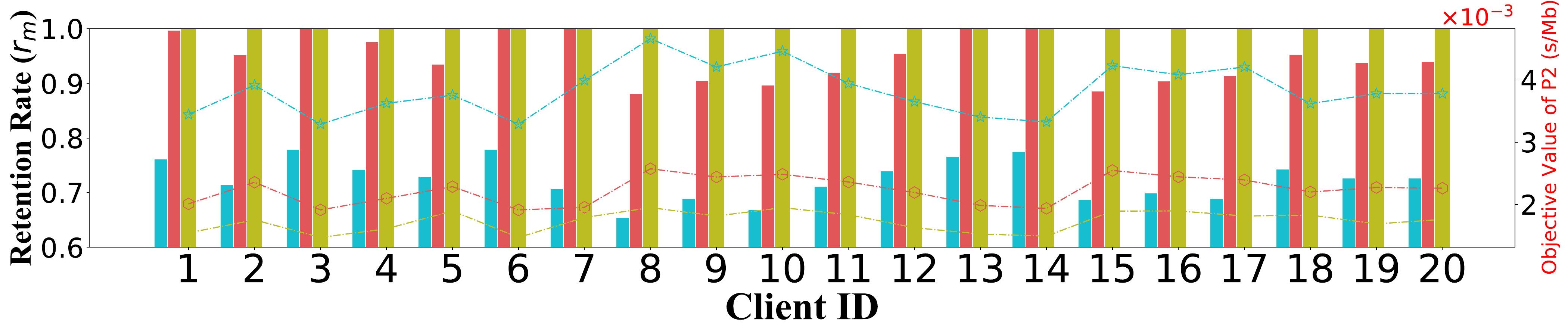} 
  \caption{}
  \end{subfigure}
\caption{(a)--(b) Testing accuracy and loss of the proposed joint pruning and routing method on F-CIFAR10 \& ResNet-18 under varying bandwidths;  
(c)--(d) Testing accuracy and loss on F-CIFAR100 \& ResNet-34 under varying bandwidths;  
(e) Distribution of model retention rates and cumulative maximum link weights per hop across 20 clients under varying bandwidths.}
\label{fig.B}
\end{figure}

\begin{figure}[ht] 
  \centering
  \begin{subfigure}[b]{0.11\textwidth}
    \centering
    \includegraphics[height=2cm,width=\textwidth]{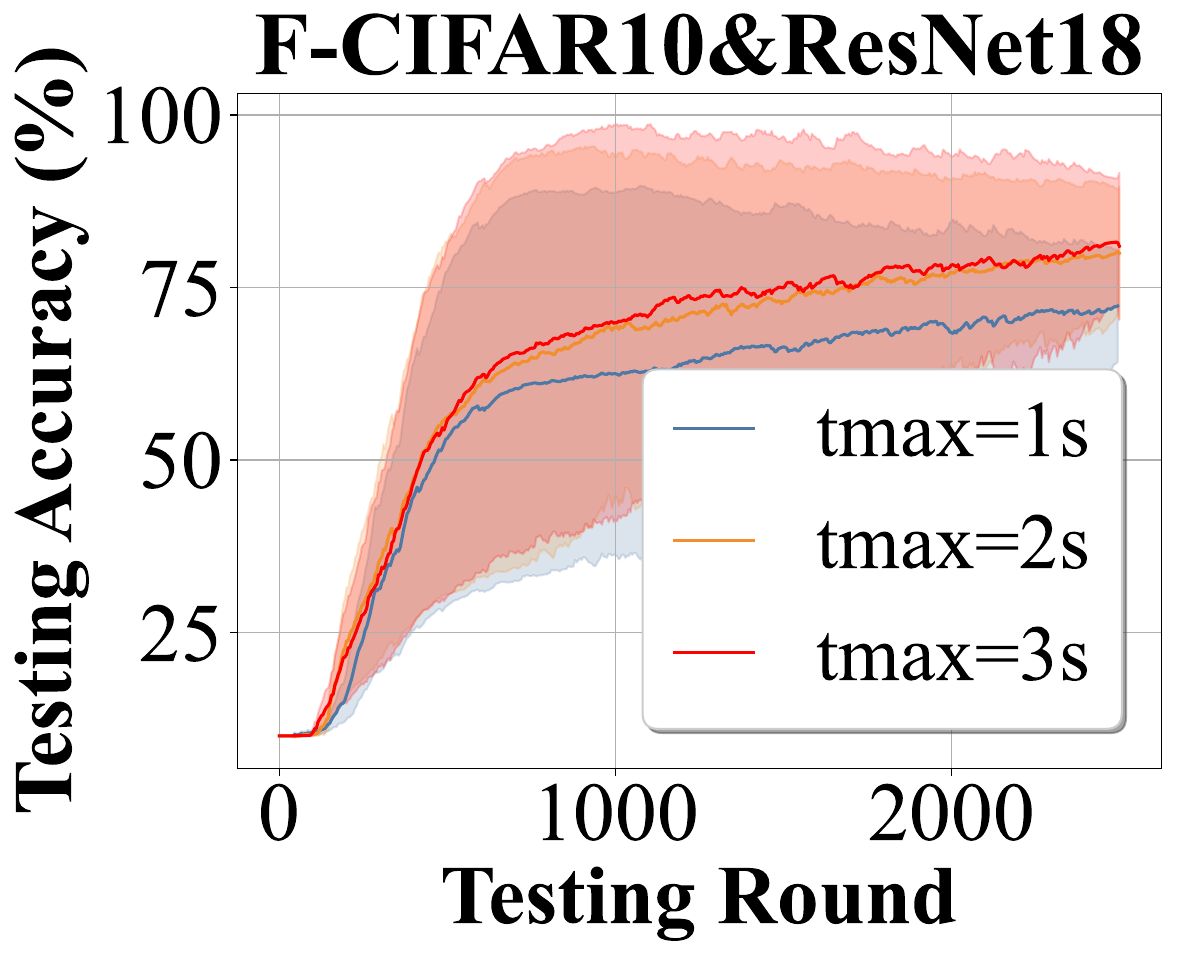}
    \caption{}
  \end{subfigure}
  \begin{subfigure}[b]{0.11\textwidth}
    \centering
    \includegraphics[height=2cm,width=\textwidth]{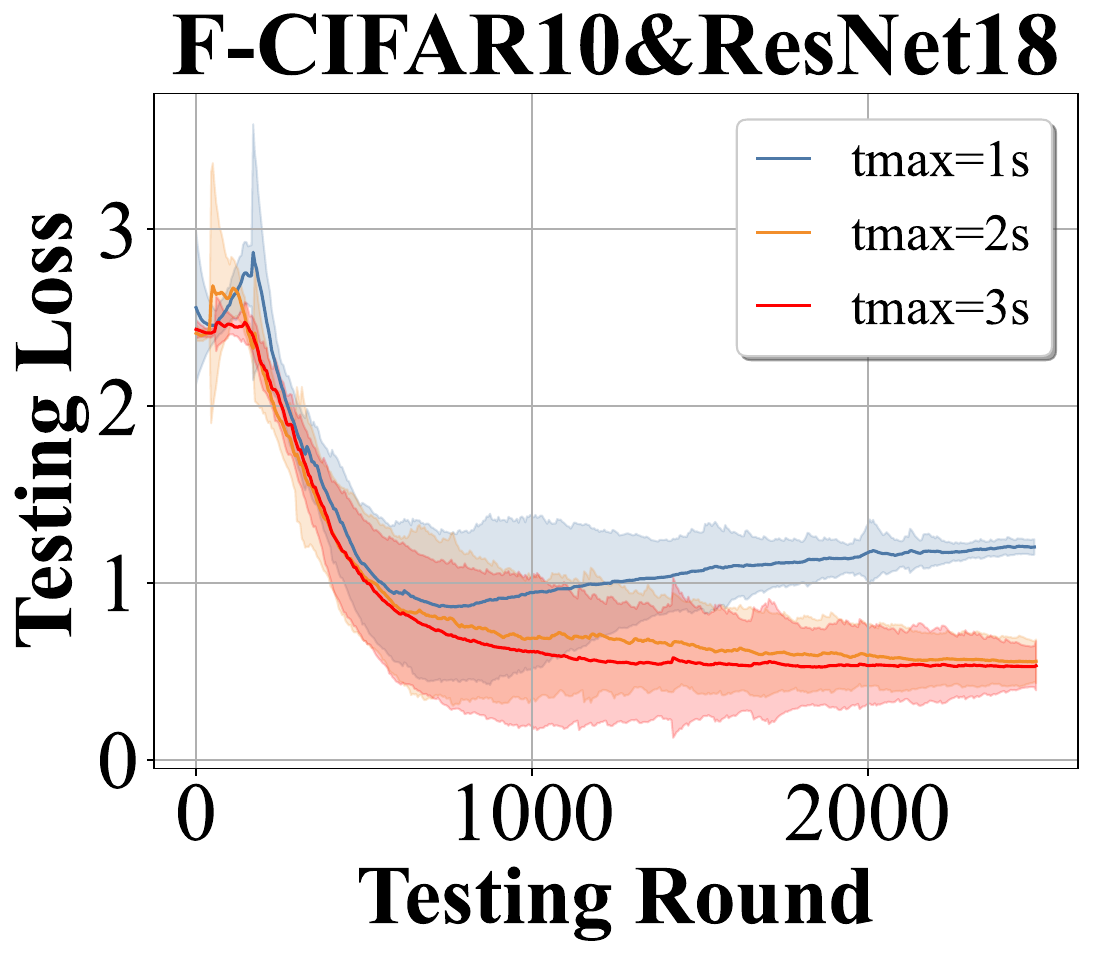}
    \caption{}
  \end{subfigure}
  \begin{subfigure}[b]{0.11\textwidth}
    \centering
    \includegraphics[height=2cm,width=\textwidth]{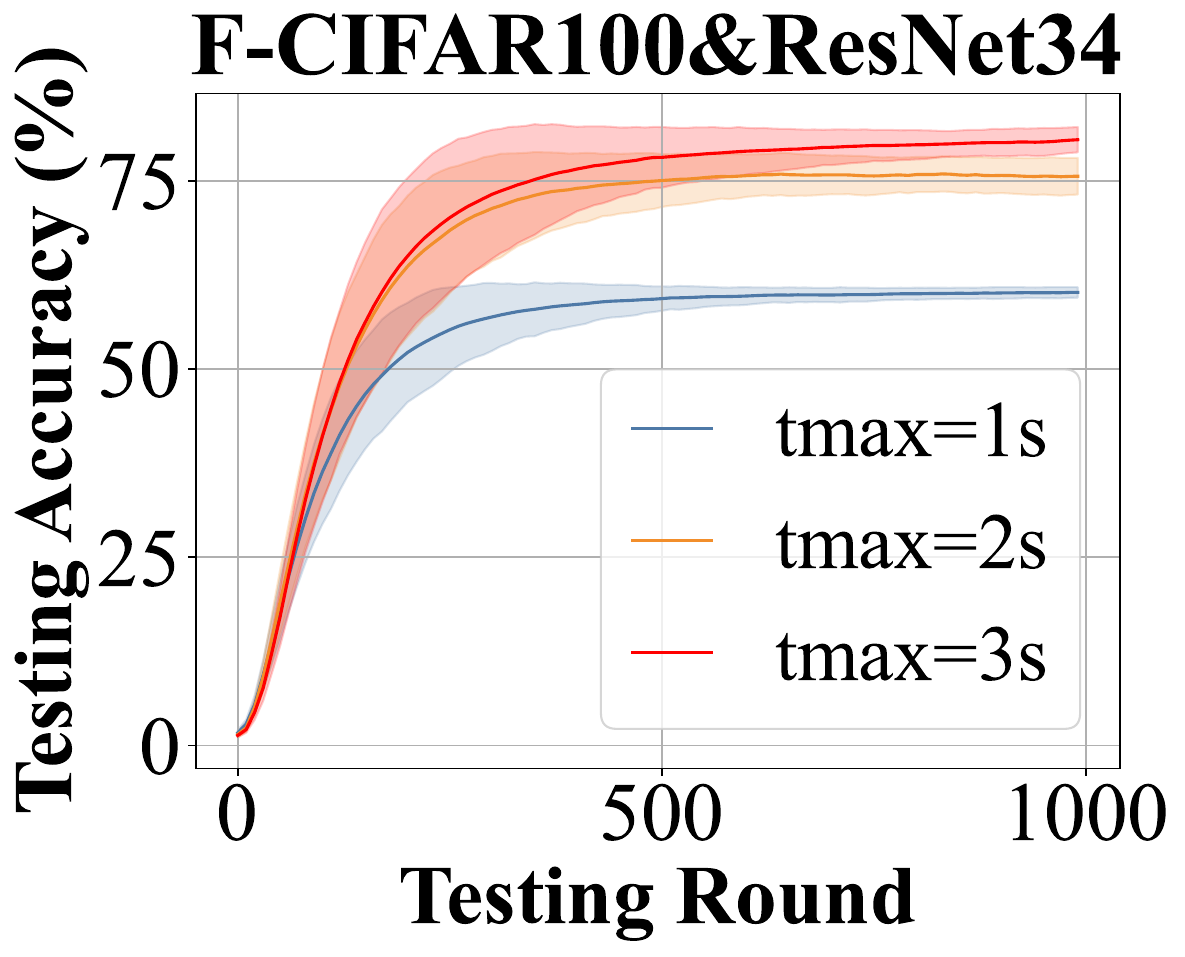}
    \caption{}
  \end{subfigure}
  \begin{subfigure}[b]{0.11\textwidth}
    \centering
    \includegraphics[height=2cm,width=\textwidth]{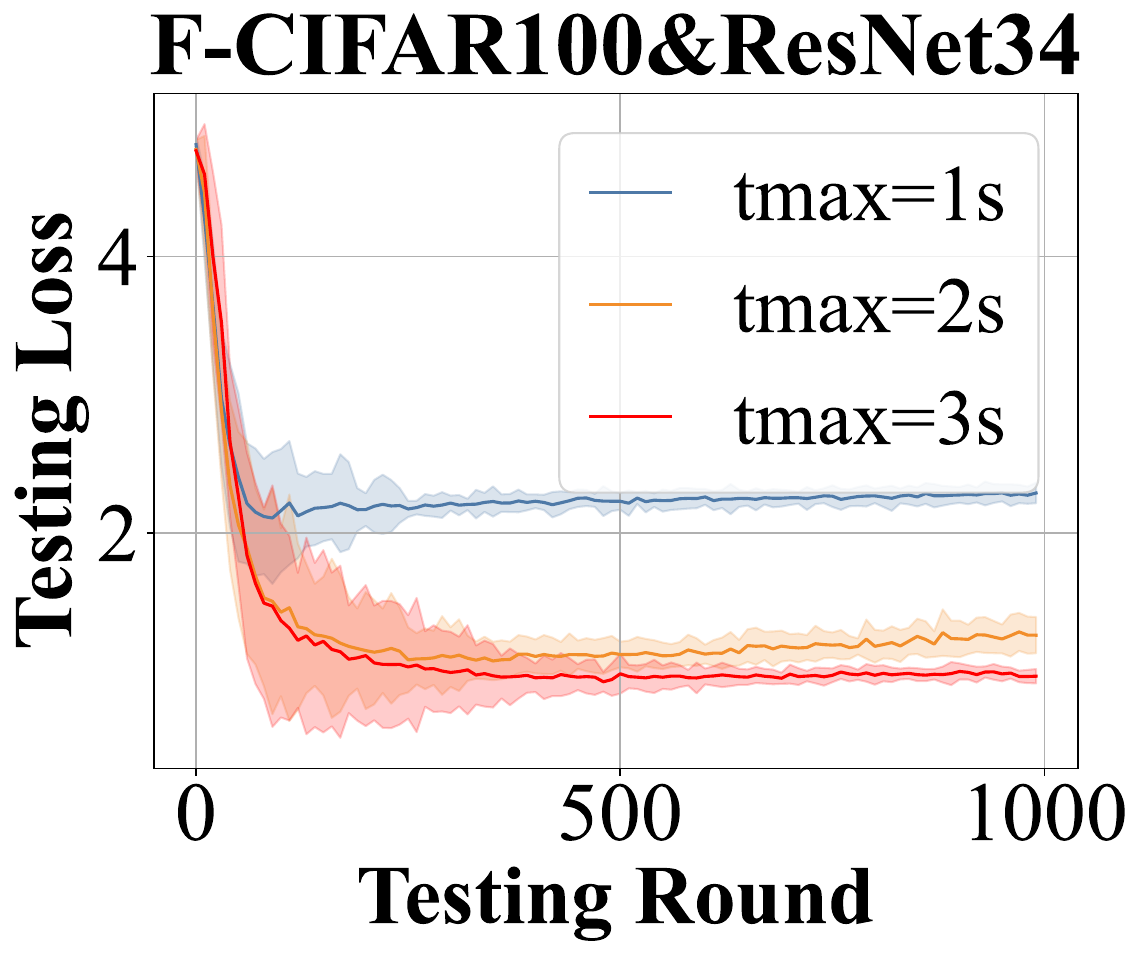}
    \caption{}
  \end{subfigure}

      \begin{subfigure}[h]{0.5\textwidth}
    \includegraphics[width=\linewidth,valign=t]{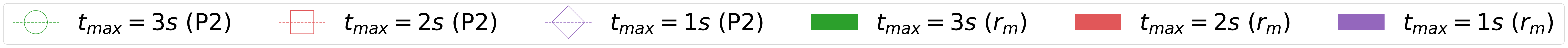}
    \label{subfigure1}
\end{subfigure}
\begin{subfigure}[b]{0.5\textwidth} 
    \centering
    \includegraphics[width=\textwidth]{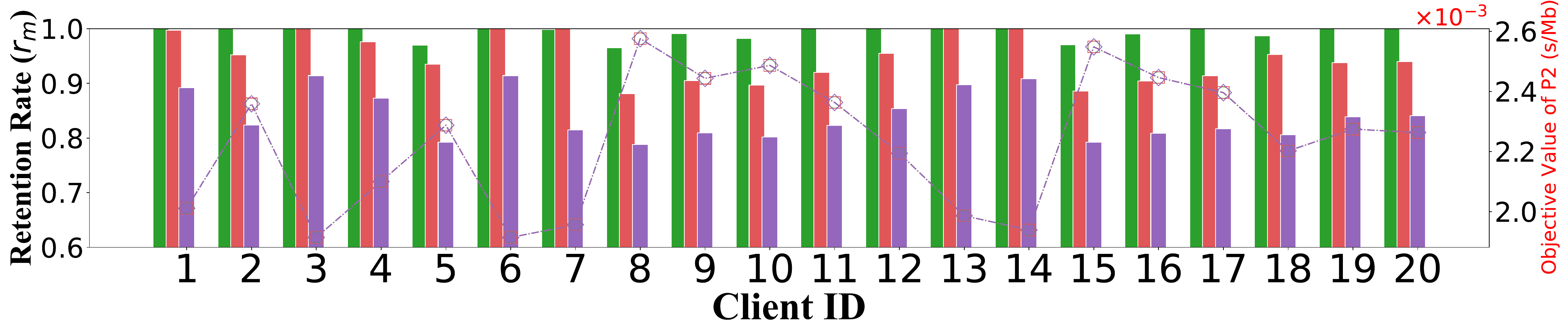}  
    \caption{}
  \end{subfigure}

\caption{(a)--(b) Testing accuracy and loss of the proposed joint pruning and routing method on F-CIFAR10 \& ResNet-18 under different latency thresholds;  
(c)--(d) Testing accuracy and loss on F-CIFAR100 \& ResNet-34 under different latency thresholds;  
(e) Distribution of model retention rates and cumulative maximum link weights per hop across 20 clients under different latency thresholds.}

 \label{fig.FL_T}
\end{figure}

\begin{figure}[t]
    \centering
    \begin{subfigure}[b]{0.32\linewidth}
        \centering
        \includegraphics[width=\linewidth]{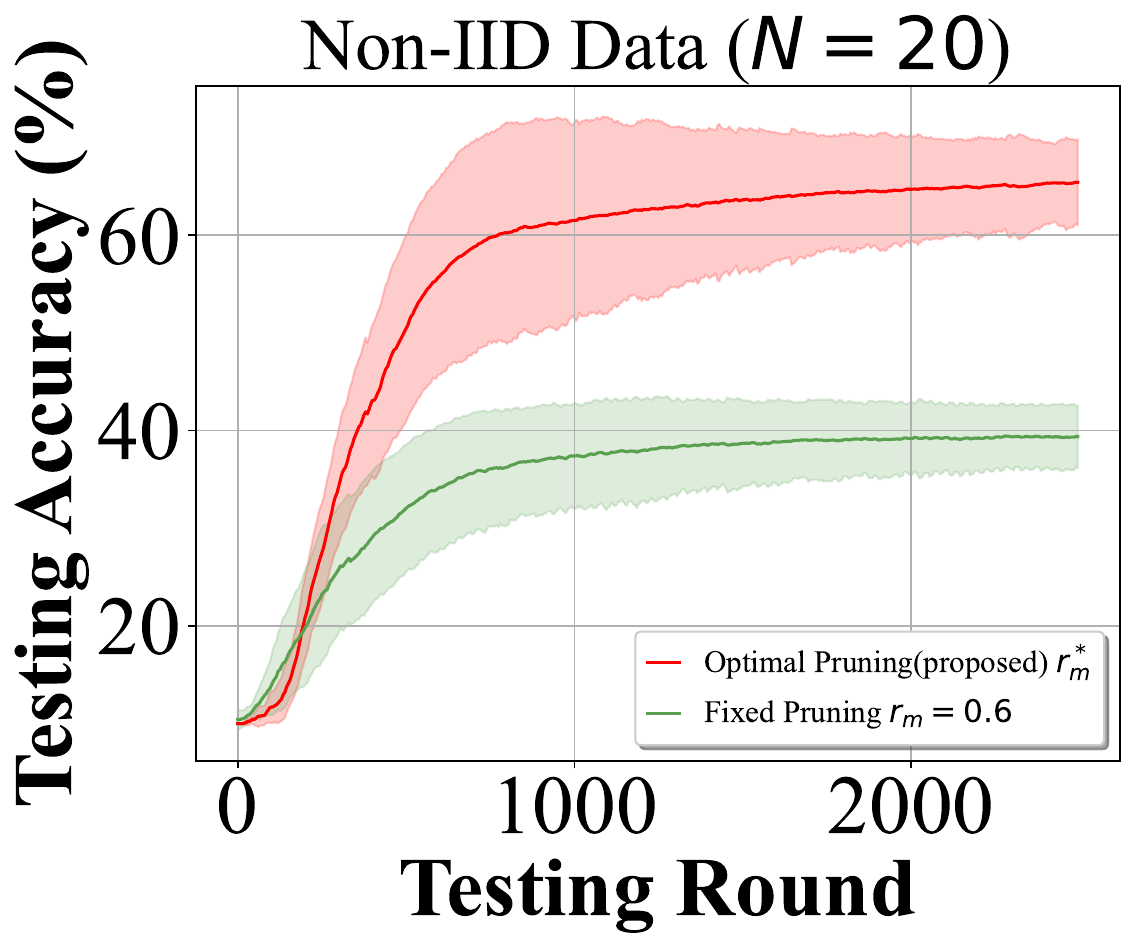}
        \caption{}
    \end{subfigure}
    \hfill
    \begin{subfigure}[b]{0.32\linewidth}
        \centering
        \includegraphics[width=\linewidth]{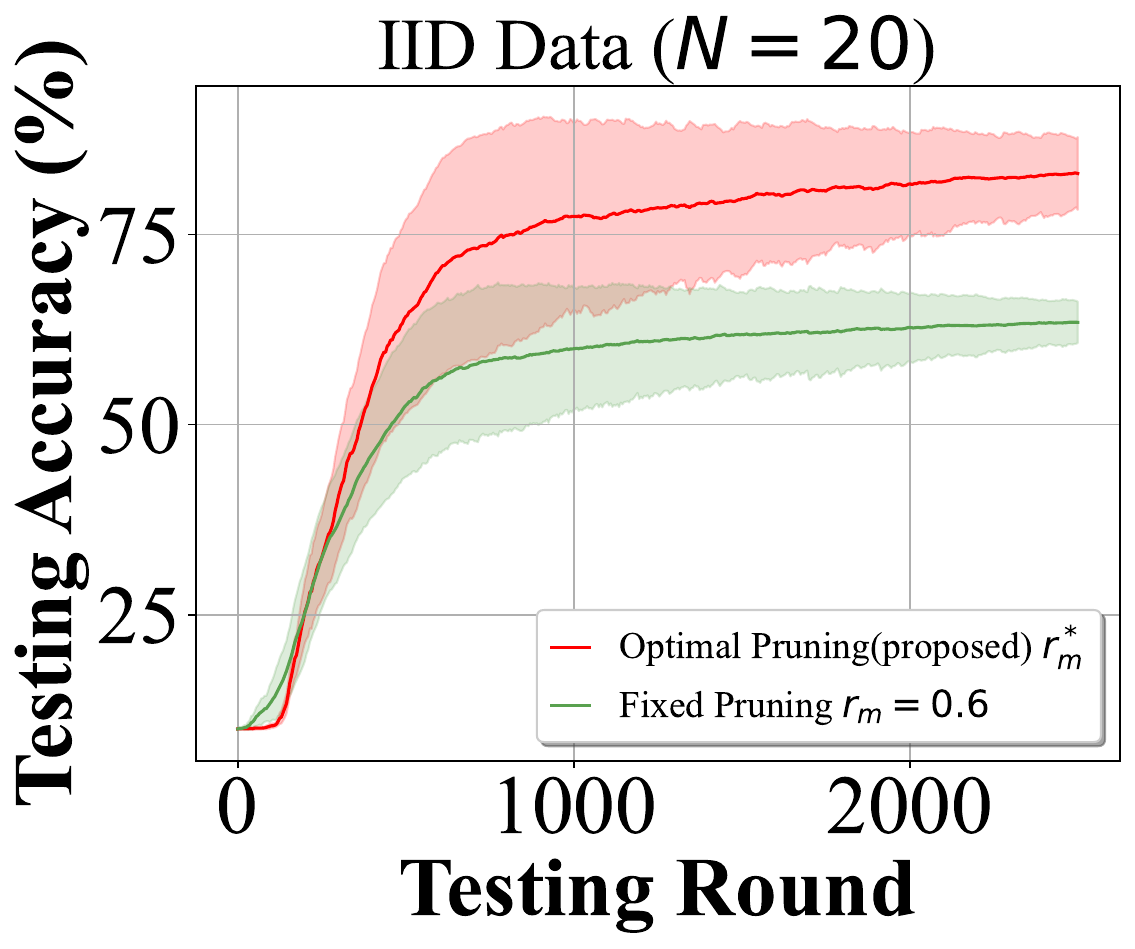}
        \caption{}
    \end{subfigure}
    \hfill
    \begin{subfigure}[b]{0.32\linewidth}
        \centering
        \includegraphics[width=\linewidth]{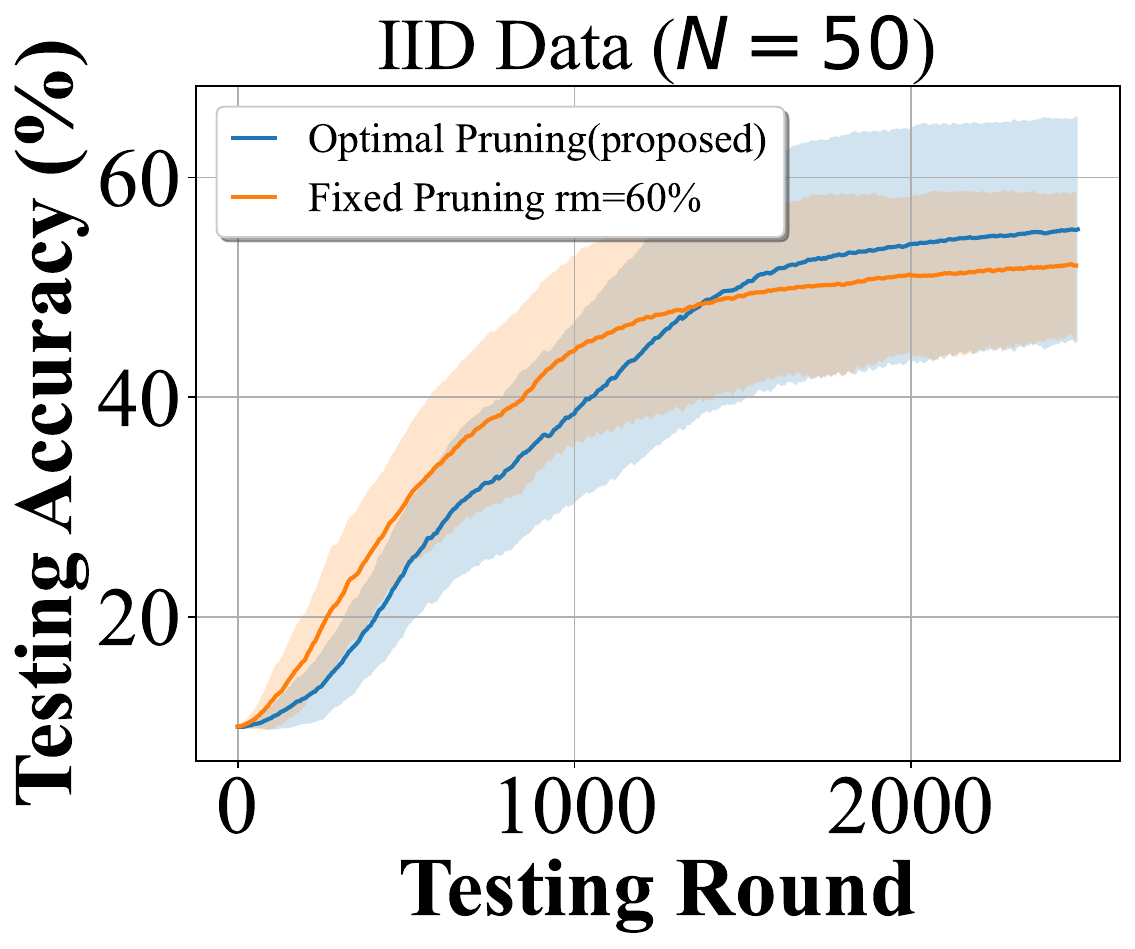}
        \caption{}
    \end{subfigure}

    \caption{Comparison of the different pruning strategies in terms of test accuracy of ResNet18 \& F-CIFAR10 under different client numbers and data distributions: (a) Non-IID with 20 clients; (b) IID with 20 clients; (b) IID with 50 clients
    }
    \label{fig:FL_IID}
\end{figure}

Figs.~\ref{fig.pruning}(a)--(b) and \ref{fig.pruning}(c)--(d) illustrate the testing accuracy and loss under different pruning schemes on the F-CIFAR10 \& ResNet-18 and the F-CIFAR100 \& ResNet-34, respectively. Fig.~\ref{fig.pruning}(e) shows the time required for each client to complete model transmission tasks on F-CIFAR10 with ResNet-18. The experimental results demonstrate that the proposed optimal pruning scheme consistently outperforms other pruning schemes in terms of testing accuracy.  Compared with the no-pruning scheme, the optimal pruning scheme improves testing accuracy by approximately 12\% and reduces the average model transmission time per client by about 27.8\%. This is because the optimal pruning strategy trims the model based on the maximum number of parameters that can be transmitted along the model's communication path, allowing more parameters to be transmitted within the time constraint and effectively improving model performance. In contrast, a high retention rate or no pruning requires transmitting more model parameters, which can easily lead to communication timeouts and missed model aggregation, while a too low retention rate introduces excessive information loss, degrading model performance.

Figs.~\ref{fig.B}(a)--(b) and ~\ref{fig.B}(c)--(d) illustrate the testing accuracy and loss of the proposed joint pruning and routing method on F-CIFAR10 \& ResNet18 and F-CIFAR100 \& ResNet34, respectively, as well as the distribution of model retention rates and cumulative maximum link weights per hop across 20 clients under different network bandwidths (23~MHz, 30~MHz, and 35~MHz). The results show that, as bandwidth increases, testing loss decreases, accuracy improves, cumulative maximum link weights per hop reduce, and model retention rates rise. This indicates that higher bandwidth enhances model transmission efficiency, allowing clients to transmit more parameters under latency constraints, thereby increasing model capacity and expressiveness and ultimately improving the overall performance of the decentralized federated learning system.

Figs.~\ref{fig.FL_T}(a)--(b) and \ref{fig.FL_T}(c)--(d) illustrate the testing accuracy and loss of the proposed joint pruning and routing method on F-CIFAR10 \& ResNet18 and F-CIFAR100 \& ResNet34, respectively, as well as the distribution of model retention rates and cumulative maximum link weights per hop across 20 clients under different latency thresholds. As the latency threshold increases (1~s, 2~s, and 3~s), testing loss decreases, accuracy improves, and client model retention rates rise, while the cumulative maximum link weights per hop remain unchanged. This is because a larger latency threshold allows clients to select higher model retention rates along the given paths, retaining more parameters to enhance performance. Since the model transmission paths and bandwidth remain fixed, the cumulative maximum link weights per hop stay constant.

Figs.~\ref{fig:FL_IID}(a)--(c) compare the performance of different pruning strategies on the \textbf{F-CIFAR10 \& ResNet-18} task under \textbf{non-IID data distribution} and \textbf{IID data distribution with varying client scales ($N=20$ and $N=50$)}. The results show that the proposed \textbf{optimal pruning strategy} consistently outperforms the baseline methods in terms of both test accuracy and loss convergence. Even under highly heterogeneous data distributions and large-scale network settings, the proposed approach maintains superior robustness and scalability in heterogeneous federated learning environments.

\begin{figure}
      \centering
\begin{subfigure}{0.15\textwidth}
        \centering
        \includegraphics[width=\linewidth]{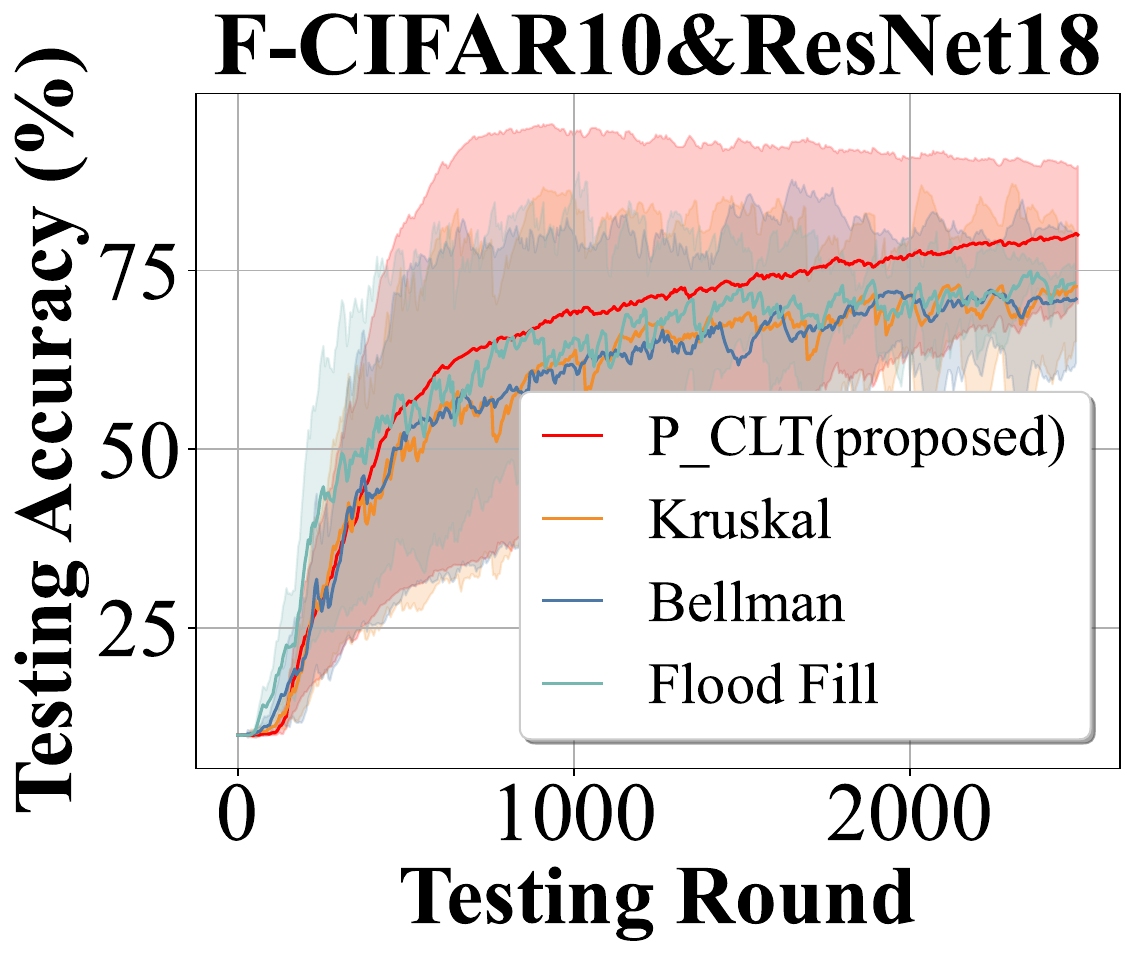}
        \caption{}
    \end{subfigure}%
    \begin{subfigure}{0.15\textwidth}
        \centering
        \includegraphics[width=\linewidth]{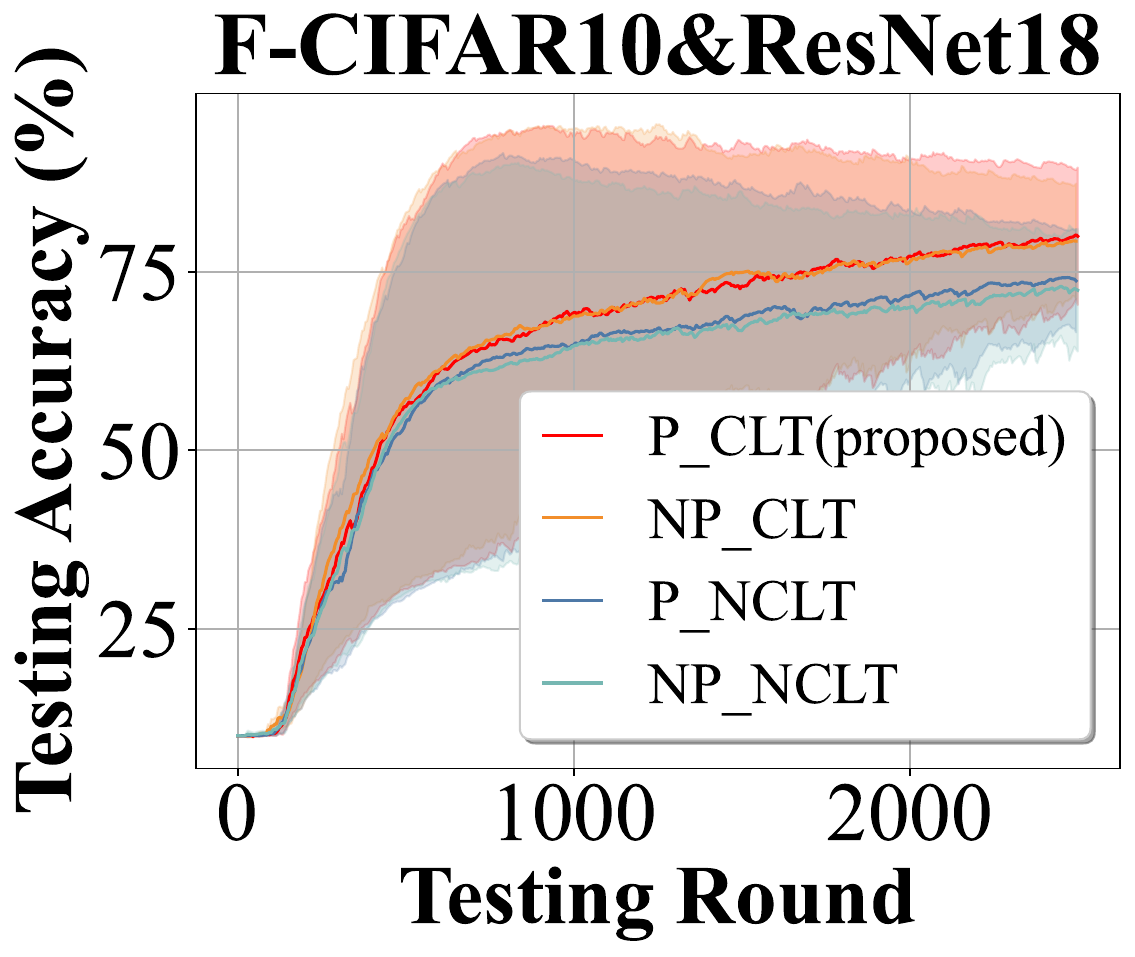}
        \caption{}
    \end{subfigure}%
    \begin{subfigure}{0.15\textwidth}
        \centering
        \includegraphics[width=\linewidth]{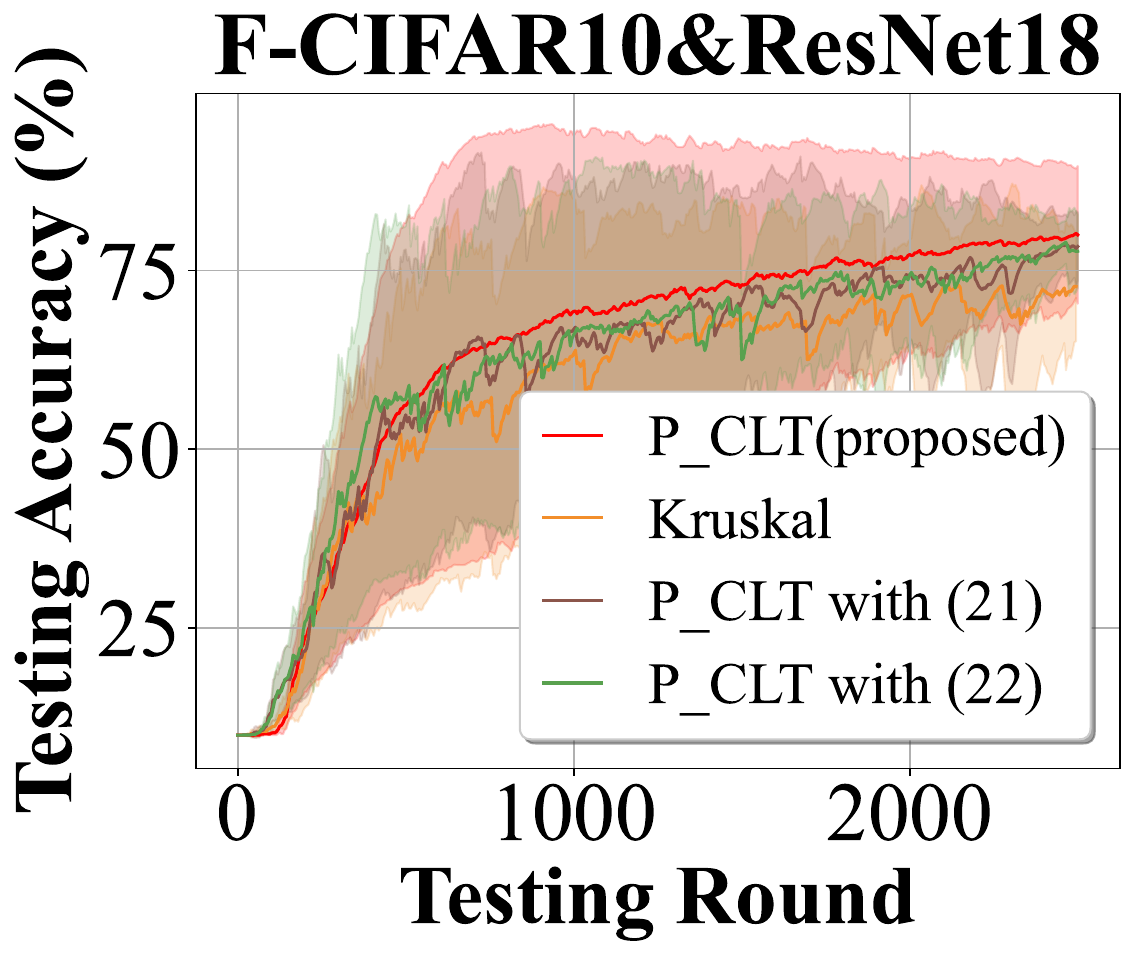}
        \caption{}
    \end{subfigure}

\begin{subfigure}[h]{0.5\textwidth}
    \includegraphics[width=\linewidth,valign=t]{"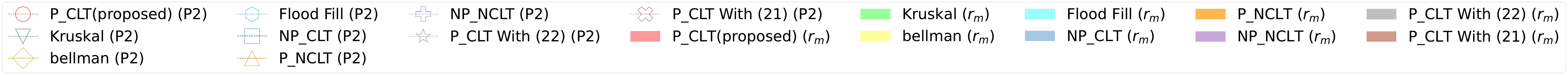"}
   
\end{subfigure}\\

\begin{subfigure}[h]{0.5\textwidth}
    \includegraphics[width=\linewidth,valign=t]{"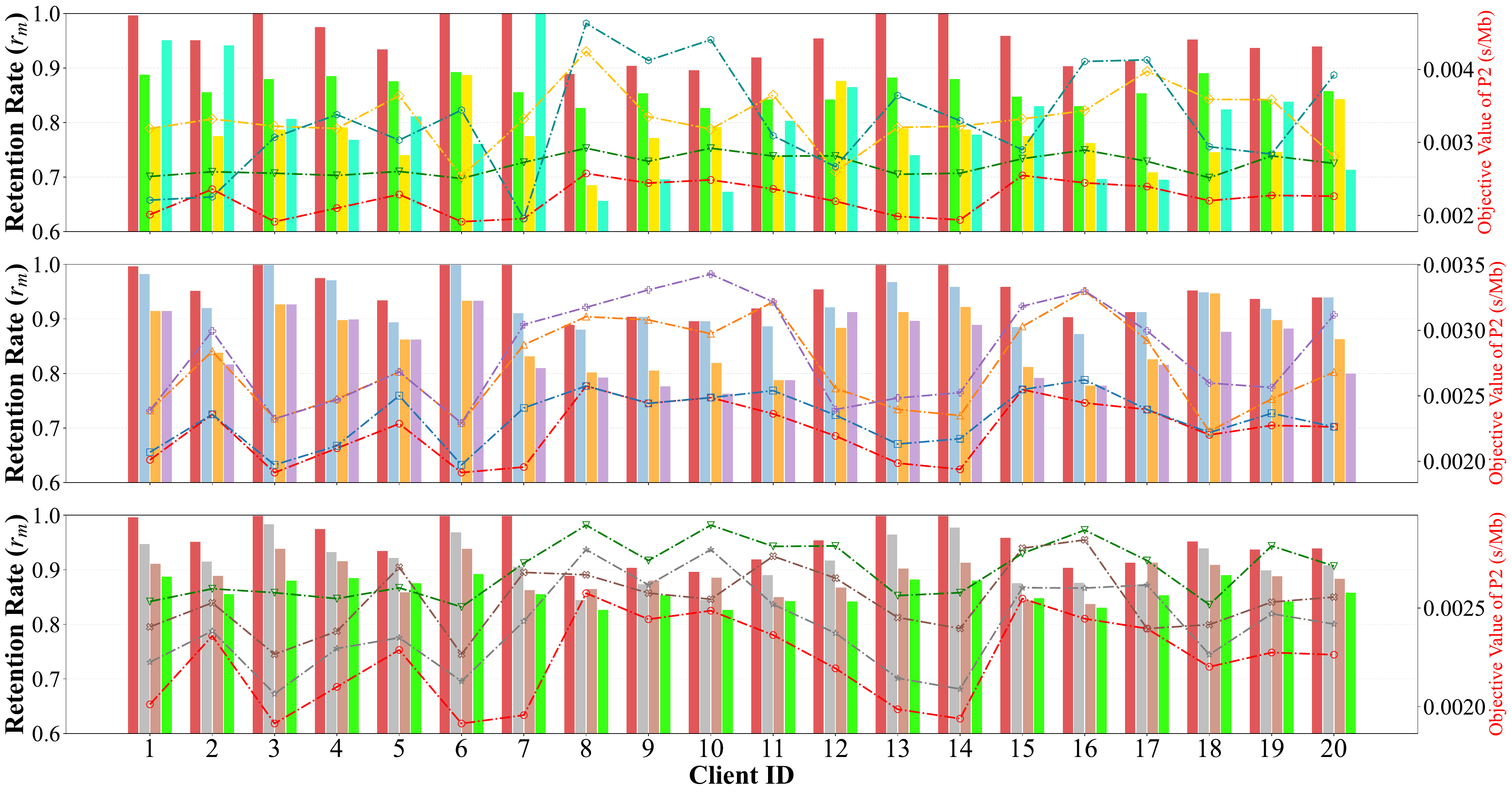"}
        \caption{}
\end{subfigure}\\

\caption{(a) Comparison of the proposed routing algorithm and traditional algorithms in testing accuracy on F-CIFAR10 \& ResNet18; (b)-(c) Comparison with benchmark methods in testing accuracy; (d) Distribution of model retention rates and cumulative maximum link weights per hop for P\_CLT and benchmark methods across 20 clients.
}\label{fig:overall_rm}
\end{figure}

\subsection{Evaluation of the Proposed Routing Algorithm}
 Figs.~\ref{fig:overall_rm}(a)--(c) illustrate the testing accuracy of various routing algorithms evaluated on the F-CIFAR10 \& ResNet-18 task. Fig.~\ref{fig:overall_rm}(d) unveil the intrinsic correlation between the model retention rate and the multi-hop path structure on the F-CIFAR10 \& ResNet-18 task: the lower the cumulative maximum link weight per hop along a multi-hop path, the more model parameters a client can retain and successfully upload.  The results indicate that the proposed P\_CLT algorithm outperforms other routing algorithms in both testing accuracy and model retention rate, achieving an improvement of approximately 8\% in testing accuracy compared with traditional methods.

This improvement is primarily attributed to the algorithm's ability to dynamically adjust routing paths while integrating model pruning strategies, minimizing the cumulative maximum link weight per hop in multi-hop communication, and thereby transmitting more model parameters under given time and bandwidth constraints. In contrast, the Kruskal algorithm enforces a tree topology that limits parallel transmissions and increases hop counts; the Bellman-Ford algorithm lacks global optimization capability, making it difficult to reduce the cumulative maximum link weight effectively; and the Flood Fill algorithm cannot mitigate bottlenecks caused by high-weight links. Moreover, other benchmark routing algorithms that rely solely on a single link addition criterion struggle to achieve globally optimal paths, and the absence of a link-threshold or node-priority mechanism may lead to suboptimal link selection, further reducing communication performance.

\begin{figure}[h]

    \centering
    \begin{subfigure}[b]{0.32\linewidth}  
        \centering
        \includegraphics[width=\linewidth]{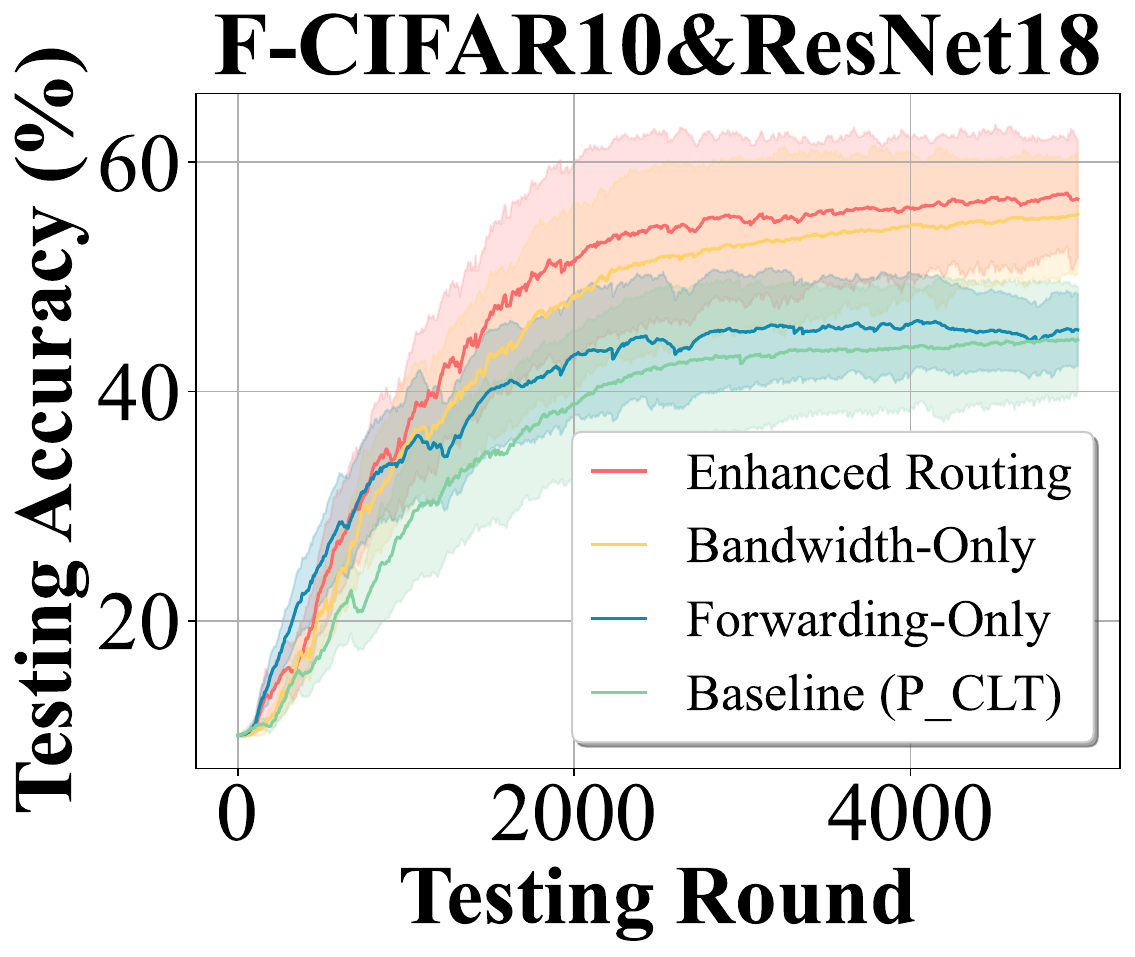}
        \caption{}
        \label{fig:sub1}
    \end{subfigure}
    \hfill
    \begin{subfigure}[b]{0.66\linewidth}
        \centering
        \includegraphics[width=\linewidth]{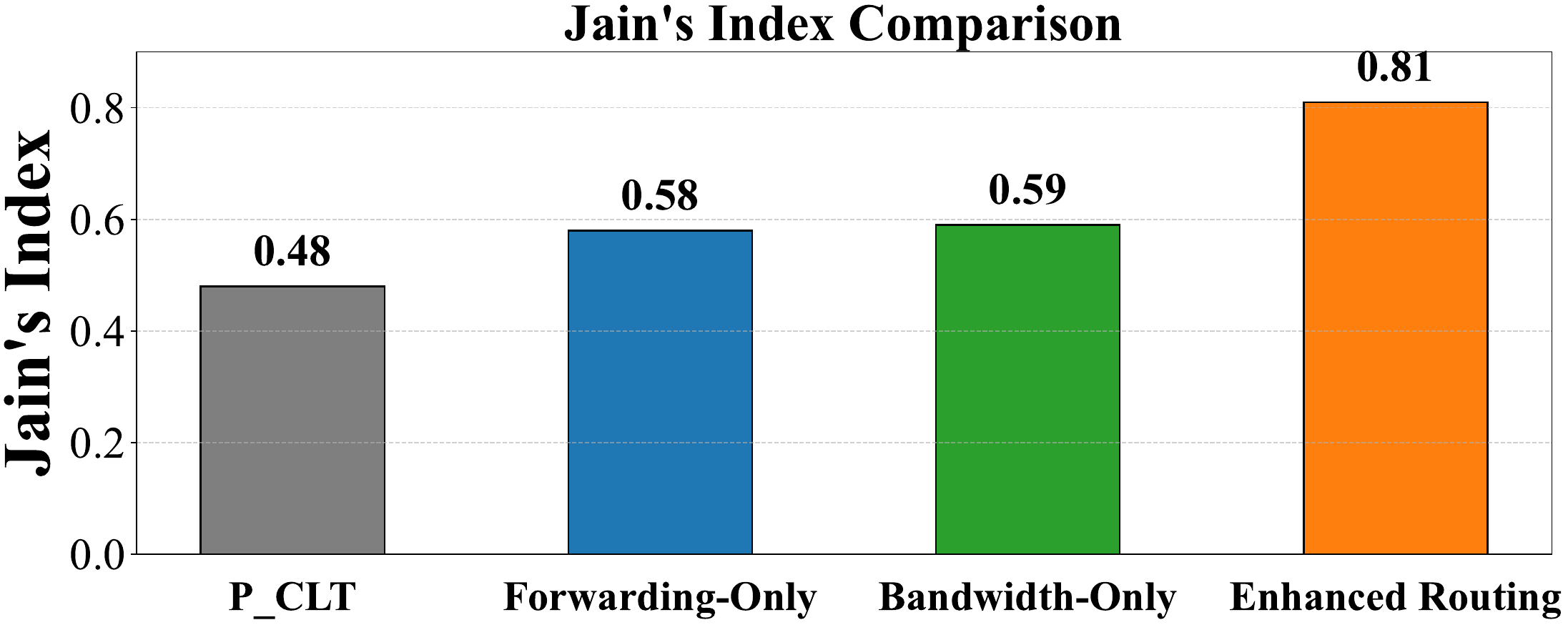}
        \caption{}
        \label{fig:sub2}
    \end{subfigure}
    \hfill

\caption{Performance comparison between the proposed enhanced routing method and the baseline under communication bottlenecks: (a) Testing accuracy on F-CIFAR10 \& ResNet-18; (b) Comparison of fairness metrics.}
    \label{fig:FL_CONF}
\end{figure}


Figs.~\ref{fig:FL_CONF}(a) and (b) illustrate the testing accuracy and Jain's fairness index for the F-CIFAR10 \& ResNet-18 task under representative communication bottlenecks. The results demonstrate that the proposed \textbf{enhanced routing} strategy consistently outperforms the baselines in both performance and fairness metrics. This improvement is primarily due to the fact that network congestion and stochastic node failures often cause transmission interruptions, which severely hinder the synchronization of model updates and lead to biased aggregation. By integrating the CAM and FPSR into the P\_CLT framework (see Section~\ref{sec:Bottlenecks}), our approach proactively alleviates congestion and dynamically bypasses failure nodes. This not only accelerates global convergence but also significantly enhances the operational fairness and overall stability of the D-FL system.

In the above plots, each solid line represents the average testing accuracy across all clients, while the shaded area reflects the variation in testing accuracy among clients. It is evident that, as training converges, the testing accuracy across clients becomes increasingly consistent, as indicated by the progressively shrinking shaded area.

\section{Conclusion and Future Work}\label{sec:conclusion}  This paper has proposed a joint model pruning and routing framework for D-FL to ensure timely model transmission and improve system convergence. We have derived the convergence upper bound of D-FL under joint routing and model pruning, and found that model retention plays a key role in enhancing convergence. Further analysis indicates that model retention is closely linked to clients' transmission paths, based on which we have designed an efficient routing optimization algorithm. Simulation results show that, compared with unpruned systems, the proposed joint routing and pruning method reduces average model transmission latency by about 27.8\% and improves testing accuracy by around 12\%. In addition, the proposed routing algorithm further increases testing accuracy by roughly 8\% compared with conventional and benchmark methods.

Future work will study deployment challenges in realistic heterogeneous scenarios, focusing on non-ideal channels and device heterogeneity. We plan to use non-cooperative game theory to design a multi-objective optimization framework balancing transmission reliability, computational cost, and system performance, aiming to improve convergence efficiency and robustness for practical distributed deployment.

\appendix
\subsection{Proof of \textbf{Lemma \ref{lemma1}}}\label{proof:lemma1}
We expand $\Vert\boldsymbol{\bar{\omega}}_{\alpha}\!\!-\!\!\boldsymbol{\omega}^*\Vert^2$, and then apply the inequality of arithmetic and geometric means, yielding
\begin{align}\label{delta_r1}\small
\Vert\boldsymbol{\bar{\omega}}_{\alpha}\!\!-\!\!\boldsymbol{\omega}^*\Vert^2
\leq&(1\!\!+\!\!\frac{1}{\tau_{\varrho}})\left\Vert \boldsymbol{\bar{\omega}}_{\alpha}\!\!-\!\!\boldsymbol{\bar{\varphi}}_{\alpha}\right\Vert ^{2}\!\!+\!\!
(1\!\!+\!\!\tau_{\varrho})\left\Vert \boldsymbol{\bar{\varphi}}_{\alpha}\!\!-\!\!\boldsymbol{\omega}^{*}\right\Vert ^{2},
\end{align} 
When the client's model transmission path is optimal, most of the model parameters are transmitted within the given time constraint, and subsequently \(\tau_{\rho} \to 0\).

Define the globally aggregated model at the $\alpha$-th training round as
\begin{align}\label{yr1}
\boldsymbol{\bar{\varphi}}_{\alpha}={\sum}_{ n \in \mathcal{V}}p_n\left(\boldsymbol{\bar{\omega}}_{\alpha\!-\!1}-\eta\nabla F_{n}(\boldsymbol{\bar{\omega}}_{\alpha\!-\!1},\xi)\right).
\end{align}
With \eqref{eq:epoch_imperfect}, \eqref{accurate_model} and \eqref{yr1}, ${\left\Vert \bar{\boldsymbol{\omega}}_{\alpha}-\boldsymbol{\bar{\varphi}}_{\alpha}\right\Vert ^{2}}$ is upper bounded by%
 \begin{subequations}\label{a}%
  \footnotesize \begin{align}	  
&\!\!\!\!\!\big\Vert\bar{\boldsymbol{\omega}}_{\alpha}\!-\!\boldsymbol{\bar{\varphi}}_{\alpha}\!\big\Vert^{2}\!\!\!\!=\!\!\Big\Vert\!{\sum}_{ n \in \mathcal{V}}\!p_{n}\!\left(\hat{\boldsymbol{\omega}}_{\alpha\!-\!1,n}\!\!-\!\!\eta\nabla F_{n}(\hat{\boldsymbol{\omega}}_{\alpha\!-\!1,n},\xi)\!\!-\!\!\bar{\boldsymbol{\omega}}_{\alpha\!-\!1}\!\!+\!\!\eta\nabla F_{n}(\boldsymbol{\bar{\omega}}_{\alpha\!-\!1},\xi)\!\right)\!\!\Big\Vert^{2}\notag\\
&\!\!\!\leq\!\!(1\!\!+\!\!\eta L)\!\Big\Vert{\sum}_{n\in\mathcal{V}}p_{n}(\hat{\boldsymbol{\omega}}_{\alpha\!-\!1,n}\!-\!\!\bar{\boldsymbol{\omega}}_{\alpha-1})\Big\Vert^{2}\!\!\\
&+\!\!(1\!\!+\!\!\frac{1}{\eta L})\!\Big\Vert\!{\sum}_{n\in\mathcal{V}}p_{n}\eta\big(\nabla F_{n}(\hat{\boldsymbol{\omega}}_{\alpha\!-\!1,n},\xi)\!\!-\!\!\nabla F_{n}(\boldsymbol{\bar{\omega}}_{\alpha\!-\!1},\xi)\big)\!\Big\Vert^{2}\label{a_bound:c}\\
&\!\!\leq\!(1\!\!+\!\!\eta L)\!\Big[\Big\Vert\!\!{\sum}_{n\in\mathcal{V}}\!p_{n}\hat{\boldsymbol{\omega}}_{\alpha\!-\!1,n}\!-\!\bar{\boldsymbol{\omega}}_{\alpha\!-\!1}\!\Big\Vert^{2}\!\!\!+\!\!\eta L\!\!{\sum}_{n\in\mathcal{V}}\!p_{n}\!\Vert\hat{\boldsymbol{\omega}}_{\alpha\!-\!1,n}\!\!-\!\!\bar{\boldsymbol{\omega}}_{\alpha\!-\!1}\Vert\!^{2}\Big]\label{a_bound:eqb}\\
&\!\!\!\leq\!\!(1\!\!+\!\!\eta L)\!\Big[\!\!\left({\sum}_{n'\in \mathcal{V}}p_{n'}^{2}+\!\!\eta L p_{\max}\right)\!{\sum}_{n\in\mathcal{V}}\!\Vert\hat{\boldsymbol{\omega}}_{\alpha\!-\!1,n}\!\!-\!\!\bar{\boldsymbol{\omega}}_{\alpha\!-\!1}\Vert\!^{2}\Big].\label{a_bound:eqc}
		\end{align}%
  \end{subequations}
Here, \eqref{a_bound:c} uses $\Vert\mathbf{a}+\mathbf{b}\Vert^{2}\leq(1+\eta L)\Vert\mathbf{a}\Vert^{2}+(1+\frac{1}{\eta L})\Vert\mathbf{b}\Vert^{2}$. \eqref{a_bound:eqb} is obtained due to the $L$-smoothness of $F_n$ and Cauchy's inequality with ${\sum}_{ n\in\mathcal{V}}p_{n}=1$. \eqref{a_bound:eqc} is based on $p_n\leq p_{\max}$ with $p_{\max}=\underset{\forall n}{\max}\; p_n$, and the fact that
$\!\big\Vert\!\sum_{n\in\mathcal{V}}\!p_{n}\hat{\boldsymbol{\omega}}_{\alpha\!-\!1,n}\!\!-\!\!\boldsymbol{\bar{\omega}}_{\alpha\!-\!1}\!\big\Vert^{2}\!\le   \sum_{n'\in \mathcal{V} }p_{n'}^{2}\sum_{n\in \mathcal{V}}\big\Vert\!\hat{\boldsymbol{\omega}}_{\alpha\!-\!1,n}\!\!-\!\!\boldsymbol{\bar{\omega}}_{\alpha\!-\!1}\!\big\Vert^{2}$.

By substituting \eqref{yr1}, $\left\Vert \boldsymbol{\bar{\varphi}}_{\alpha}-\boldsymbol{\omega}^{*}\right\Vert^2$ is upper bounded by
\begin{subequations}\label{b}\small
\begin{align}
	&\left\Vert \boldsymbol{\bar{\varphi}}_{\alpha} - \boldsymbol{\omega}^{*} \right\Vert^2
	= \Big\Vert \sum_{n \in \mathcal{V}} p_n \left( \boldsymbol{\bar{\omega}}_{\alpha-1} - \eta \nabla F_n(\boldsymbol{\bar{\omega}}_{\alpha-1}, \xi) \right) - \boldsymbol{\omega}^{*} \Big\Vert^2 \notag \\
	&= \sum_{n \in \mathcal{V}} p_n \left\Vert \boldsymbol{\bar{\omega}}_{\alpha-1} - \boldsymbol{\omega}^{*} \right\Vert^2 + \Big\Vert \eta \sum_{n \in \mathcal{V}} p_n \nabla F_n(\boldsymbol{\bar{\omega}}_{\alpha-1}, \xi) \Big\Vert^2 \!\\
    &- \!2 \Big\langle \sum_{n \in \mathcal{V}} p_n (\boldsymbol{\bar{\omega}}_{\alpha-1} \!- \!\boldsymbol{\omega}^{*}), \eta \sum_{n \in \mathcal{V}} p_n \nabla F_n(\boldsymbol{\bar{\omega}}_{\alpha-1}, \xi) \Big\rangle \label{b_bound:eqb} \\
	&\leq \left( 1 - 2\mu\eta + L^2\eta^2 \right) \sum_{n \in \mathcal{V}} p_n \left\Vert \boldsymbol{\bar{\omega}}_{\alpha-1} - \boldsymbol{\omega}^{*} \right\Vert^2 \label{b_bound:eqc} \\
	&= \left( 1 - 2\mu\eta + L^2\eta^2 \right) \left\Vert \boldsymbol{\bar{\omega}}_{\alpha-1} - \boldsymbol{\omega}^{*} \right\Vert^2, \label{b_bound:eqd}
\end{align}
\end{subequations}
where 
\(\eqref{b_bound:eqb}\) uses \(\left\Vert a - b \right\Vert^2 = \left\Vert a \right\Vert^2 + \left\Vert b \right\Vert^2 - 2 \langle a, b \rangle\); \eqref{b_bound:eqc} is due to the $L$-smoothness of $F(\cdot)$, i.e., $\left\Vert\sum_{n \in \mathcal{V}} p_n\nabla F_{n}(\boldsymbol{\bar{\omega}}_{\alpha-1},\xi)\right\Vert^{2} \leq L\sum_{n \in \mathcal{V}} p_n\left\Vert  \boldsymbol{\bar{\omega}}_{\alpha-1} - \boldsymbol{\omega}^{*}\right\Vert^{2}$, and the $\mu$-strong convexity, i.e., $\!\!\!\!\!\left\langle  \boldsymbol{\bar{\omega}}_{\alpha-1}-\boldsymbol{\omega}^{*}, \sum_{n \in \mathcal{V}} p_n \nabla F_{n}(\boldsymbol{\bar{\omega}}_{\alpha-1},\xi)\right\rangle \geq \mu \sum_{n \in \mathcal{V}} p_n \left\Vert \boldsymbol{\bar{\omega}}_{\alpha-1} - \boldsymbol{\omega}^{*}\right\Vert^2$; \(\eqref{b_bound:eqd}\) is based on \(\sum_{n \in \mathcal{V}} p_n = 1\).
Substituting \eqref{a} and \eqref{b} into \eqref{delta_r1}, we obtain (\ref{delta_1}).

\subsection{Proof of \textbf{Lemma \ref{lemma2}}}\label{proof_lemma2}

Let $\mathcal{V}_k$ denote the set of clients for which the \( k \)-th model parameter of $\boldsymbol{\hat{\omega}}_{\alpha ,n}$ is transmitted. $\mathcal{V}_{k,n}=\mathcal{V}_{k}\cup\{n\}, n \in \mathcal{V}$. The upper bound of $\sum_{k=1}^{K}\sum_{m\in\mathcal{V}}\lambda_{\alpha-1,(m,n),k}^{2}$ is given by
\begin{subequations} \footnotesize 
    \begin{align}
&\!\sum_{k=1}^{K}\!\sum_{m\in\mathcal{V}}\!\lambda_{\alpha-1,(m,n),k}^{2}\!=\!\sum_{k=1}^{K}\!\Big(\big(\!\!\sum_{m\in\mathcal{V}_{k,n}}\!\!\!\!p_{m}^{2}\!\big)\Big(\frac{\sum_{n'\in\mathcal{V}-\mathcal{V}_{k,n}}p_{n'}}{\sum_{m'\in\mathcal{V}_{k,n}}p_{m'}}\Big)^{2}\!\!+\!\!\!\!\!
\sum_{l\in\mathcal{V}-\mathcal{V}_{k,n}}\!\!\!p_{l}^{2}\Big)
    \label{eq:lambda_bound_c}\\&\!\!\leq\! \sum_{k\!=\!1}^{K}\Big(\big({\sum_{m\in\mathcal{V}\!\!-\!\mathcal{V}_{k,n}}\!\!p_{m}}\big)^{2}\!\!+\!\!\!\sum_{l\!\in\!\mathcal{V}\!-\!\!\mathcal{V}_{k,n}}\!\!p_{l}^{2}\Big)\label{eq:lambda_bound_d}\\
    &\leq \sum_{k=1}^{K}\big(1+\sum_{m\in\mathcal{V}-\mathcal{V}_{k}}1\big)\sum_{l\in\mathcal{V}-\mathcal{V}_{k,n}}p_{l}^{2}\label{eq:lambda_bound_e}\\
    &\!= \!\!\sum_{k\!=\!1}^{K}\big(1\!\!+\!\sum_{m\in\mathcal{V}}(1\!\!-\!{e}_{\alpha,(m,n),k}))\big)\sum_{l\in\mathcal{V}}(1\!\!-\!{e}_{\alpha,(l,n),k})\, p_{l}^{2}\label{eq:lambda_bound_f}\\
    &\!\leq\! \sum_{k=1}^{K}\sum_{l\in\mathcal{V}}(1\!\!-\!{e}_{\alpha,(l,n),k})\, p_{l}^{2}\!+\!\Big(\sum_{k=1}^{K}\sum_{m\in\mathcal{V}}(1\!\!-\!{e}_{\alpha,(m,n),k})\Big)^2\label{eq:lambda_bound_h}  \\
    &\!=\!\sum_{l\in\mathcal{V}\setminus\{n\}} (K\!-\!\lfloor r_l K\rfloor) p_l^2 \!+\!(\sum_{l\in\mathcal{V}\setminus\{n\}} (K\!-\!\lfloor r_l K\rfloor) )^2,\label{eq:lambda_bound_j}
\end{align}
\end{subequations}
where 
\eqref{eq:lambda_bound_c} expands $\lambda_{\alpha-1,(m,n),k}^{2}$; 
\eqref{eq:lambda_bound_d} is based on $\big(\sum_{m\in\mathcal{V}_{k,n}}p_{m}^{2}\big)\big/\big(\sum_{m'\in\mathcal{V}_{k,n}}p_{m'}\big)^2\leq1$;
\eqref{eq:lambda_bound_e} is due to $\big(\sum_{m\in\mathcal{V}-\mathcal{V}_{k}}p_{m}\big)^{2}\leq\left(\sum_{m\in\mathcal{V}-\mathcal{V}_{k}}p_{m}^{2}\right)\left(\sum_{m\in\mathcal{V}-\mathcal{V}_{k}}1\right)$; 
\eqref{eq:lambda_bound_f} is based on $\sum_{m \in \mathcal{V}_{k}}1 +\sum_{m \in \mathcal{V}-\mathcal{V}_{k}}1 =\sum_{m \in \mathcal{V}}1$, with $\sum_{m \in \mathcal{V}_{k}}1=\sum_{m \in \mathcal{V}}{e}_{\alpha,(m,n),k}$; 
\eqref{eq:lambda_bound_h} is due to the fact that $0\leq p_l \leq 1$ and Jensen's inequality;
\eqref{eq:lambda_bound_j} stems from \eqref{eq:prune_indicator}.

\subsection{Derivation of Sorting Complexity} \label{appendix:complexity_proof}

This section provides the detailed derivation of the sorting complexity estimation for all nodes.
Let the number of nodes be \(N\), the number of edges be \(E\), and the degree of node \(i\) be \(d_i\). The sorting complexity of node \(i\)'s adjacent links is \(\mathcal{O}(d_i \log d_i)\), so the total sorting complexity for all nodes is $\sum_{i=1}^N d_i \log d_i$.
Consider \(f(x) = x \log x\), which is convex for \(x > 0\). By Jensen’s inequality, we have $\frac{1}{N} \sum_{i=1}^N f(d_i) \geq f\left( \frac{1}{N} \sum_{i=1}^N d_i \right).$ Multiplying both sides by \(N\) yields $\sum_{i=1}^N d_i \log d_i \geq E \log \frac{E}{N}$, where \(E = \sum_{i=1}^N d_i\). This provides a lower bound on the total complexity, which is widely used as a good upper-bound estimate of the sorting cost in practice. Thus, the total complexity is $\mathcal{O}\left(  E \log \frac{E}{N}\right).$

{\scriptsize
\bibliographystyle{IEEEtran}
\bibliography{citations}

\begin{IEEEbiography}[{\includegraphics[width=1in,height=1.25in,clip,keepaspectratio]{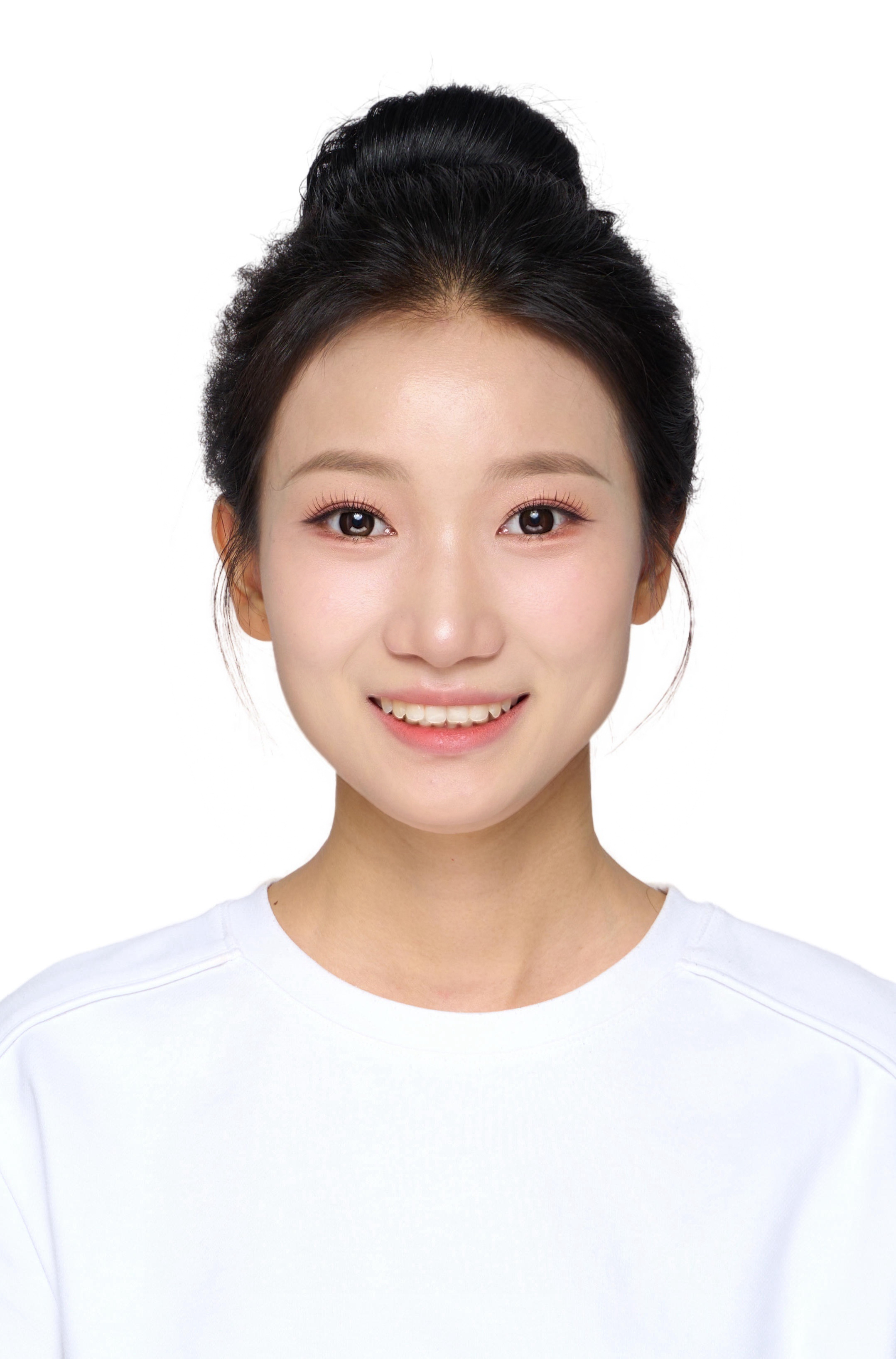}}]{Xiaoyu He} is currently pursuing the Ph.D. degree in Information and Communication Engineering at Beijing University of Posts and Telecommunications (BUPT), Beijing, China. Her research interests include decentralized federated learning (D-FL), adaptive model pruning, reinforcement learning, non-cooperative game theory, personalization, multi-hop routing optimization, and resource-constrained wireless networks.

\end{IEEEbiography}

\begin{IEEEbiography}[{\includegraphics[width=1in,height=1.25in,clip,keepaspectratio]{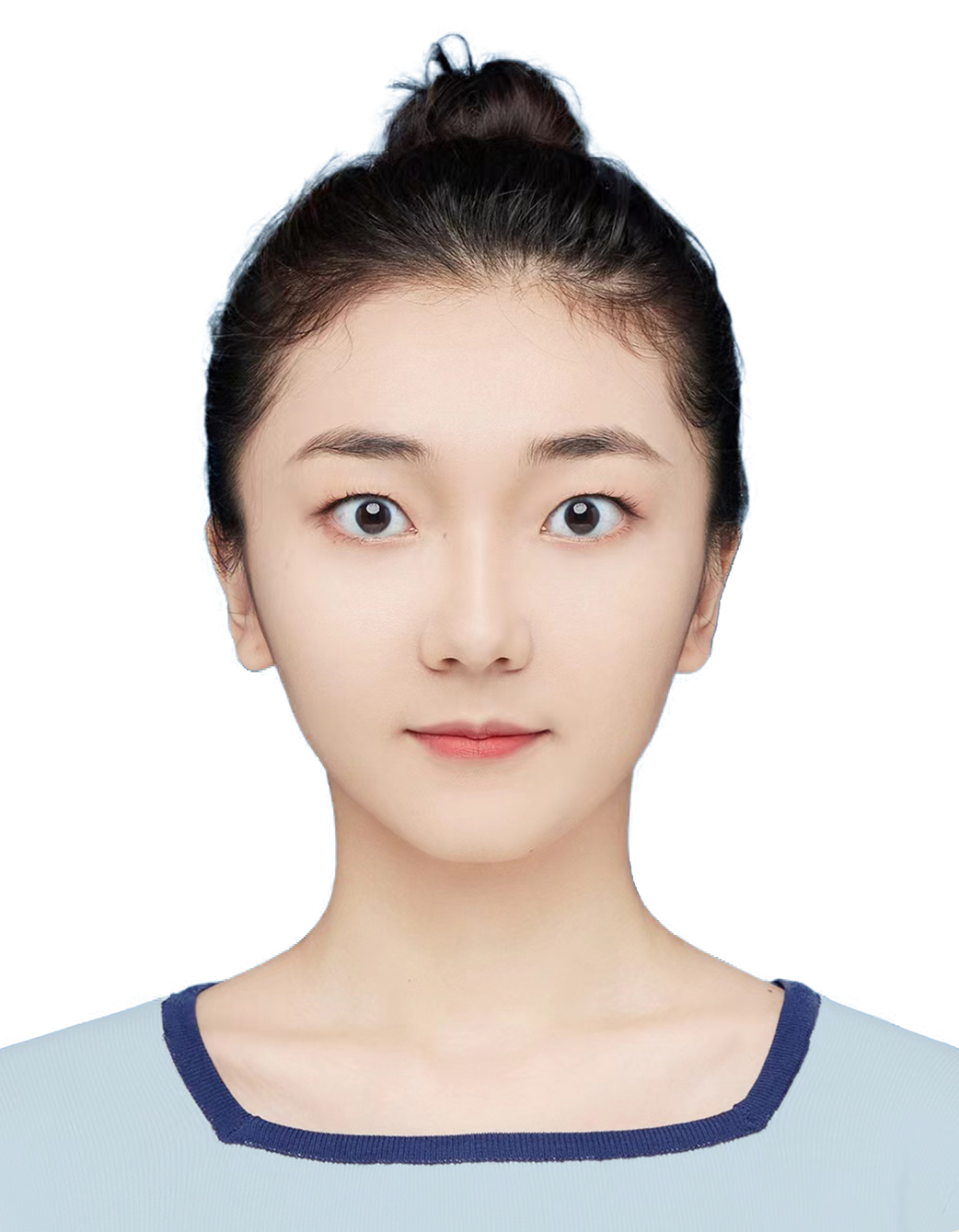}}]{Weicai Li} 
 received the B.E. and Ph.D. degrees from the School of Information and Communication Engineering, Beijing University of Posts and Telecommunications (BUPT), China, in 2020 and 2025, respectively. From December 2022 to December 2023, she was a Visiting Student at the University of Technology Sydney, Australia. She is currently with the Center for Target Cognition Information Processing Science and Technology and the Key Laboratory of Modern Measurement and Control Technology, Ministry of Education, both at Beijing Information Science and Technology University, Beijing, China (e-mail: liweicai@bupt.edu.cn). Her research interests include integrated sensing and communications, wireless federated learning, distributed computing, and privacy preservation.
\end{IEEEbiography}

\begin{IEEEbiography}[{\includegraphics[width=1in,height=1.25in,clip,keepaspectratio]{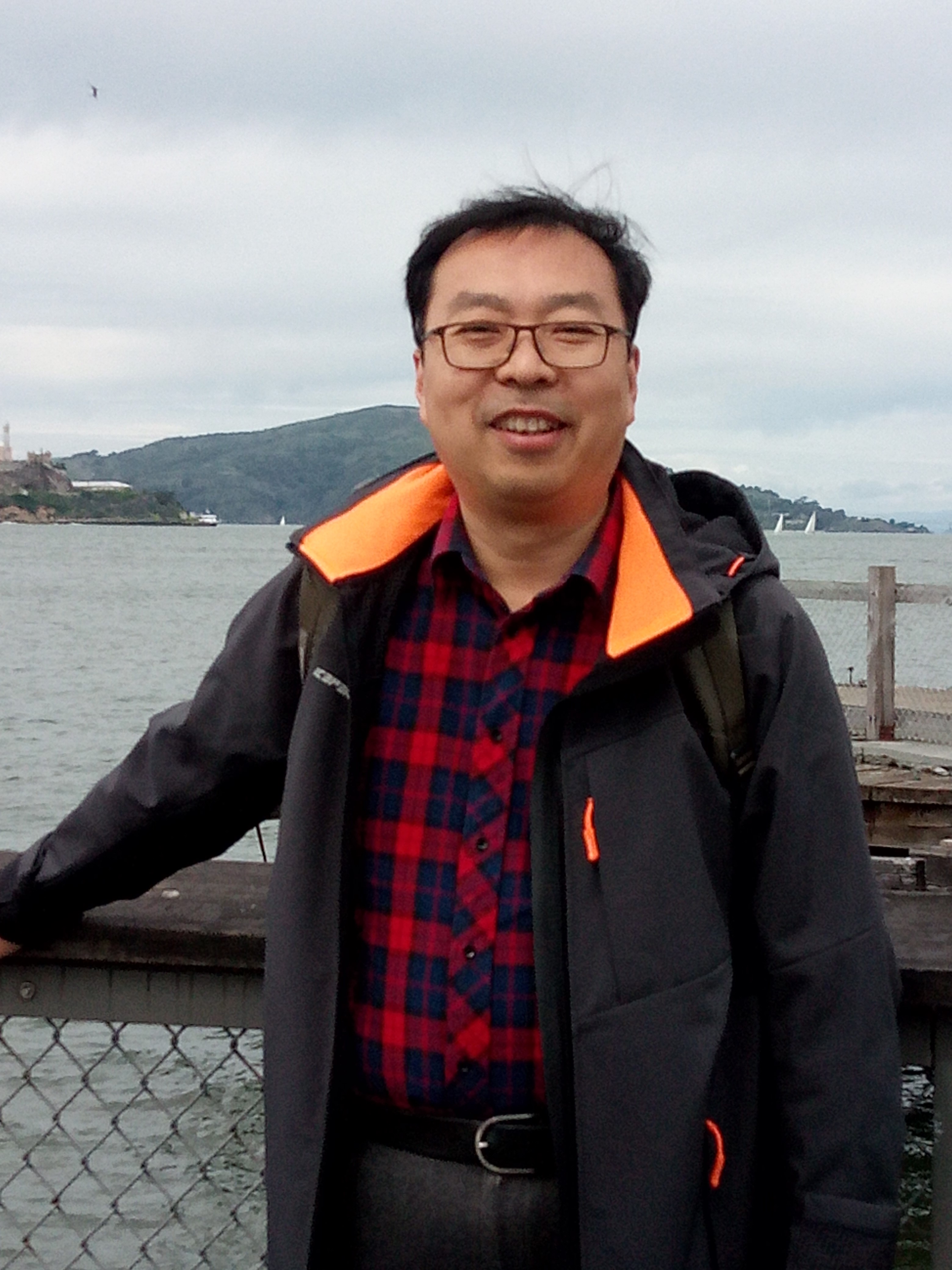}}]{Tiejun Lv} (Senior Member, IEEE) received the M.S. and Ph.D. degrees in electronic engineering from the University of Electronic Science and Technology of China (UESTC), Chengdu, China, in 1997 and 2000, respectively. From January 2001 to January 2003, he was a Post-Doctoral Fellow at Tsinghua University, Beijing, China. In 2005, he was promoted to a Full Professor at the School of Information and Communication Engineering, Beijing University of Posts and Telecommunications (BUPT). From September 2008 to March 2009, he was a Visiting Professor with the Department of Electrical Engineering, Stanford University, Stanford, CA, USA. He is currently the author of four books, one book chapter, and more than 170 published journal articles and 200 conference papers on the physical layer of wireless mobile communications. His current research interests include signal processing, communications theory, and networking. He was a recipient of the Program for New Century Excellent Talents in University Award from the Ministry of Education, China, in 2006. He received the Nature Science Award from the Ministry of Education of China for the hierarchical cooperative communication theory and technologies in 2015 and Shaanxi Higher Education Institutions Outstanding Scientific Research Achievement Award in 2025.

\end{IEEEbiography}

\begin{IEEEbiography}[{\includegraphics[width=1in,height=1.25in,clip,keepaspectratio]{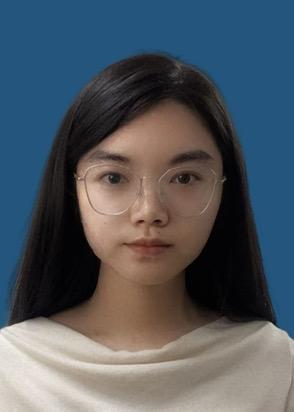}}]{Xi Yu}(Graduate Student Member, IEEE) 
received the B.E. degree in communication engineering from Beijing University of Posts and Telecommunications (BUPT), China, in 2020. She is pursuing her Ph.D. with the School of Information and Communication Engineering at BUPT. Her research interests include multi-task semantic communication and privacy-preserving techniques.

\end{IEEEbiography}


}

\end{document}